\documentclass[11pt]{article}

\usepackage[a4paper,margin=3cm]{geometry}
\usepackage[T1]{fontenc}
\usepackage[utf8]{inputenc}
\usepackage{amsmath,amssymb,amsthm}
\usepackage{mathtools}

\usepackage{dsfont}
\usepackage{microtype}
\usepackage{enumerate}
\usepackage[dvipsnames]{xcolor}
\usepackage{float}
\usepackage[colorlinks=true,citecolor=blue]{hyperref}
\usepackage{bm}
\usepackage{csquotes}
\MakeOuterQuote{"}
\usepackage[affil-it]{authblk}

\usepackage{orcidlink}
\usepackage{algorithm}
\usepackage{algpseudocode}

\usepackage{mathrsfs}
\usepackage{booktabs}
\usepackage{siunitx}
\usepackage[authoryear]{natbib}
\setcitestyle{authoryear,round}
\newcommand{\R}{\mathbb{R}}
\newcommand{\N}{\mathbb{N}}
\newcommand{\suppX}{\mathbb{X}}
\newcommand{\paramspace}{\bm{\Theta}}
\newcommand{\reparamspace}{\bm{\Upsilon}}
\renewcommand{\P}{\mathsf{P}}
\newcommand{\E}{\mathsf{E}}
\newcommand{\Var}{\mathsf{Var}}
\newcommand{\lp}{\left(}
\newcommand{\rp}{\right)}
\newcommand{\lc}{\left[}
\newcommand{\rc}{\right]}
\newcommand{\lacc}{\left\{}
\newcommand{\racc}{\right\}}
\newcommand{\labs}{\left|}
\newcommand{\rabs}{\right|}
\newcommand{\data}{\mathcal{D}_n}

\newcommand{\I}{\mathbb{I}}

\newcommand{\cG}{\mathcal{G}}
\newcommand{\cH}{\mathcal{H}}
\DeclareMathOperator{\lin}{lin}
\DeclareMathOperator{\conv}{conv}

\newcommand{\RCData}{\mathfrak{R}_{n}}
\newcommand{\eps}{\varepsilon}
\newcommand{\eqlaw}{\stackrel{\mathcal{L}}{=}}
\newcommand{\cZ}{\mathbb{Z}}
\newcommand{\cA}{\mathcal{A}}

\DeclareMathOperator{\clamp}{clamp}
\newcommand{\outputspace}{\lin_T^{c}(\cH)|_{\reparamspace}}
\newcommand{\event}{\mathcal{E}}
\newcommand{\borel}{\mathcal{B}}
\newcommand{\hF}{\widehat{\bm{F}}}
\newcommand{\diff}{\mathrm{d}}
\newcommand{\ee}{\mathrm{e}}
\newcommand{\X}{\bm{X}}

\newcommand{\x}{\bm{x}}
\newcommand{\bF}{\overline{F}}
\newcommand{\bH}{\overline{H}}
\newcommand{\RV}{\mathrm{RV}}
\newcommand{\btheta}{\bm{\theta}}
\newcommand{\hR}{\widehat{R}}
\newcommand{\hu}{\widehat{u}}

\newcommand{\F}{\bm{F}}

\newcommand{\G}{\bm{G}}
\newcommand{\h}{\bm{h}}
\newcommand{\clip}{c}
\newcommand{\tr}{\intercal}
\newcommand{\VC}{\mathcal{V}}
\DeclareMathOperator{\Be}{Be}
\DeclareMathOperator{\Bin}{Bin}
\newcommand{\cst}{\mathscr{C}}

\newcommand{\Pdata}{\widehat{P}_{n,(\X,Y)}}
\newcommand{\Ptrue}{P_{(\X,Y)}}
\newcommand{\spaceZ}{\mathbb{Z}}
\newcommand{\dst}{\mathscr{D}}
\newcommand{\kst}{\mathscr{K}}
\newcommand{\ist}{\mathscr{I}}
\renewcommand{\eta}{\tau}
\newcommand{\bphi}{\bm\phi}

\theoremstyle{plain}
\newtheorem{theorem}{Theorem}

\newtheorem{lemma}{Lemma}
\newtheorem{assumption}{Assumption}
\theoremstyle{remark}

\title{Gradient boosting for extremes: \\ sampling theory and application to insurance}

\author[1]{Stéphane Lhaut \orcidlink{0000-0003-2449-6218} \thanks{\href{mailto:stephane.lhaut@ensae.fr}{stephane.lhaut@ensae.fr}, corresponding author}}
\author[1]{Olivier Lopez \orcidlink{0000-0002-6158-4113} \thanks{\href{mailto:olivier.lopez@ensae.fr}{olivier.lopez@ensae.fr}}}
\affil[1]{CREST, CNRS, Ecole polytechnique, Groupe ENSAE-ENSAI, ENSAE Paris, Institut Polytechnique de Paris, Palaiseau, France}

\date{\today}

\allowdisplaybreaks

\begin{document}
	
\maketitle

\begin{abstract}
	We develop a statistical learning theory for gradient boosting applied to the estimation of covariate-dependent Generalized Pareto (GP) distributions in the context of Peaks-over-Threshold modeling. After an orthogonal reparametrization of the GP likelihood that diagonalizes its Fisher information matrix, we cast the estimation problem within the Empirical Risk Minimization (ERM) framework and derive non-asymptotic error bounds for the boosting estimator. Our analysis accounts for three distinct sources of error in the process: statistical fluctuations, the approximation bias inherent to the asymptotic nature of the GP model---controlled under second-order regular variation---and the approximation error associated with the finite number of boosting iterates, making explicit the resulting bias–variance trade-off. We illustrate the practical benefits of the reparametrization through simulations, showing that it significantly reduces gradient correlation during training and improves convergence stability. The methodology is applied to a medical malpractice insurance dataset from the Texas Department of Insurance, comprising over 18 000 closed claims. The gradient boosting approach yields a good fit for the tail of settlement cost distributions and reveals that the number of days to settlement is the dominant predictor of tail heaviness, consistent with earlier findings in the reserving literature.
\end{abstract}

\section{Introduction}
\label{sec:intro}

Heavy-tail distributions appear naturally in many domains of statistical applications, such as insurance, finance, or natural hazards modeling. Evidence of power law behavior can be found in cyber risk when it comes to evaluating the magnitude of a data breach \citep{maillart2010heavy, farkas2021cyber}, market returns \citep{borak2011models}, and extreme precipitation \citep{gimeno2022extreme}. Numerous examples can be found in \citep{beirlant2006statistics, embrechts2013modelling}.

Extreme Value Theory (EVT) is the field of statistics devoted to the analysis of the tail behavior of probability distributions \citep{resnick1987extreme, resnick2007heavy}. The Generalized Pareto (GP) distribution plays a key role in this theory. Indeed, for many probability distributions, including the ones with power law behavior, it can be used as an (asymptotic) model to approximate their upper tail, in view of the Pickands–Balkema–de Haan theorem \citep{pickands1975statistical, balkema1974residual}.

In the heavy-tail case, such probability distributions are those whose survival functions are power laws, up to a slowly varying function \citep[Section 2.3]{beirlant2006statistics}. The tail index appearing in these power laws is the central quantity for describing the heaviness of the tail and the risk associated with the quantity of interest. Many procedures exist to estimate this parameter from data, including the Hill estimator \citep{hill1975simple} or the maximum likelihood method \citep[Chapter 5]{beirlant2006statistics}. Here we are interested in estimating this tail index conditionally on input information, for which many approaches have been considered in the literature.

\paragraph*{Regression for tail risk analysis.}
Standard regression approaches in insurance are mainly based on the Exponential Dispersion family, the Generalized Linear Model, and their extensions \citep{denuit2019effective}. The main goal of these models is to capture the conditional mean behavior of the risk of interest. However, in the presence of heavy tails, these constructions may lack the flexibility to correctly capture the tail behavior of the loss of interest. As such, fitting a specific tail model based on the GP distribution \citep{davison1990models} may help improve risk quantification and reserving.

Many approaches have been considered in the literature, including local polynomials \citep{beirlant2004local}, generalized additive models \citep{chavez2005generalized, chavez2016gam}, regression trees \citep{farkas2024gptree}, random forests \citep{gnecco2024ERF}, gradient boosting trees \citep{velthoen2023gradient}, and neural networks \citep{pasche2024neural}. Each method has its own merits and drawbacks depending on the application. On the one hand, algorithmically lighter methods like generalized additive models or trees have the benefit of being easier to interpret but may lack the flexibility to correctly capture complex relationships with a high number of covariates. On the other hand, "machine learning" approaches like gradient boosting or neural-based methods are flexible but lack interpretability. These issues are discussed specifically for the insurance context in \citet{maillart2021}.

In this work, we focus on gradient boosting methods \citep{mason1999boosting, mason2000functional, friedman2001greedy} as we aim for a flexible approach able to deal with a large number of covariates of mixed types. In the context of extremes, such algorithms were applied for the first time in \citet{velthoen2023gradient}, with trees as base learners. However, no statistical or algorithmic theory was developed. Here, after an orthogonal reparameterization \citep{cox1987parameter} of the GP likelihood, we discuss the statistical guarantees of boosting approaches within the Empirical Risk Minimization (ERM) paradigm \citep{vapnik1991principles}. Note that this paradigm focuses solely on the statistical part of the error and is not concerned with any algorithmic consideration. For this theory, we do not need to use trees as basis functions for the boosting procedure, but merely to have control over the complexity of the function class. In practice, we rely on our re-parametrized implementation of the algorithm developed in \citet{velthoen2023gradient} using trees as base learners.

\paragraph*{Outline.}
Section \ref{sec:framework} introduces the Peaks-over-Threshold (POT) method used to fit the GP distribution parameters, together with the gradient boosting approach for fitting these parameters as functions of inputs. Section \ref{sec:theory} describes the statistical guarantees we obtain on ERM solutions of boosting algorithms in this context, taking into account the different sources of error: statistical fluctuations, bias due to the asymptotic nature of the GP model, and approximation error. Section \ref{sec:numerical} presents numerical results: Section \ref{subsec:simus} illustrates the advantages of the reparameterization, and Section \ref{subsec:application} contains the application to medical malpractice data in Texas. Section \ref{sec:ccl} concludes the paper. Appendix \ref{app:notation} provides the notation we use throughout the paper, while Appendices \ref{app:auxiliary} and \ref{app:proofs} contain auxiliary results and the proofs of the main results. Appendix \ref{app:figures} contains additional figures.

\section{Gradient boosting for extremes}
\label{sec:framework}

Our main interest lies in describing possibly very larges values of an output $Y$ with values in $\R_+$ based on an input $\X$ with values in $\suppX \subset \R^d$ for some $d \in \N_0$. We focus on heavy-tailed responses $Y$ with a survival function of power-law type at infinity.

\begin{assumption}[Regularly varying tail]
\label{ass:tail-of-Y}
For any $\x \in \suppX$, there exists $\ell(\cdot \mid \x) \in \RV_0$ where $\RV_0$ is the set of slow-varying functions (see precise definition in Appendix \ref{app:notation}) and $\xi_0 : \suppX \to \R_+$ such that 
\[
	\bF_{Y \mid \X = \x}(y) = \ell(y \mid \x) y^{-1/\xi_0(\x)}, \qquad \forall y > 0.
\]
\end{assumption}

Assumption \ref{ass:tail-of-Y} is equivalent to $\bF_{Y \mid \X = \x} \in \RV_{-1/\xi_0(\x)}$ and implies that when $y \to \infty$, the conditional survival probability of the random $Y$ exceeding $y$ given $\X = \x$ behaves like a power function. If the tail index $\xi_0(\x) > 0$ is large, the tail decay of $Y$ given $\X = \x$ is very slow and very heavy realizations of $Y$ may occur. For example, the case $\xi_0(\x) > 1$ is of special interest as the expectation $\E[Y \mid \X = \x]$ is not even finite for such values.

Assumption \ref{ass:tail-of-Y} is equivalent to the attraction of the (conditional) excess of $Y$ given $\X = \x$ above a large threshold $u(\x)$ to a GP distribution as $u(\x) \to \infty$. This reformulation will drive our estimation procedure. See Theorem 1.2.5 in \citet{deHaanF2006EVT} for the precise statement.

\begin{theorem}[Pickands--Balkema--de Haan \citep{pickands1975statistical}]
\label{thm:P-B-dH}
Under Assumption \ref{ass:tail-of-Y}, there exists a function $(u, \x) \in \R_+ \times \suppX \mapsto \sigma_0(u,\x)$ such that for any $\x \in \suppX$,
\begin{equation}
\label{eq:P-B-dH}
    \lim_{u(\x) \to \infty} \sup_{z>0} 
    \labs
        \overline{F}_{u(\x)} (z \mid \x) - \overline{H} (z ; (\sigma_0(u(\x), \x), \xi_0(\x) ))
    \rabs = 0
\end{equation}
where for $z,u > 0$ and $\x \in \suppX$,
\[
	\bF_u(z \mid \x) = \P \lp Y - u > z \mid Y > u, \X = \x \rp
\]
and for $\btheta = (\sigma, \xi) \in \paramspace \subseteq \R_+^2$,
\begin{equation}
\label{eq:GPD-survival}
	\bH \lp z ; \btheta \rp = \lp 1 + \xi \frac{z}{\sigma} \rp^{-1/\xi}.
\end{equation}
Furthermore, if such a function $\sigma_0$ exists, then \eqref{eq:P-B-dH} also holds with $\sigma_0(u,\x) = u \cdot \xi(\x)$.
\end{theorem}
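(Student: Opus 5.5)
We take $\sigma_0(u,\x) = u\,\xi_0(\x)$ from the outset, so that the last sentence of the statement is part of the construction. The proof is then a uniform-convergence argument for regularly varying survival functions. Fix $\x \in \suppX$ and write $\xi = \xi_0(\x) > 0$, $\ell = \ell(\cdot \mid \x)$. By Assumption~\ref{ass:tail-of-Y}, for $z > 0$,
\[
\bF_u(z \mid \x) = \frac{\bF_{Y\mid\X=\x}(u+z)}{\bF_{Y\mid\X=\x}(u)} = \frac{\ell(u(1+z/u))}{\ell(u)}\,\lp 1 + \frac{z}{u}\rp^{-1/\xi},
\]
while \eqref{eq:GPD-survival} with $\sigma = u\xi$ gives $\bH(z;(u\xi,\xi)) = (1+z/u)^{-1/\xi}$. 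Substituting $t = 1 + z/u \in (1,\infty)$, the quantity in \eqref{eq:P-B-dH} equals
\[
\sup_{t>1} \labs \frac{\ell(ut)}{\ell(u)} - 1 \rabs t^{-1/\xi}.
\]
It therefore suffices to show that this supremum tends to $0$ as $u \to \infty$.

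Given $\eps > 0$, we choose $T > 1$ such that $T^{-1/\xi} < \eps$ and split the range of $t$ into $(1,T]$ and $(T,\infty)$.

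On the compact part $(1,T]$, the Uniform Convergence Theorem for slowly varying functions gives $\sup_{t\in[1,T]} |\ell(ut)/\ell(u) - 1| \to 0$. Since $t^{-1/\xi} \le 1$, this piece is at most $\eps$ for $u$ large.

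On the tail part $t > T$, we avoid using $\ell$ directly and instead use monotonicity of the survival function. Because $\bF_{Y\mid\X=\x}$ is nonincreasing,
\[
0 \le \bF_u(z\mid\x) \le \bF_u(u(T-1)\mid\x) = \frac{\ell(uT)}{\ell(u)}\,T^{-1/\xi} \longrightarrow T^{-1/\xi} < \eps .
\]
We also have $0 \le \bH \le T^{-1/\xi} < \eps$ on this range. Hence the difference there is at most $2\eps$ for $u$ large.

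Combining the two pieces gives $\limsup_{u\to\infty}$ of the supremum at most $3\eps$. Since $\eps$ is arbitrary, \eqref{eq:P-B-dH} follows. The existence claim in the statement is then immediate, since $\sigma_0(u,\x) = u\,\xi_0(\x)$ is an admissible choice. The "furthermore" clause holds automatically because this specific choice is the one we used.

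The main obstacle is the non-compact range $z \in (0,\infty)$: pointwise slow variation alone does not control $\ell(ut)/\ell(u)$ uniformly in large $t$. The monotonicity trick above handles it and requires no Potter bounds. The one genuine input is the Uniform Convergence Theorem on compacts. We would cite it from \citet{deHaanF2006EVT} or Bingham–Goldie–Teugels; it relies on measurability of $\ell$, which holds here because $\ell(y) = y^{1/\xi}\bF_{Y\mid\X=\x}(y)$ is measurable.
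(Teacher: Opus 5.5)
Your proof is correct, but it takes a different route from the paper. The paper gives no proof of this statement at all: it attributes the result to \citet{pickands1975statistical} and defers to Theorem 1.2.5 of \citet{deHaanF2006EVT}. That is the general Pickands--Balkema--de Haan theory, in which GP convergence of the normalized excesses, for some positive scale function, characterizes the whole max-domain of attraction for any sign of the shape parameter. The statement's last sentence then records that, when the index is positive, $u\,\xi_0(\x)$ is an admissible scale.

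You instead stay inside the heavy-tailed setting of Assumption~\ref{ass:tail-of-Y} and fix $\sigma_0(u,\x)=u\,\xi_0(\x)$ from the start. You then reduce everything to the identity $|\bF_u(z\mid\x)-\bH(z;(u\xi_0(\x),\xi_0(\x)))| = t^{-1/\xi_0(\x)}\,|\ell(ut\mid\x)/\ell(u\mid\x)-1|$ with $t=1+z/u$. This is the same expression the paper later works with in the proof of Theorem~\ref{thm:bias}. You handle the non-compact range by the Uniform Convergence Theorem on $[1,T]$ and by monotonicity of $\bF_{Y\mid\X=\x}$ beyond $T$, and both steps are applied correctly. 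Your measurability check on $\ell$ is right, and so is your reading of $\xi(\x)$ in the last sentence as $\xi_0(\x)$.

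The citation buys generality the paper never uses, namely the equivalence with the domain of attraction and the cases $\xi_0\le 0$. Your argument buys a short, self-contained proof under exactly the paper's hypotheses. It yields both the existence claim and the explicit scale used downstream, and makes the ``furthermore'' clause automatic.

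You could even drop the Uniform Convergence Theorem. The map $t\mapsto \bF_{Y\mid\X=\x}(ut)/\bF_{Y\mid\X=\x}(u)$ is nonincreasing and converges pointwise, by regular variation, to the continuous limit $t^{-1/\xi_0(\x)}$, which vanishes at infinity. A P\'olya/Dini-type argument on a finite grid of $t$-values then upgrades this to uniform convergence on $[1,\infty)$. Your tail step is already the final piece of that argument.
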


In view of Theorem \ref{thm:P-B-dH}, estimating the size of high values of $Y$ given $\X=\x$ amounts to estimating the bivariate function $\btheta_0 = (\sigma_0, \xi_0)$.

\paragraph*{ERM with GP negative log-likelihood.}

We now cast our problem as a risk minimization procedure in function space whose solution will be approached by en ERM procedure.
As already pointed out by several authors \citep{chavez2005generalized, pasche2024neural}, one crucial point to make the theory work and to improve the results in the application is to reparametrize the GP distribution in order to obtain a diagonal Fisher information matrix using the procedure in \citet{cox1987parameter}. 
This leads to the reparametrization
\begin{equation}
\label{eq:ortho-reparam}
    \btheta = (\sigma, \xi) \in \paramspace
    \to 
    \bphi = (\nu, \xi) \in \reparamspace
    \qquad \text{with} \qquad
    \nu = \sigma (1 + \xi)
\end{equation}
The associated Fisher information is the diagonal matrix \citep{moins2023reparam}
\begin{equation}
\label{eq:fisher-true}
    I(\nu,\xi) =
    \begin{bmatrix}
        \frac{1}{\nu^2(1+2\xi)} & 0 \\
        0 & \frac{1}{(1+\xi)^2}
    \end{bmatrix}.
\end{equation}

The negative log-likelihood associated with an observation $z>0$ from a GP distribution with parameter $\bphi = (\nu,\xi) \in \reparamspace$ is given by
\[
	L(\bphi, z) = \lp 1 + \frac{1}{\xi} \rp \log \lp 1 + \frac{\xi(1+\xi)}{\nu} z \rp + \log(\nu) - \log(1+\xi). 
\]
Given a map $\F : \suppX \to \reparamspace$, the population risk that we aim to minimize is thus given by
\begin{equation}
\label{eq:true-risk}
	R(\F) = \E \lc L(\F(\X), Z) \rc,
\end{equation}
where the expectation is taken over the joint distribution of $(\X,Z)$ with $Z \mid \X = \x$ following the limiting GP model with unknown parameter $\btheta_0(\x)$.

Of course, this risk is never observed in practice and minimization has to be performed based on data. Furthermore, no observable data directly comes from the limiting GP model and estimation has to be performed at a pre-asymptotic level. Both approximation will lead to errors and we will show how to control them in Section \ref{sec:theory}.

We assume that we are given a collection 
\[
	\data = \lacc (\X_i, Y_i): \; i \in [n] \racc 
\]
of i.i.d. copies of the random pair $(\X,Y)$ with $Y$ satisfying Assumption \ref{ass:tail-of-Y}. 
For $\F : \suppX \to \reparamspace$, based on Theorem \ref{thm:P-B-dH}, it is natural to estimate the unknown asymptotic population risk in \eqref{eq:true-risk} by the risk
\begin{equation}
\label{eq:empirical-risk-true-u}
	\hR_n^{u}(\F) = \frac{1}{k_n} \sum_{i=1}^{n} L \lp \F(\X_i), Y_i - u(\X_i ; \tfrac{k_n}{n}) \rp \I \lacc  Y_i > u(\X_i ; \tfrac{k_n}{n}) \racc,
\end{equation}
where $(k_n)_{n \in \N}$ is an intermediate sequence such that $k_n/n \to 0$ and $k_n \to \infty$ as $n \to \infty$ and
\[
    u(\x, p) = F_{Y \mid \X = \x}^{-1}(1-p), \qquad
    p \in (0,1), \x \in \suppX
\]
is the $p$-th upper quantile of $Y \mid \X = \x$.
The risk in \eqref{eq:empirical-risk-true-u} corresponds to the empirical risk based on the excesses 
\[
	Y_i - u(\X_i ; \tfrac{k_n}{n}) \mid Y_i > u(\X_i ; \tfrac{k_n}{n}), \qquad i \in [n],
\]
associated with $\data$. 

This risk is also unobserved in practice because the intermediate quantile $u(\x; \tfrac{k_n}{n})$ of the conditional law $\P_{Y \mid \X = \x}$ is unknown. 
We will assume that we are given some estimator $\hu(\x; \tfrac{k_n}{n})$ of this quantity, which may be obtained by standard quantile regression techniques, and we show in Section \ref{sec:theory} how the quality of this estimator influences the final estimate. 
This leads to the following risk that we will minimize in practice
\begin{equation}
	\label{eq:empirical-risk-empirical-u}
	\hR_n^{\hu}(\F) = \frac{1}{k_n} \sum_{i=1}^{n} L \lp \F(\X_i), Y_i - \hu(\X_i ; \tfrac{k_n}{n}) \rp \I \lacc  Y_i > \hu(\X_i ; \tfrac{k_n}{n}) \racc.
\end{equation}

The risks associated with our problem can't be minimized over the full space of functions $\F : \suppX \to \reparamspace$ and we need to restrict the class of functions we will be looking at. The gradient boosting approach corresponds to a smart way to minimize the risk over linear combinations of ``simple'' functions, leading to a powerful estimator in practice.

\paragraph*{Gradient boosting.}

The gradient boosting algorithm considers a "hypothesis class" $\cH$ of functions $\h : \suppX \to \R^2$, also called the "base learners" or "weak learners", assumed to be negation closed ($\h \in \cH \iff -\h \in \cH$) as in \citet{biau2021optimization}, and minimize the (empirical) risk over $\lin(\cH)$ the space of functions consisting of all the (positive) linear combinations of functions in $\cH$, i.e.,
\[
	\lin(\cH) = \bigcup_{J \geq 1} \lin_J(\cH),
\]
where for $J \in \N_0$
\[
	\lin_J(\cH)
	=    
	\left\{ 
		\F(\x)
		= \sum_{j=1}^J \alpha_j \h_j(\x) : (\alpha_1, \ldots, \alpha_J) \in \R^J_+, \quad \h_1, \ldots, \h_J \in \cH 
	\right\}.
\]

To perform the minimization in practice, an algorithm is required. We will consider the following procedure (Algorithm \ref{alg:GB}) which is the one proposed in \citet{velthoen2023gradient}, based on \citet{friedman2001greedy}, incorporating gradient clipping as advised by \citet{velthoen2023gradient} and adding a "clamp" step at the end to make sure that the output lies in the parameter space $\reparamspace$.
For every function $\F \in \lin(\cH)$ of the form $\F(\x) = (\nu(\x), \xi(\x))$ for $\x \in \suppX$, we denote by $\clamp(\F,\reparamspace)$ the function defined for every $\x \in \suppX$ by
\[
	\clamp(\F,\reparamspace)(\x)
	= \lp
	\min \{ \max(\nu(\x),\nu_L), \nu_U \},
	\min \{ \max(\xi(\x),\xi_L), \xi_U \}
	\rp
	\in \reparamspace,
\]
where $\nu_L, \nu_U = \nu_U(n), \xi_L, \xi_U > 0$ are introduced in Assumption \ref{ass:compact} below.

\begin{algorithm}
	\caption{Gradient Boosting algorithm}
	\label{alg:GB}
	\begin{algorithmic}
		\Require Learning rates $(\lambda_t)_{t \in \N_0} > 0$, clipping parameter $\clip > 0$ and parameter space $\reparamspace$
		\State $\F_0 \in \cH$
		\For{$t = 1,2,\ldots,T$}
		\State $\h_t \gets \operatorname{argmin}_{\h \in \cH} \sum_{i=1}^n \left\| -\nabla_{\bphi} L \lp \F_{t-1}(\X_i), Y_i - \hu(\X_i ; \tfrac{k_n}{n}) \rp \I \lacc  Y_i > \hu(\X_i ; \tfrac{k_n}{n}) \racc - \h(\X_i) \right\|_2^2$
		\State $\F_t = \F_{t-1} + \min\{\lambda_t, \clip\} \, \h_t$
		\EndFor
        \State $\F: \x \in \suppX \mapsto \F(\x) = (\nu(\x), \xi(\x)) = \clamp(\F_T, \reparamspace)(\x)$
		\Ensure $\btheta: \x \in \suppX \mapsto \btheta(\x) = \lp \frac{\nu(\x)}{1+\xi(\x)}, \xi(\x) \rp$
	\end{algorithmic}
\end{algorithm}

At each step, the algorithm searches for the best direction in the weak learner space, in the sense that it looks for the one which is best at predicting the negative gradient of the loss. It then updates the output function by a weighted version of the resulting weak learner.

The convergence of Algorithm \ref{alg:GB} when $T \to \infty$ is studied in Theorem 2 of \citet{biau2021optimization}, provided that the learning rates $\lambda_t > 0$ are chosen small enough. As pointed by \citet{velthoen2023gradient}, the loss function $L$ is not convex in our case as the negative log-likelihood associated with the GP model is not globally convex in general so that Assumption 2 in \citet{biau2021optimization} is not verified and we can't hope to reach a global minimum, even for the reparametrized loss. 
Note however that the algorithm proposed by \citet{mason2000functional} is also studied in \citet{biau2021optimization} and its convergence as $T \to \infty$ can be obtained under lighter conditions. See Section 5 of \citet{mason2000functional} for convergence results without convexity of the loss.
To obtain statistical guarantees on the ERM solution of the empirical risk in \eqref{eq:empirical-risk-empirical-u}, it will be sufficient to be a local minima inside the class and this is achieved by Algorithm \ref{alg:GB} as $T \to \infty$.
Of course, in practice we stop the algorithm after a finite number $T \in \N_0$ of iterations and this parameter will play a role as trade off between sample complexity and accuracy of the final regression function as explained in the next sections.

\section{Sampling theory}
\label{sec:theory}

The error made in estimating $\btheta_0$ can be decomposed into three parts. Indeed, for fixed $\F: \suppX \to \reparamspace$, we can consider separately two terms while dealing with the risk:
\[
    \labs \hR_n^{\hu}(\F) - R(\F) \rabs
    \leq
    \labs \hR_n^{\hu}(\F) - R^u(\F) \rabs
    +
    \labs R^u(\F) - R(\F) \rabs,
\]
where
\[
		R^{u}(\F) 
		= \E \lc \hR_n^{u}(\F) \rc
		= \frac{n}{k_n} \E \lc  L \lp \F(\X), Y - u(\X ; \tfrac{k_n}{n}) \rp \I \lacc  Y > u(\X; \tfrac{k_n}{n}) \racc \rc.
\]

The first error associated with the first term may be called a \emph{stochastic error} and corresponds to the fact that we only have access to a finite amount of data. Note that this error takes into account the fact that the true intermediate quantile $u(\cdot; \tfrac{k_n}{n})$ is unknown and is estimated using the data. This error decreases as $n \to \infty$.

The second error associated with the second term may be called a \emph{bias error} and corresponds to the fact that $R^u(\F)$ is only an approximation of the true asymptotic risk $R(\F)$ arising from the GP model. This error decreases as $u \to \infty$.

The third and last part of the error may be called an \emph{approximation error} and is associated with the fact that the true parameter function is not a finite linear combinations of elements of the hypothesis class $\cH$. This error decreases as $T \to \infty$.

Auxiliary results are required to study these different error terms. They are gathered in Appendix \ref{app:auxiliary}. Some of these results are directly taken from the literature and are stated without proof, while some are proven in the section. The proofs are deferred to Appendix \ref{app:proofs}. The main assumptions underlying our results are stated below with discussion.

\subsection{Assumptions and discussion}

As it is classically needed in this kind of analysis, we will assume compactness of $\reparamspace$.
\begin{assumption}[Compact parameter space]
\label{ass:compact}
    We assume that the parameter space $\reparamspace = \reparamspace(n)$ takes the form
    \[
        \reparamspace
        = [\nu_L, \nu_U] \times [\xi_L, \xi_U]
    \]
    for some $0 < \nu_L < \nu_U = \nu_U(n)$ and $0 <  \xi_L < \xi_U$.
\end{assumption}
Note that the upper bound on the (modified) scaled parameter $\nu$ may depend on the sample size $n$ since, as explained in Theorem \ref{thm:P-B-dH}, the scale parameter may always be taken proportional to the threshold $u$.

\begin{assumption}
\label{ass:k_n}
$k_n = n^\alpha$ for some $\alpha \in (0,1)$.
\end{assumption}

The following condition is a quantitative version of Assumption 2 in \citet{gnecco2024ERF}. This assumption is weaker than requiring that the estimated quantiles converge to the true counterparts since only the ratio needs to be close to one. A possible choice for such a uniformly consistent method is given in \citet{wang2013estimation}.

\begin{assumption}[Intermediate quantile estimation]
	\label{ass:quantile-estimation}
	For any $\delta \in (0,1)$, there exists an event with probability at least $1-\delta$ on which
	\[
		\sup_{\x \in \suppX} \labs \frac{\hu(\x;k_n/n)}{u(\x;k_n/n)} - 1 \rabs \leq a_n(\delta)
	\]
	for some $a_n(\delta) > 0$ such that $a_n(\delta) \to 0$ as $n \to \infty$.
\end{assumption}

The next assumption ensures that the hypothesis class $\cH$ is not ``too large'' and admits a reasonable bound. This is not the only possible assumption but will be sufficient for our purposes as we will be using small regression trees in practice and the condition is verified with these functions, as explained in the beginning of Section \ref{sec:numerical}.

\begin{assumption}[Classes of hypothesis]
	\label{ass:hypothesis}
	For $j \in [2]$, the class
	\[
		\cH_j
		=
		\lacc
		h^{(j)} : \suppX \to \R : \h \in \cH
		\racc
	\]
	containing the $j$-th margin of functions in $\cH$ is pointwise measurable \citep[Example 2.3.4]{vdVW1996weak}, of finite VC dimension $\VC_j$ and admits an envelope function $H_j : \suppX \to \R$ such that
	\[
		\|H_j\|_{L_2(\suppX, P_{\X})}  < \infty,
	\]
	and
	\[
		\sup_{\h \in \cH} \sup_{\x \in \suppX} h^{(j)}(\x)
		\leq 
			\sup_{\x \in \suppX} H_j(\x)
		\leq
			\frac{1}{4} \sqrt{\frac{n\|H_j\|_{L_2(P_X)}}{\log \lp \mathscr{K} \VC_j (16\mathrm{e})^{\VC_j}\rp}},
	\]
	where $ \mathscr{K}>0$ is a universal constant not depending on the function class.
\end{assumption}
Since $\cH_j$ admits a finite VC dimension $\VC_j$ with $L_2$ envelope function $H_j$, Theorem 2.6.7 in \citet{vdVW1996weak} implies that its covering number $\mathcal{N}(\eps, \cH_j)$ as defined in equation (2.2) of \citet{einmahl2005uniform} satisfies
\[
	\mathcal{N}(\eps, \cH_j)
	\leq
	\mathscr{K} \VC_j (16\ee)^{\VC_j} \eps^{-2\VC_j}, \qquad \forall \eps \in (0,1).
\]
This is needed in order to apply Proposition 1 in \citet{einmahl2005uniform}. The uniform bound is related to assumption (iv) in their result, with $\nu = 2\VC_j$, $\beta^2 = \sigma^2 = \|H_j\|_{L_2(\suppX, P_{\X})}$ and $C_1 = \lp \mathscr{K} \VC_j (16\ee)^{\VC_j} \rp^{1/\nu}$.

The following conditions is similar to Assumption 2 in \citet{farkas2024gptree}.

\begin{assumption}[Fisher information]
\label{ass:fisher}
There exists $\ist > 0$ such that for $\G \in \lin(\cH)$,
\[
    \inf_{\bphi \in \reparamspace}
    \frac{n}{k_n}
    \left\|
    \E \lc \nabla_{\bphi}^2 L \lp \bphi, Y - u(\X;\tfrac{k_n}{n}) \rp^\tr \G(\X) \I \lacc Y > u(\X;\tfrac{k_n}{n}) \racc \rc
    \right\|_\infty
    \geq \ist \E \lc \|\G(\X)\|_\infty \rc.
\]
\end{assumption}

The orthogonal reparametrization that we introduced in \eqref{eq:ortho-reparam} is key in order to make Assumption \ref{ass:fisher} possible. Indeed, take $\bphi = (\nu,\xi) \in \reparamspace$ and $\G \in \lin(\cH)$, then
\begin{multline*}
    \frac{n}{k_n}
    \E \lc \nabla_{\bphi}^2 L \lp \bphi, Y - u(\X;\tfrac{k_n}{n}) \rp^\tr \G(\X) \I \lacc Y > u(\X;\tfrac{k_n}{n}) \racc \rc \\
    =
    \int_{\suppX} \E_{Y \mid \X = \x} \lc \nabla_{\bphi}^2 L \lp \bphi, Y - u(\x;\tfrac{k_n}{n}) \rp \mid Y > u(\x;\tfrac{k_n}{n})\rc^\tr \G(\x) \, \diff P_{\X}(\x)
\end{multline*}
Using Assumption \ref{ass:tail-of-Y}, the inner expectation converges to $I(\nu_0(\x), \xi_0(\x))$ where the Fisher information $I$ has been defined in \eqref{eq:fisher-true}. Both non-zero elements of this matrix are bounded away from zero by Assumption \ref{ass:compact}, making Assumption \ref{ass:fisher} realistic in our setting.

The next condition comes from point 11 of Proposition B.1.9 in \cite{deHaanF2006EVT}.

\begin{assumption}[Slowly varying function]
\label{ass:sv-function}
	For $\x \in \suppX$, the map $y > 0 \mapsto \ell'(y \mid \x)$ can be rewritten as
    \[
        \ell'(y \mid \x) = \ell(y \mid \x) \cdot y^{-1} \cdot r(y \mid \x),
    \]
    where $y > 0 \mapsto \labs r(y \mid \x) \rabs$ is a decreasing function such that $\sup_{\x \in \suppX} \labs r(y \mid \x) \rabs \searrow 0$ as $y \to \infty$.
\end{assumption}

\subsection{Stochastic error}
\label{subsec:stochastic}

Recall that the output of Algorithm \ref{alg:GB} is an element of $\outputspace$ where
\[
	\outputspace 
	=
	\lacc
		\clamp(\F,\reparamspace) : \F \in \lin_T^c(\cH)
	\racc
\]
with
\[
	\lin_T^c(\cH)
	= 
	\lacc
		\h_0 +  \sum_{t=1}^{T} \alpha_t \h_t: \; \lp \alpha_1, \ldots, \alpha_T \rp \in \R_+^{T}, \sum_{t=1}^T \alpha_t \leq Tc, \; \h_0, \ldots, \h_T \in \cH
	\racc.
\]
For $\F, \G : \suppX \mapsto \reparamspace$, we will use the notation
\[
    \left\{
    \begin{array}{lll}
        \nabla_{\bphi} \hR_n^{\hu}(\F) &= \frac{1}{k_n} \sum_{i=1}^n \nabla_{\bphi} L \lp \F(\X_i), Y_i - \hu(\X_i ; \tfrac{k_n}{n}) \rp \I \lacc  Y_i > \hu(\X_i ; \tfrac{k_n}{n}) \racc \\
        \nabla_{\bphi} R^u(\F)  &=  \frac{n}{k_n} \E \lc  \nabla_{\bphi} L \lp \F(\X), Y - u(\X ; \tfrac{k_n}{n}) \rp \I \lacc  Y > u(\X; \tfrac{k_n}{n}) \racc \rc \\
        \nabla_{\bphi} R(\F) &= \E \lc \nabla_{\bphi} L(\F(\X), Z) \rc
    \end{array}
    \right.
\]
for the derivative of the different risks.

\begin{theorem}
\label{thm:stochastic}
	Let $\hF_n \in \outputspace$ be such that $\nabla_{\bphi} \hR_n^{\hu}(\hF_n) = (0,0)^\tr$ and let $\F^u \in \outputspace$ be such that $\nabla_{\bphi} R^u(\F^u) = (0,0)^\tr$. Under Assumptions \ref{ass:tail-of-Y}  to \ref{ass:sv-function}, there exists a constant $\cst > 0$, not depending on $n$, such that for any $\delta \in (0,1)$ and for any $n \in \N_0$ which is large enough such that $a_n(\delta) \leq 1/2$ and $\sup_{\x \in \suppX} \labs r(u(\x ; \tfrac{k_n}{n}) / 2 \mid \x) \rabs \leq 1$ and $\sup_{\x \in \suppX} \bF_{Y \mid \X = \x}(u(\x ; \tfrac{k_n}{n}) / 2) \leq 2k_n/n$, there exists an event with probability at least $1-\delta$ on which
	\[
		\int_{\suppX} \| \hF_n(\x) - \F^u(\x) \|_\infty \diff P_{\X}(\x)
		\leq
		\cst \lacc B_{n,1}(\delta/6) + B_{n,1}(\delta/6) \racc,
	\]
	where
	\[
		B_{n,1}(\delta)
		= 
        \log k_n
        \lp
        \sqrt{ \frac{a_n(\delta)}{k_n} \log(1/\delta)}
            +
        a_n(\delta)
        \rp,
	\]
	and
	\[
		B_{n,2}(\delta)
		= \frac{\log k_n}{\sqrt{k_n}} \lc (1+Tc)
        + \sqrt {\log(1/\delta)} \rc.
	\]
\end{theorem}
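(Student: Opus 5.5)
The strategy is to turn the first-order conditions into an $L_1(P_{\X})$ distance bound through Assumption \ref{ass:fisher}. Set $\G = \hF_n - \F^u$. Along the segment $\F_s = \F^u + s\G$, $s\in[0,1]$, which stays in $\reparamspace$ by convexity of the box, the mean value theorem gives
\[
\nabla_{\bphi} R^u(\hF_n) - \nabla_{\bphi} R^u(\F^u) = \frac{n}{k_n}\int_0^1 \E\lc \nabla^2_{\bphi} L\lp \F_s(\X), Y-u(\X;\tfrac{k_n}{n})\rp^\tr \G(\X)\I\lacc Y>u(\X;\tfrac{k_n}{n})\racc\rc \diff s .
\]
Assumption \ref{ass:fisher} bounds this from below in sup-norm by $\ist\,\E\|\G(\X)\|_\infty$. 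The infimum in that assumption is over constant $\bphi$, whereas here $\bphi=\F_s(\x)$ varies with $\x$. I expect this gap can be handled by applying the assumption conditionally, or by reading it as holding uniformly over $\bphi(\cdot)$ with values in $\reparamspace$. Since $\nabla R^u(\F^u)=0=\nabla\hR_n^{\hu}(\hF_n)$, the difference equals $\nabla R^u(\hF_n)-\nabla\hR_n^{\hu}(\hF_n)$. Therefore
\[
\ist\int\|\hF_n-\F^u\|_\infty\,\diff P_{\X}\le \sup_{\F\in\outputspace}\big\|\nabla\hR_n^{\hu}(\F)-\nabla R^u(\F)\big\|_\infty .
\]

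Split this supremum into $\sup\|\nabla\hR_n^{\hu}-\nabla\hR_n^{u}\|$ (quantile estimation) and $\sup\|\nabla\hR_n^{u}-\nabla R^u\|$ (empirical process).

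\textbf{Empirical process term.} The summands are $\nabla_{\bphi}L$ evaluated at clamped functions. On the box, the gradient components are Lipschitz in $\bphi$, with derivatives bounded by $\log(1+cz)$-type growth in $z$. Write $\F=\clamp(\h_0+\sum\alpha_t\h_t)$ with $\sum\alpha_t\le Tc$. Contraction arguments then bound the complexity of the class by that of $\cH$ inflated by a factor $(1+Tc)$. Since the class is uniformly bounded only up to $\log$ of excesses, truncate the excesses at level $u\cdot k_n^{C}$ or so. By regular variation, the normalized excess $Y/u$ given $Y>u$ has Pareto-like tails, so $\log$ moments are finite and the truncation costs only a $\log k_n$ factor. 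Apply Proposition 1 of \citet{einmahl2005uniform} with the VC covering bound derived after Assumption \ref{ass:hypothesis} and the stated envelope condition to get expectation of order $(1+Tc)\log k_n/\sqrt{k_n}$. The effective sample size is $k_n$ because the indicator has probability $k_n/n$ and we normalize by $k_n$. Talagrand/Bousquet concentration adds $\log k_n\sqrt{\log(1/\delta)/k_n}$. This yields $B_{n,2}(\delta/6)$ after allocating probabilities among the events.

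\textbf{Quantile term.} On the event of Assumption \ref{ass:quantile-estimation} with $a_n(\delta)\le 1/2$, we have $\hu\in[u/2,3u/2]$. Each summand differs in two ways:
\begin{enumerate}[(i)]
\item the indicator, $\I\{Y>\hu\}-\I\{Y>u\}$, nonzero only when $Y$ lies between $(1\pm a_n)u$;
\item the shift of the excess by $|\hu-u|\le a_n u$.
\end{enumerate}
For (ii), $\nabla_{\bphi}L$ is Lipschitz in $z/\nu$. Because $\nu$ can be of order $u$, this contributes roughly $a_n\log k_n$ after the same truncation. For (i), the number of such points is binomial with mean $n\,\P(Y\in[(1-a_n)u,(1+a_n)u])$. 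Assumption \ref{ass:sv-function} (derivative of $\ell$, with $|r|\le1$ at $u/2$) and $\bF(u/2)\le 2k_n/n$ bound this mean by $\lesssim a_n k_n$. A Bernstein bound then gives a count $\lesssim a_nk_n+\sqrt{a_nk_n\log(1/\delta)}$. Multiplying by the $\log k_n$ bound on the summands and dividing by $k_n$ gives $B_{n,1}(\delta/6)$. Since $\hu$ is data-dependent, I would make this uniform by bracketing $\hu$ between deterministic $u(1\pm a_n)$ through monotonicity of indicators.

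\textbf{Main obstacle.} I expect the hardest step to be the uniform empirical process bound for the unbounded $\log$-excess gradients over the clamped linear span. Precisely, this means handling the envelope and truncation so that Proposition 1 of \citet{einmahl2005uniform} applies with only a $\log k_n$ loss, and showing that clamping plus the $\ell_1$ constraint $\sum\alpha_t\le Tc$ yields the linear factor $(1+Tc)$. Collecting the two terms, dividing by $\ist$, and intersecting events by a union bound gives the claim, reading the display as $B_{n,1}+B_{n,2}$.
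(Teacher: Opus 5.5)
Your outline follows the paper's proof. It makes the same reduction through Assumption~\ref{ass:fisher} to $\sup_{\F\in\outputspace}\|\nabla_{\bphi}\hR_n^{\hu}(\F)-\nabla_{\bphi}R^u(\F)\|_\infty$, splits into a quantile-estimation term and an empirical-process term, and truncates the gradients at a level $M\asymp\log k_n$. It also brackets $\hu$ between $u(1\pm a_n)$, using a binomial tail bound and the bound on $g_{n,\x}'$ from Assumption~\ref{ass:sv-function}. Finally, it uses contraction, clamping, Lemma~\ref{lem:RC-lin} and Proposition~1 of \citet{einmahl2005uniform}, followed by Bernstein-type concentration.

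The differences are cosmetic:
\begin{itemize}
\item The paper truncates on $\Phi(Y)\le M=(2/\rho)\log n$ and controls the remainder with the exponential moment of Lemma~\ref{lem:upper-fun}. Finiteness of $\log$-moments alone would not give the $\log(1/\delta)$ dependence.
\item It uses McDiarmid's inequality (Theorem~\ref{thm:concentration-expectation}) instead of Bousquet's.
\item It makes your \emph{effective sample size $k_n$} rigorous via the conditioning trick (Lemma~\ref{lem:CT}) and $\E\sqrt{K}\le\sqrt{k_n}$.
\end{itemize}
The looseness you flag in the Fisher step is shared by the paper, which uses a single $\x$-dependent intermediate point $\widetilde{\F}(\X)$.

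The step that does not hold as you justify it is (ii). The supremum runs over all of $\outputspace$, which contains functions with $\nu(\x)$ close to the fixed $\nu_L$. For those, Lipschitz continuity in $z/\nu$ only gives the bound $a_nu/\nu$, of order $a_nu/\nu_L$. This does not vanish, because $u=u(\x;\tfrac{k_n}{n})\to\infty$. You would need every admissible $\nu$ to be of order $u$, not just some. The paper's version of this step has the same defect. It applies Lemma~\ref{lem:lip-z} with $\eps=a_n(\delta)$, but one only has $|(Y-\hu)/(Y-u)-1|\le a_nu/(Y-u)$, which is unbounded for $Y$ near $u$.

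A repair with tools you already have:
\begin{itemize}
\item From the explicit formulas, $|\partial_z\Dot{L}_j(\bphi,z)|\lesssim 1/z$ uniformly in $\bphi\in\reparamspace$, with constants depending only on $\nu_L,\xi_L,\xi_U$. So on exceedances with $Y-u\ge 2a_nu$ the change is $\lesssim a_nu/(Y-u)$.
\item Exceedances with $u<Y<(1+2a_n)u$ form a binomial count with mean $\lesssim a_nk_n$. Handle them with the crude bound $2M$.
\item The computation you use in (i) also shows that $(Y-u)/u$ given $Y>u$ has density at most $1+\xi_L^{-1}$. Hence $\E[a_nu/(Y-u)\,\I\{Y-u\ge 2a_nu\}\mid Y>u]\lesssim a_n(1+\log(1/a_n))$.
\item Bernstein's inequality then bounds the shift term by a multiple of $a_n(\log k_n+\log(1/a_n))+\sqrt{a_n\log(1/\delta)/k_n}+\log(1/\delta)/k_n$.
\end{itemize}
This is of the order of $B_{n,1}+B_{n,2}$ for $\log(1/\delta)\le k_n$. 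To see it, distinguish $a_n\ge 1/k_n$, where $\log(1/a_n)\le\log k_n$, from $a_n<1/k_n$, where $a_n\log(1/a_n)\le k_n^{-1}\log k_n$.
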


The term $B_{n,1}$ is associated with the estimation of the intermediate quantile while the term $B_{n,2}$ is associated with the empirical estimation of the pre-asymptotic risk $R^u$. 
For fixed $T>0$, the final order of the bound in terms of the sample size depends on the speed at which the sequence $(a_n(\delta))_{n \in \N}$ converges to zero with $n \to \infty$. If it converges at the rate $1/\sqrt{k_n}$, both terms are of the same order $\log k_n / \sqrt{k_n}$ which is the same as the one obtained in \citet{farkas2024gptree}. In practice, $T = T(n)$ and the term $B_{n,2}$ is converging to zero at a slower rate than the one obtained in \citet{farkas2024gptree}.

It seems from Theorem \ref{thm:stochastic} that increasing the number of boosting iterates $T$ always lead to an increase in the error which may be surprising, but of course doing so reduces the approximation error discussed in Section \ref{subsec:approx}. In practice we choose $T = T(n)$ in a way to get enough predicting power (low bias) while trying to not overfit the training data (small variance).

The dimension $d$ of the predictor space does not appear explicitly in the bound but appear indirectly through the VC dimension of the classes of hypothesis (Assumption \ref{ass:hypothesis}) which typically increases with $d$. We discuss this in Section \ref{sec:numerical} for the case of decision trees.

\subsection{Bias error}
\label{subsec:bias}

We now turn our attention to the fact that the risk $R^u$ is only a pre-asymptotic approximation of the true risk $R$. As such, minimizing this quantity leads to a bias which may be controlled under second-order regular variation which we state below. The condition corresponds to Assumption 3 in \citet{farkas2024gptree}, to condition C6 in \citet{beirlant2004local} and is related to the theory of slowly varying functions with remainder condition \citep{goldie1987slow}.

\begin{assumption}
\label{ass:second-order-RV}
	For any $\x \in \suppX$ and $y>1$, there exists $c(\x) \in \R$, $\rho(\x) < 0$ and a strictly positive function $\psi(\cdot\mid \x)$ with $\psi(t\mid \x) \to 0$ as $t\to \infty$ such that
	\[
	\frac{\ell(ty\mid \x)}{\ell(t\mid \x)} 
	= 
	1 + c(\x)\psi(t\mid \x) \int_1^y u^{\rho(\x)-1} \diff u +
	\operatorname{o}(\psi(t\mid \x)), \qquad t \to \infty,
	\]
	where $\ell$ is the slowly varying function appearing in Assumption \ref{ass:tail-of-Y}. We also assume that
	\[
		C = \sup_{\x \in \suppX} |c(\x)| < \infty 
	\]
	and that 
	\[
		\Psi(t) = \sup_{\x \in \suppX} \psi(t\mid \x) \to 0,
		\qquad t \to \infty.
	\]
\end{assumption}

\begin{theorem}
\label{thm:bias}
	Let $\F^u \in \outputspace$ be such that $\nabla_{\bphi} R^u(\F^u) = (0,0)^\tr$ and let $\F_* \in \outputspace$ be such that $\nabla_{\bphi} R(\F^u) = (0,0)^\tr$. Under Assumptions \ref{ass:fisher} and \ref{ass:second-order-RV}, we have
	\[
		\int_{\suppX} \| \F^u(\x) - \F_*(\x) \|_\infty \, \diff P_{\X}(\x) \leq \ist B(n),
	\]
	where $\ist > 0$ is defined in Assumption \ref{ass:fisher} and
	\[
		B(n) = \operatorname{O} \lp \sup_{\x \in \suppX} \Psi(u(\x;\tfrac{k_n}{n})) \rp, \qquad n \to \infty.
	\]
\end{theorem}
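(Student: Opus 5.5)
The approach is a first-order expansion of the gradient map around $\F_*$, combined with the Fisher-type lower bound of Assumption \ref{ass:fisher}. The statement says $\nabla_{\bphi} R(\F^u)=0$, which is presumably a typo for $\nabla_{\bphi} R(\F_*)=0$; I read it that way. Write $\G=\F^u-\F_*$. Since both gradients vanish, we have
\[
\nabla_{\bphi} R^u(\F^u)-\nabla_{\bphi} R^u(\F_*) = \nabla_{\bphi} R(\F_*)-\nabla_{\bphi} R^u(\F_*).
\]
By the mean value theorem applied componentwise in $\bphi$ (pointwise in $\x$, along the segment between $\F_*(\x)$ and $\F^u(\x)$, which stays inside the convex box $\reparamspace$), the left-hand side equals
\[
\frac{n}{k_n}\E\lc \nabla_{\bphi}^2 L(\tilde\bphi, Y-u(\X;\tfrac{k_n}{n}))^\tr \G(\X)\I\{Y>u(\X;\tfrac{k_n}{n})\}\rc
\]
for some intermediate point $\tilde\bphi\in\reparamspace$. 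Taking the infimum over $\reparamspace$ and applying Assumption \ref{ass:fisher} bounds the norm of this term from below by $\ist\,\E\|\G(\X)\|_\infty$. Here one has to accept that Assumption \ref{ass:fisher} applies to differences of elements of the output class; I would argue this is the intended scope of "$\G\in\lin(\cH)$", since negation-closedness of $\cH$ puts differences of linear combinations back in $\lin(\cH)$, clamping aside. The result is
\[
\ist\int\|\F^u-\F_*\|_\infty\,\diff P_{\X}\le \|\nabla_{\bphi} R^u(\F_*)-\nabla_{\bphi} R(\F_*)\|_\infty ,
\]
so everything reduces to a uniform bias bound on the gradients. The constant in front of $B(n)$ is then $\ist^{-1}$ rather than $\ist$, up to how the constants are absorbed into $B(n)$.

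The main step is to bound $\sup_{\bphi\in\reparamspace}\sup_{\x}\big|\E[\nabla_{\bphi}L(\bphi,Y-u)\mid Y>u,\X=\x]-\E[\nabla_{\bphi}L(\bphi,Z)\mid\X=\x]\big|$ with $u=u(\x;k_n/n)$. I would write both expectations as integrals against survival functions by integrating by parts, using $\partial_z\nabla_{\bphi}L(\bphi,z)$:
\[
\int_0^\infty \partial_z\nabla_{\bphi}L(\bphi,z)\,\big[\bF_u(z\mid\x)-\bH(z;\btheta_0(\x))\big]\,\diff z .
\]
One chooses $\sigma_0=u\xi_0$, so that the limit is $\bH(z)=(1+z/u)^{-1/\xi_0}$, and substitutes $z=u(y-1)$. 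Then
\[
\bF_u(u(y-1)\mid\x)=y^{-1/\xi_0}\,\frac{\ell(uy\mid\x)}{\ell(u\mid\x)},
\]
and Assumption \ref{ass:second-order-RV} gives a difference equal to $y^{-1/\xi_0}\big(c\,\psi(u\mid\x)\,\frac{y^\rho-1}{\rho}+o(\psi)\big)$.

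The explicit derivative $\partial_z\nabla_{\bphi}L$ is a rational function in $z$ and $\bphi$, bounded on the compact box $\reparamspace$ by $O(1/(1+z))$-type terms, while the weight $y^{-1/\xi_0}\log y$ is integrable. The resulting bound is therefore $O(C\,\Psi(u(\x;k_n/n)))$ uniformly in $\x$ and $\bphi$.

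The obstacle is uniformity. The $o(\psi)$ remainder in Assumption \ref{ass:second-order-RV} is only pointwise in $y$, so dominated convergence over $y\in(1,\infty)$ is needed. I would secure it via Potter-type bounds for the second-order ratio, which are available from the remainder theory of \citet{goldie1987slow}, or by strengthening the assumption to a uniform remainder. A second concern is that $\nu_U=\nu_U(n)$ grows with $n$, so the derivative bounds must be checked to remain bounded when $\nu$ is comparable to $u$. This is consistent with the scaling $z=u(y-1)$, since after rescaling $\nu/u$ stays bounded. Combining these bounds with the reduction above yields the claim with $B(n)=O(\sup_{\x}\Psi(u(\x;k_n/n)))$.
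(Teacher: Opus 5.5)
Your proposal follows essentially the same route as the paper. Assumption \ref{ass:fisher} reduces the claim to the gradient bias $\nabla_{\bphi}R^u(\F_*)-\nabla_{\bphi}R(\F_*)$. That bias is then written through the difference between the excess and GP survival functions, with $\sigma_0=u\xi_0$ and Assumption \ref{ass:second-order-RV} controlling $\ell(u(1+s/u)\mid\x)/\ell(u\mid\x)-1$. The paper applies the layer-cake formula $\E[Y]=\int_0^\infty \bF_Y(t)\,\diff t$ to the monotone pieces of $\Dot L_j$; after a change of variable this is exactly your integration by parts.

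Your additional remarks address points the paper's proof passes over, without changing the argument: the typo $\F^u\to\F_*$, the constant $\ist^{-1}$ rather than $\ist$, and the need for dominated or uniform control of the $o(\psi)$ remainder, which the paper simply integrates term by term.
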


Note that in comparison to Theorem \ref{thm:stochastic}, the asymptotic nature of Theorem \ref{thm:bias} comes from Assumption \ref{ass:second-order-RV} which is stated in asymptotic manner. The control on the integral of interest in completely explicit in the proof.

\subsection{Approximation error}
\label{subsec:approx}

As the number of boosting iterates $T = T(n) \to \infty$, the bias between an optimal function inside the output of the boosting algorithm (Algorithm \ref{alg:GB}) $\F_*$ as in Theorem \ref{thm:bias} and the true function $\F_0 = (\nu_0, \xi_0)$ typically decreases as we are able to approximate more general function via $\outputspace$ whose size is increasing. The counterpart is that the variance increases as suggested in Theorem \ref{thm:stochastic}. This suggests a standard bias-variance tradeoff depending on the size of the class $\outputspace$. In practice, this tradeoff isn't solvable analytically but $T = T(n)$ will be considered as an hyperparameter and will be chosen by cross-validation. Indeed, for trees which are often used as the default choice for the class $\cH$, no rate of approximation is available for $\int_{\suppX} \| \F_*(\x) - \F_0(x) \|_\infty \, \diff P_{\X}(\x)$, see Lemma 5 in the supplement to \citet{biau2021optimization}.

\section{Numerical results}
\label{sec:numerical}

The choice of trees as base classifiers corresponds to the Gradient Boosting for EXtremes (GBEX) method introduced in \cite{velthoen2023gradient}, re-parametrized using the transformation
\[
    \btheta = (\sigma,\xi) \in \paramspace \mapsto \bphi = (\nu,\xi) \in \reparamspace
\]
making the Fisher information matrix of the GP distribution $I(\nu,\xi)$ diagonal \eqref{eq:fisher-true}.
Based on their code available from the R package \url{https://github.com/JVelthoen/gbex/}, we implement the gradient tree boosting procedure for the GP distribution parameters in this new parameter space. This is an easy adaptation from the original R package GBEX and the adapted code may be asked directly to the corresponding author.

Note that Assumption \ref{ass:hypothesis} is satisfied when binary decision trees are used as proposed here. Indeed, it is shown in Corollary 10 of \cite{leboeuf2020trees} that the VC dimension of the class of binary decision trees with a structure containing $N$ internal nodes based on $d$-valued inputs is finite and behaves as $\operatorname{O}(N\log(dN))$ when $N \to \infty$. Often, decision stumps are used where there is no internal node and each tree has just two leaves and in this case it is shown in Corollary 8 of \cite{leboeuf2020trees} that the VC dimension is the largest $m \in \N_0$ such that
\[
    \binom{m}{ \lfloor m/2 \rfloor } \leq 2d.
\]
Furthermore Assumption \ref{ass:fisher} reduces to Assumption 2 of \cite{farkas2024gptree} when considering decision trees as hypothesis classes for the parameters.

Section \ref{subsec:simus} will be devoted to compare numerically the two parametrization for the gradient boosting with GP loss and Section \ref{subsec:application} will be concerned with the application of the method to medical malpractice data.

\subsection{Simulations: comparison with standard GBEX}
\label{subsec:simus}

We start by investigating the impact of the reparametrization on level curves of the GP model to illustrate how it can help improving stability during the learning process as already observed by \citet{pasche2024neural}. To this end, we simulate $n = 2000$ observations from the GP distribution with scale $\sigma = 1$ and shape $\xi = 0.5$ and draw the contour lines of the associated negative log-likelihood. Results are displayed on Figure \ref{fig:curves}. We clearly see how the elliptical contour lines in the standard parametrization become more isotropic in the new parametrization suggesting a significant reduction in parameter correlation. 
\begin{figure}[h]
    \centering
    \includegraphics[width=0.48\linewidth]{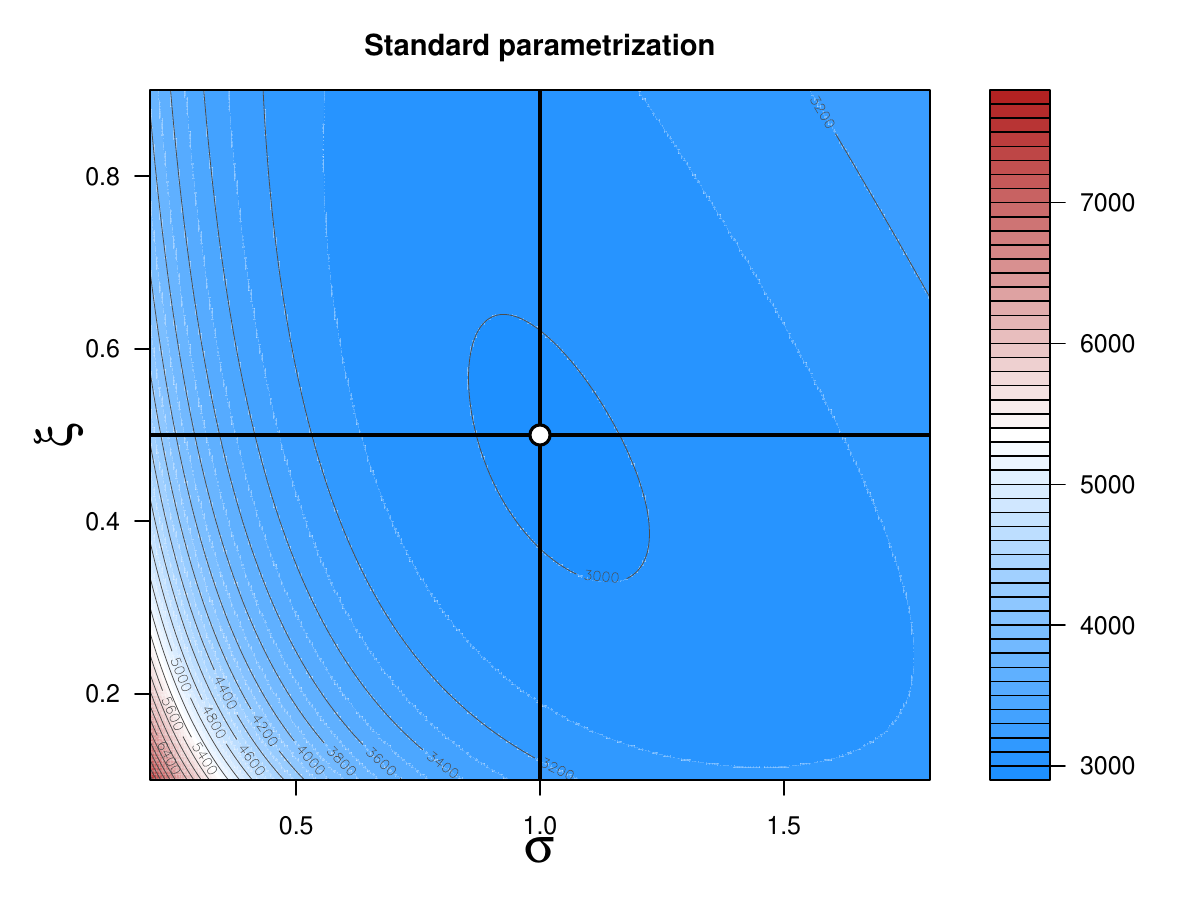}
    \includegraphics[width=0.48\linewidth]{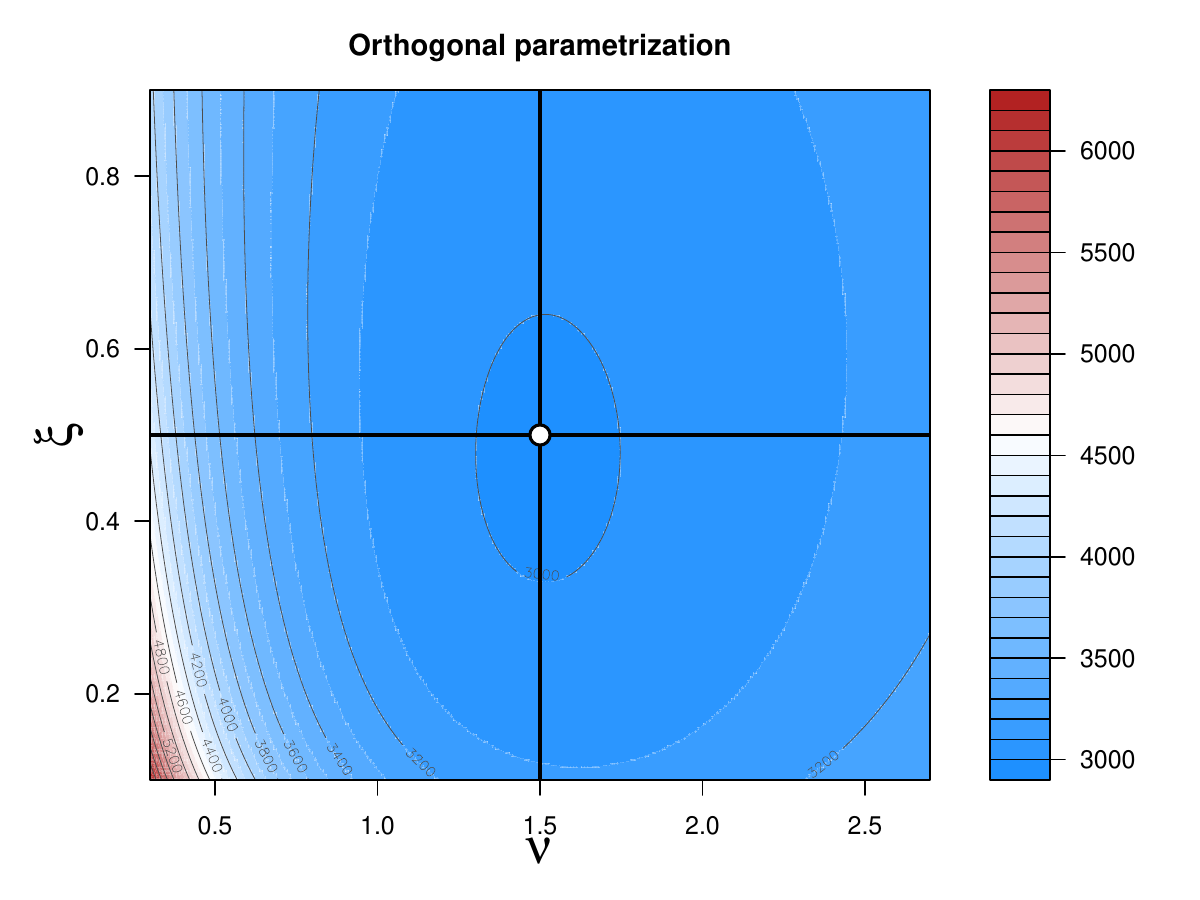}
    \caption{Impact of the reparametrization on level curves from the GP model.}
    \label{fig:curves}
\end{figure}

To investigate this phenomena further, we run the following experiment $200$ times. We simulate $n=2000$ data from the simple model $Y \sim \operatorname{GP}(1, \xi(X))$ with $X \sim \operatorname{Uniform}[0,1]$ and
\[
    \xi(x)
    =
    \begin{cases}
        0.3 &\text{if } x \in [0,0.3] \\
        0.2 &\text{if } x \in (0.3,0.7) \\
        0.1 &\text{if } x \in [0.7,1]
    \end{cases}.
\]
We then estimate the parameters with GBEX in the two different parametrization and keep track of the evolution of the correlation between the gradients as the number of boosting iterates increases. We use the same hyperparameters for each model: $500$ boosting iterates, trees with depth one for each parameter and fixed learning rates of $0.01$ for the $\sigma$ (or $\nu$) and $0.001$ for $\xi$. On Figure \ref{fig:corr-curves}, we display the evolution of the average (on the 200 experiments) correlation between the parameter gradients during the learning process. These figures clearly indicate that the reparametrization helps in de-correlating the gradients and thus improve training stability.
\begin{figure}[h]
    \centering
    \includegraphics[width=0.48\linewidth]{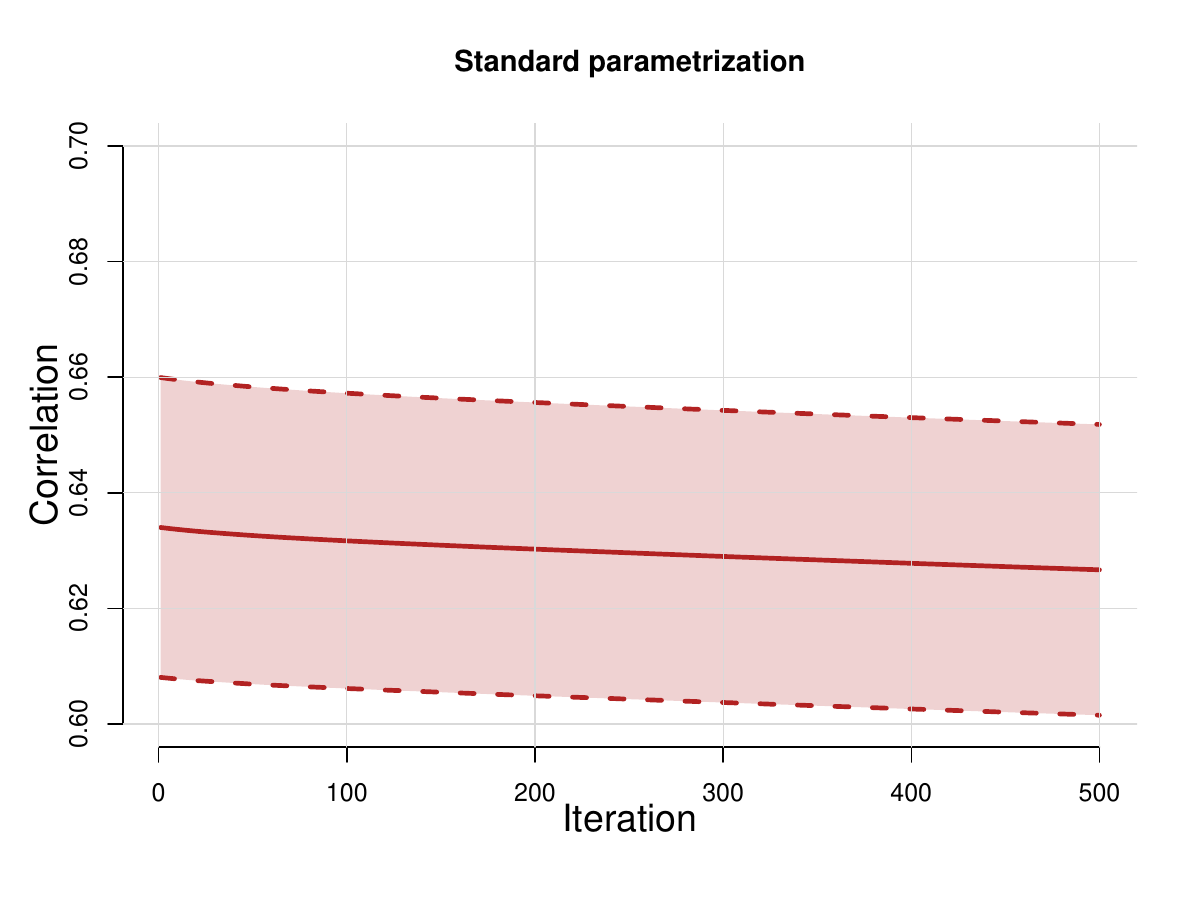}
    \includegraphics[width=0.48\linewidth]{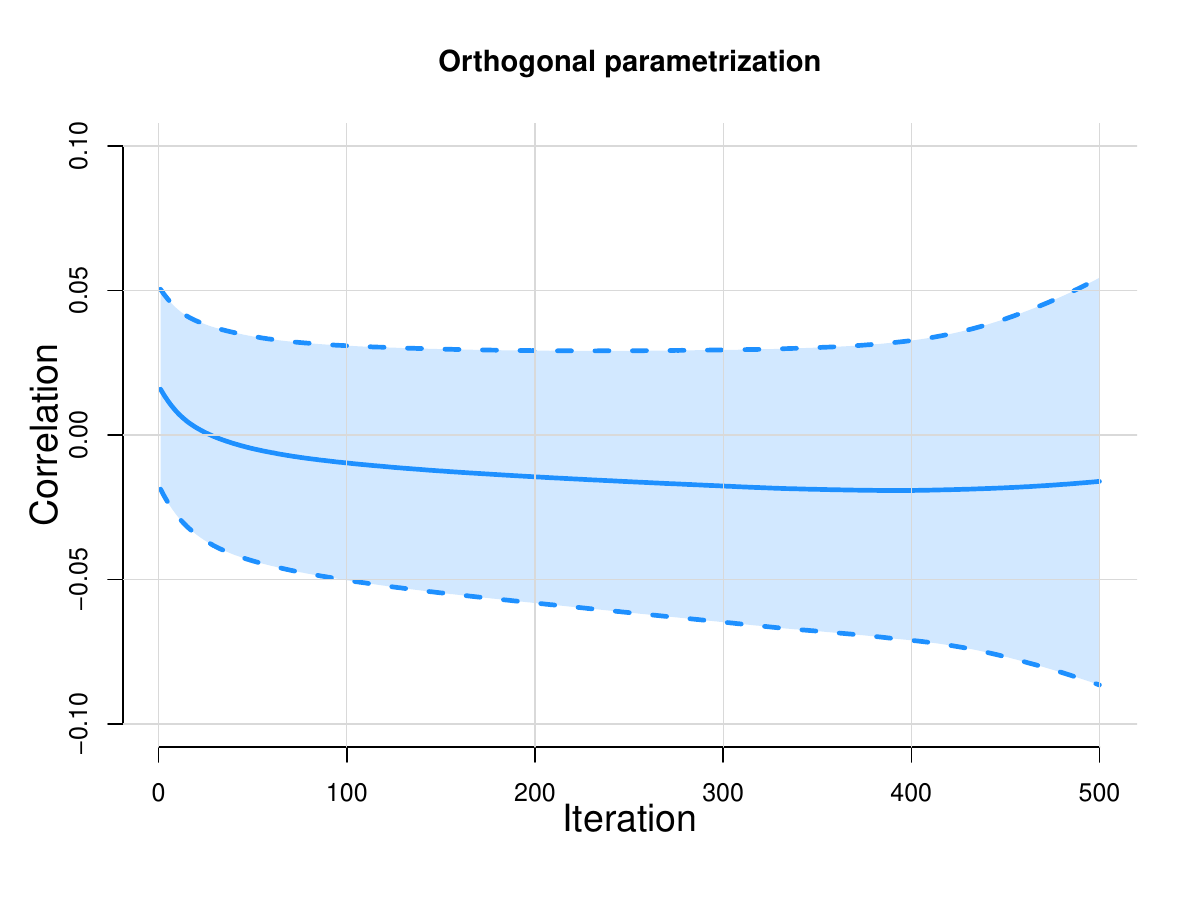}
    \caption{Mean correlation of the parameter gradients during training. Areas in light are obtained by adding and removing one standard deviation (obtained on the 200 experiments).}
    \label{fig:corr-curves}
\end{figure}

\subsection{Application: medical malpractice insurance}
\label{subsec:application}

\paragraph{Data presentation.}

To illustrate the estimation procedure, we consider a real dataset gathering closed claim data in medical malpractice insurance, available from the Texas Department of Insurance (TDI) on \url{https://dataverse.harvard.edu/dataverse/georgetownlaw}. The TDI requires all malpractice carriers to submit reports of commercial liability closed claims involving bodily injury settled under Texas law. Malpractice carriers are required by TDI to submit a detailed form for claims with settlements of $25 000 \$$ (nominal) or more and we focus on these claims. We consider the year 1990--2008 which means that we have access to $n = 18886$ closed medical professional liability claims. The dollar values have to be corrected for inflation on this large period of time and we follow the procedure of Section 4.2 in \citet{lopez2019censored} since the inflation of the amounts in medical malpractice data does not follow common consuming products.

Our goal is to analyze the tail of the final settlement costs based on the GP model, and its relation to known (or estimated) quantities early during the claim process so that the insurer is able, from those information, to properly assess the risk of the given claim. We rely on the following covariates.
\begin{itemize}
    \item The number of days before reporting the incident to the primary insurer.
    \item The number of days before settlement. 
    \item Information about the victim (age, employment status).
    \item Information about the kind of injury.
    \item Information about the policy (line of business insurance, policy limit).
    \item The initial reserve. It corresponds to a ``first guess'' by an expert at the beginning of the claim process which tries to assess the final cost of the claim. We will check whether this amount is indeed positively related to the heaviness of the claim.
\end{itemize}
Summary statistics of the settlement costs and the covariates are displayed in Table \ref{tab:summary_stats}. The monetary values for the quantitative variables are expressed in (inflation corrected) US dollars. For the qualitative variables, when too many levels are present, we only show the ones containing the largest part of the data.

\begin{table}[htbp]
  \centering
  \caption{Summary statistics of the TDI medical malpractice dataset 1990--2008}
  \label{tab:summary_stats}
  \footnotesize
  \begin{tabular}{@{}l l c c c c@{}}
    \toprule
    \textbf{Variable} & \textbf{Level} & \textbf{Min} & \textbf{Max} & \textbf{Median} & \textbf{Mean / N (\%)} \\
    \midrule
    \textit{Quantitative variables} & & & & & \textit{Mean} \\
    Settlement cost & -- & 19,472 & 20,379,160 & 169,826 & 424,596 \\
    Initial reserve & -- & 0 & 6,269,657 & 66,278 & 122,040 \\
    Policy limit & -- & 0 & 53,338,328 & 653,258 & 1,076,331 \\
    Victim age (in years)  & -- & 0.08 & 110 & 42.00 & 41.04 \\
    Days to settlement & -- & 7 & 10,332 & 1,204 & 1,359 \\
    Days to report & -- & 0 & 9,391 & 473 & 571.4 \\
    \midrule
    \textit{Qualitative variables} & & & & & \textit{N (\%)} \\
    Employment status & Employed & -- & -- & -- & 1,450 (41.3\%) \\
                      & Unemployed & -- & -- & -- & 2,061 (58.7\%) \\
    \addlinespace
    Injury type  & Death & -- & -- & -- & 6,428 (34.0\%) \\
                      & Brain damage     & -- & -- & -- & 1743 (9.0\%) \\
                      & Eye injury     & -- & -- & -- & 496 (3.0\%) \\
                      & Others              & -- & -- & -- & 10,198 (54\%) \\
    \addlinespace
    Activity when injured   & Surgical/medical care         & -- & -- & -- & 17,773 (94.0\%) \\
                      & Falls           & -- & -- & -- & 496 (3.0\%) \\
                      & Others             & -- & -- & -- & 617 (3.0\%) \\
    \addlinespace
    Business class   & Physicians and surgeons         & -- & -- & -- & 13,593 (72.0\%) \\
                      & Hospital           & -- & -- & -- & 2,927 (15.0\%) \\
                      & Others             & -- & -- & -- & 2,366 (13.0\%) \\
    \bottomrule
  \end{tabular}
\end{table}

To apply the gradient boosting approach, we first need to obtain the excesses above a threshold from which the GP model seems appropriate. Even though in the theory---and in \citet{velthoen2023gradient}---excesses are of the form $Z_i = Y_i - \hu(\X_i ; \tfrac{k_n}{n})$ for an estimated intermediate quantile function $\hu$, this method does not seem the most appropriate to us in practice since no visual plot is available to check that some form of stability is attained from that threshold in the estimation of the GP distribution. We therefore chose to rely on a different approach. Based on the GPtree method in \citet{farkas2021cyber,farkas2024gptree}, we work with a locally constant threshold for which we are able to, at least visually, inspect that the tail model seems reasonable.

\paragraph{Threshold selection.}

We start by choosing a constant threshold $u$ such that the fit of a GP distribution on the excesses
\[
    \lacc
        Z_i = Y_i - u : i \in [n] \text{ and } Y_i > u
    \racc
\]
is approximately stable. To do so, we rely a plot of $u \mapsto \xi(u)$ for different values of the threshold and a mean residual life plot. If the chosen threshold is acceptable, the first plot should exhibit some stability above the chosen value of $u$ and the second plot should exhibit linearity. The results are displayed in Figure~\ref{fig:tc-plots}.
\begin{figure}[h]
    \centering
    \includegraphics[width=0.48\linewidth]{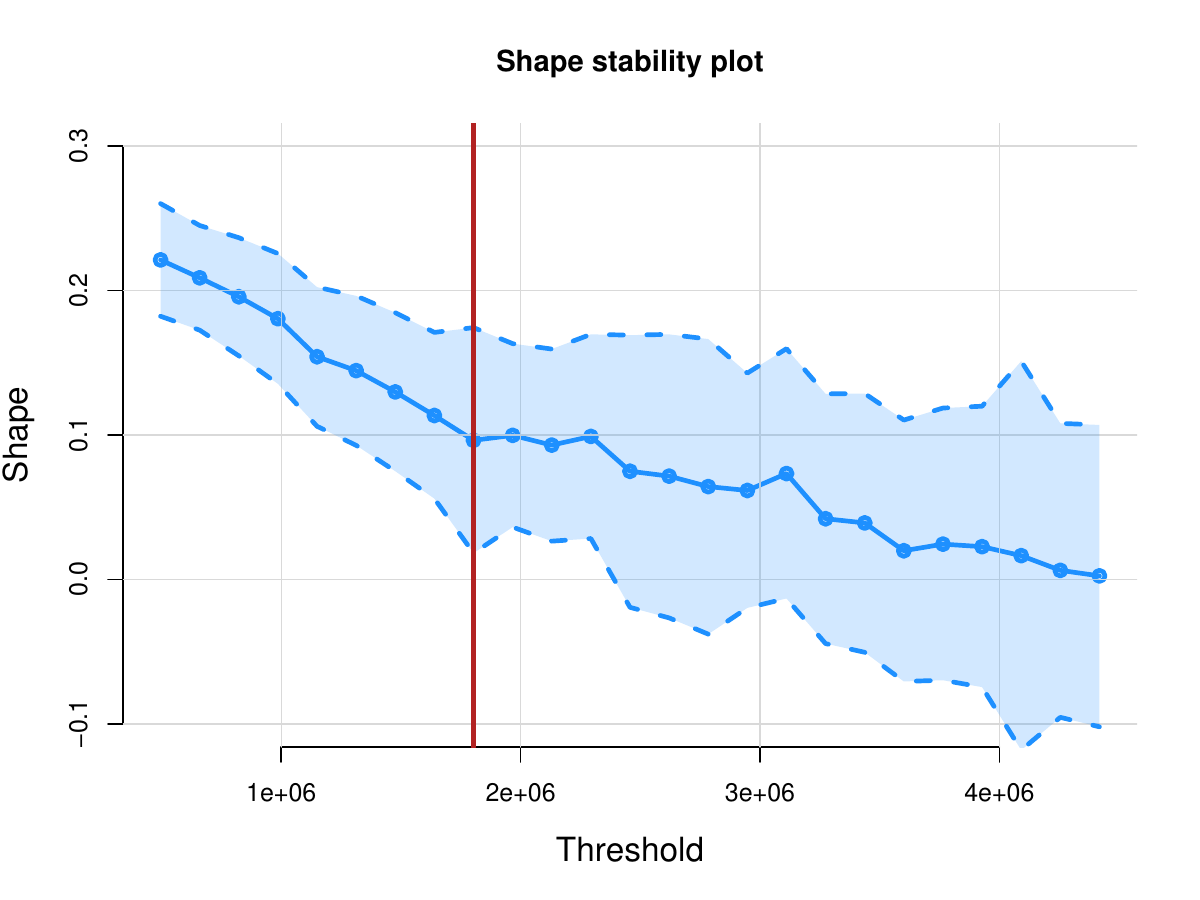}
    \includegraphics[width=0.48\linewidth]{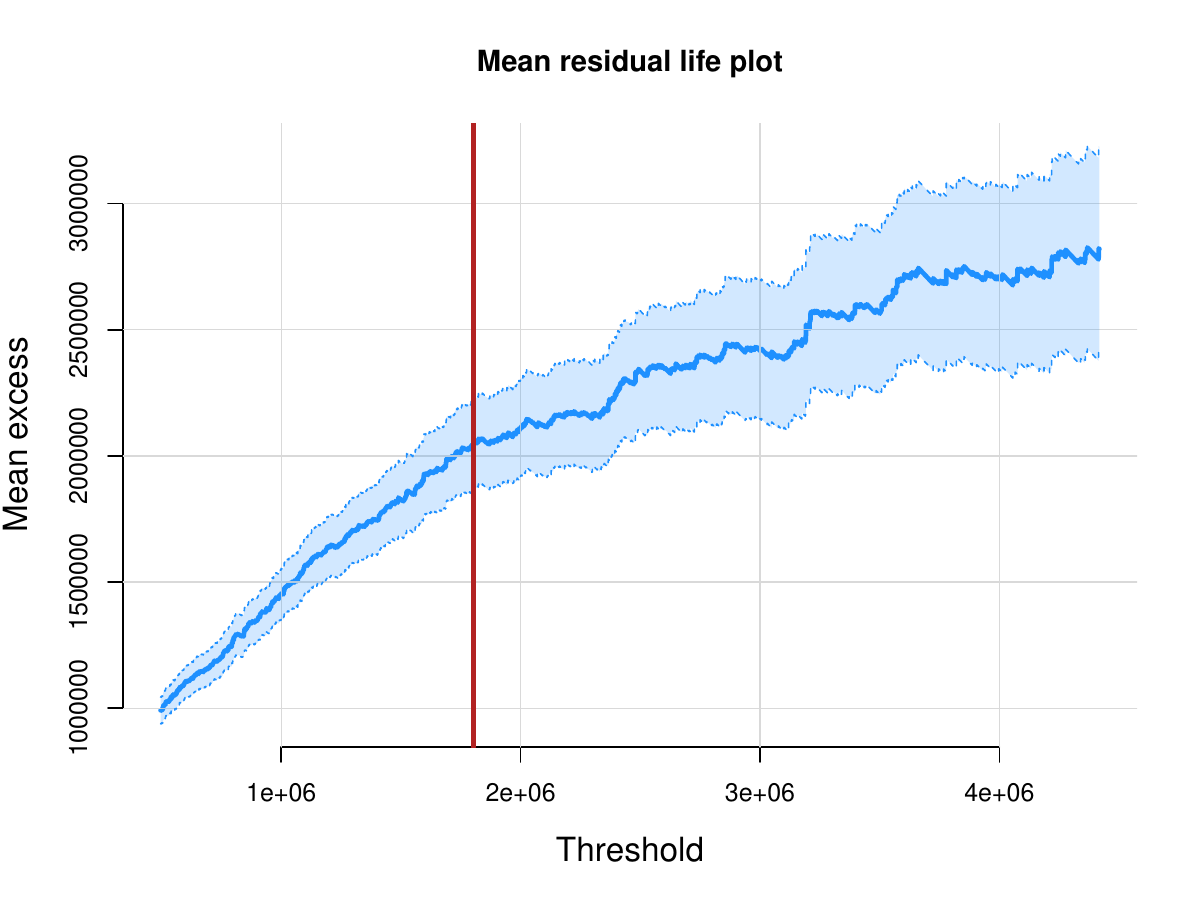}
    \caption{Threshold selection plots. Left: stability plot for the shape parameter $\xi$. Right: mean residual life plot. Vertical lines correspond to the chosen threshold $u=1 803 255.19 \$$.}
    \label{fig:tc-plots}
\end{figure}
We have $758$ excesses above the threshold $u$ which we will use in order to fit the locally constant threshold. 

The tree obtained by the GPtree method is displayed on Figure \ref{fig:gptree} and provides us with 6 regions of the space on which we try to find a threshold above which the GP model is reasonable. A stability procedure similar to what has been done previously leads to the thresholds displayed on Figure \ref{fig:thresholds}. The quantile-quantile plots for the fit of the GP distribution in each leaf of the tree are shown on Figure \ref{fig:qq-leafs} in Appendix \ref{app:figures}. Of course, theses fits are far from being perfect but they will be largely improved by the gradient boosting procedure that we will apply below.

\begin{figure}[h]
    \centering
    \includegraphics[width=0.75\linewidth]{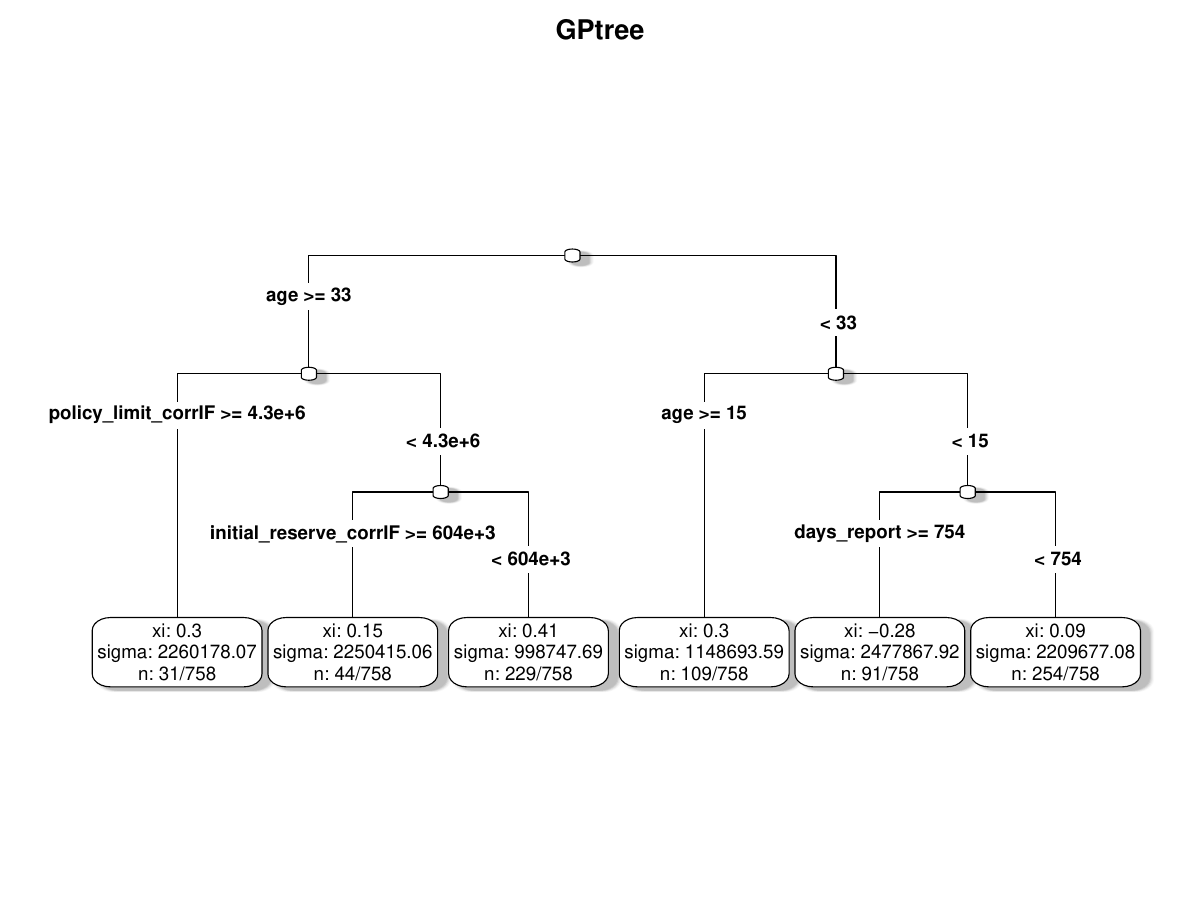}
    \caption{The leaves produced by the GPtree method which will form the basis in order to pick a locally constant threshold.}
    \label{fig:gptree}
\end{figure}
\begin{figure}[h!]
    \centering
    \includegraphics[width=0.75\linewidth]{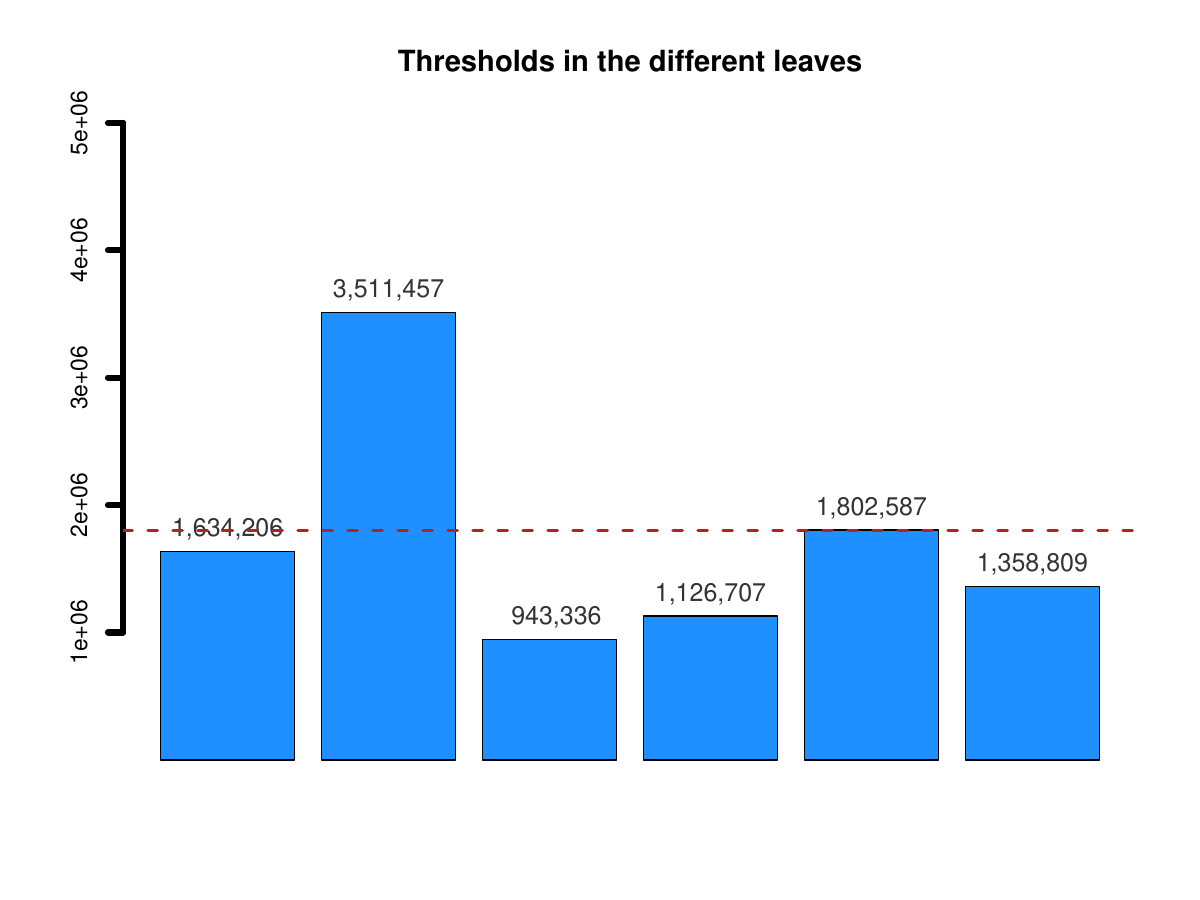}
    \caption{The different thresholds in each leaf. The order of the leaves is the same as on Figure \ref{fig:gptree}. Horizontal line corresponds to the initial constant threshold $u$.}
    \label{fig:thresholds}
\end{figure}

Note that refining the constant threshold to a locally constant one helps in allowing the use of more data points to fit the gradient boosting procedure as, on some leaves, a smaller threshold may be chosen, as seen on Figure \ref{fig:thresholds}. This is helpful as the gradient boosting algorithm may fail to converge if too little training data is provided. We now have $1417$ excesses above our locally constant threshold.

\paragraph{Gradient boosting results.}

The hyperparameters of the gradient boosting procedure are all chosen by cross-validation with $5$ folds and the procedure is repeated $5$ times. The obtained values are described in Table \ref{tab:hyperparams}.

\begin{table}[h]
\centering
\begin{tabular}{lcc}
\toprule
\textbf{Hyperparameter} & \textbf{$\nu$} & \textbf{$\xi$} \\
\midrule
Number of boosting iterations ($T$) & \multicolumn{2}{c}{71} \\
Tree depth & 3 & 3 \\
Learning rate & 0.02 & 0.002 \\
Minimum leaf size & 10 & 10 \\
\bottomrule
\end{tabular}
\caption{Boosting model hyperparameters.}
\label{tab:hyperparams}
\end{table}

To check if the obtained GP model provides a good fit for the tail of the settlement costs in the database, we rely on the GP quantile-quantile plot displayed on Figure \ref{fig:qq-final}.
\begin{figure}[h]
    \centering
    \includegraphics[width=0.75\linewidth]{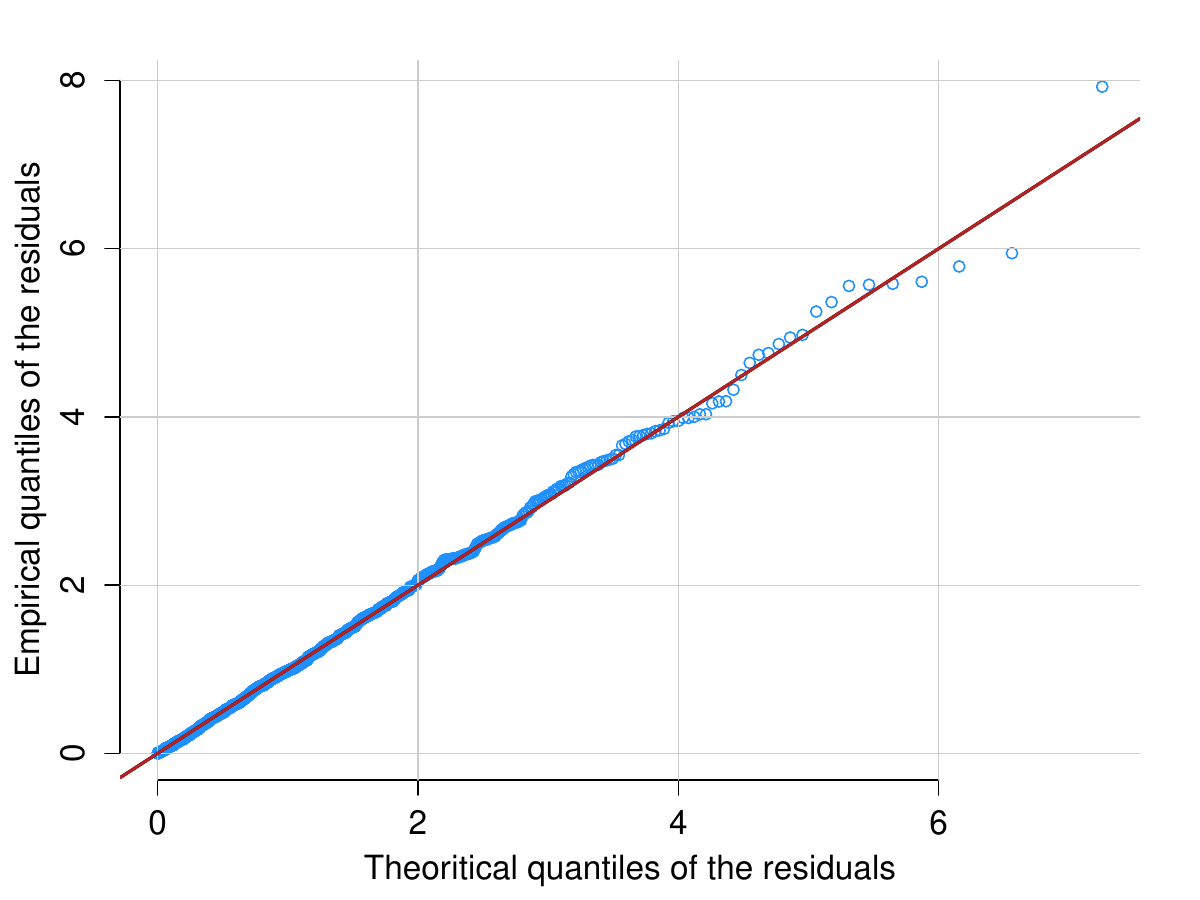}
    \caption{GP quantile-quantile plot associated with our final model.}
    \label{fig:qq-final}
\end{figure}
Given that the points fit the diagonal quite closely, we conclude that the GP model seems very appropriate in order to fit the tail of the settlement costs in the context of bodily injured claims in medical liability insurance. 

One feature of of primary importance for the insurance sector concerns the size of the possible extreme costs. If the estimated parameters $\xi(\x) > 1$, no mutualization is possible since the expected value of the associated claim distribution is infinite and the law of large number does not hold. In our case, as illustrated in Figure \ref{fig:shapes}, the estimated shape parameters in the database are all smaller than $0.3$ meaning that the extremes are relatively ``small'' and that standard insurance is indeed maintainable. Note that for some values of $\x$, we even obtain negative values, suggesting a finite upper-end-point for the associated settlement costs $Y \mid \X = \x$.
\begin{figure}[h!]
    \centering
    \includegraphics[width=0.75\linewidth]{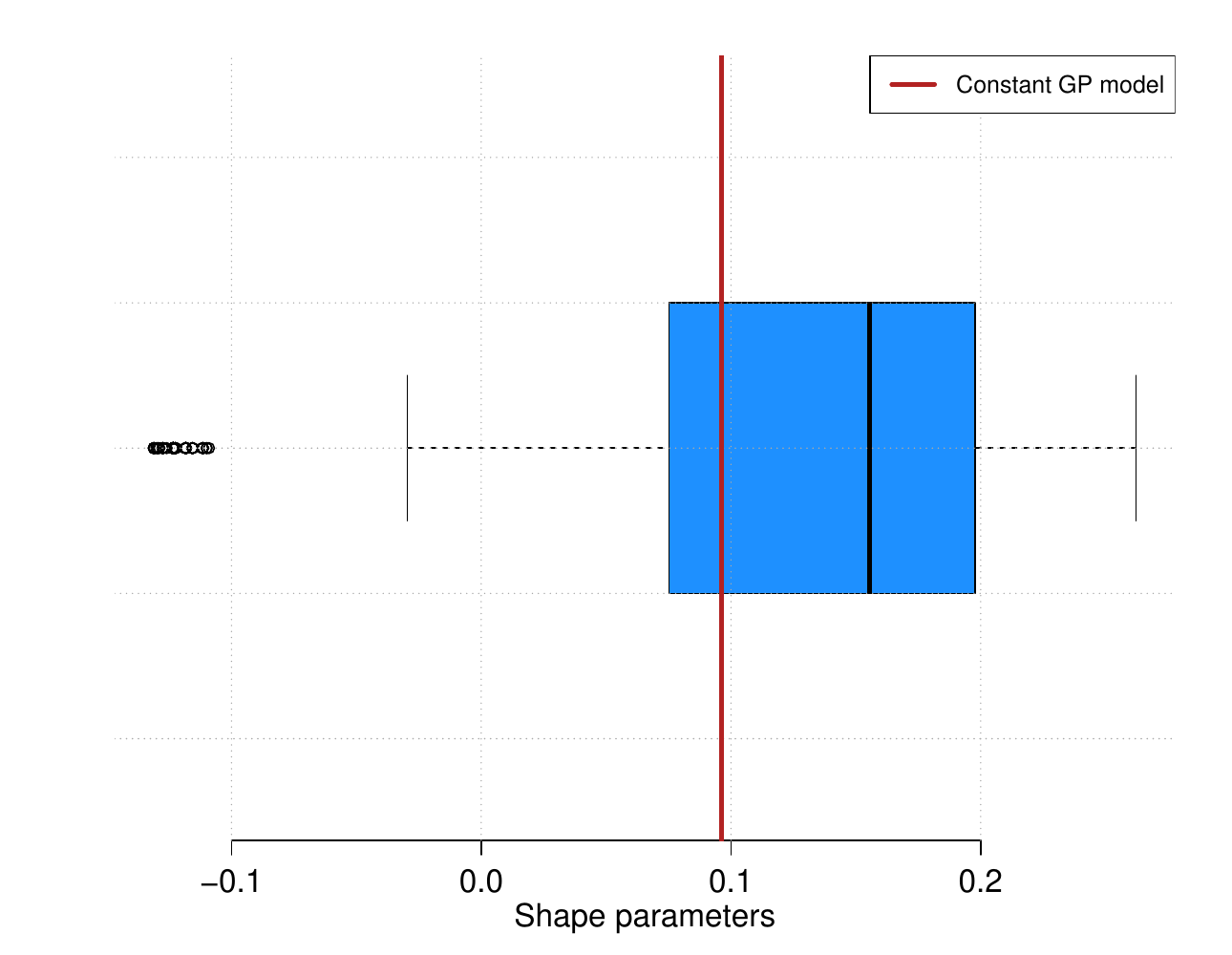}
    \caption{Boxplot of the estimated shape parameters $\xi(\x)$ in the data.}
    \label{fig:shapes}
\end{figure}

To understand which variables played the biggest role in order to assess the heaviness of the tail of the settlement costs, helpful tool  variable importance plot, as advised in \citet{velthoen2023gradient}. The relative importance plot is displayed in Figure \ref{fig:VIrel} The bars are scaled to 100 and the highest ones corresponds to the most influent covariates in the learning process. 
\begin{figure}[H]
    \centering
    \includegraphics[width=0.7\linewidth]{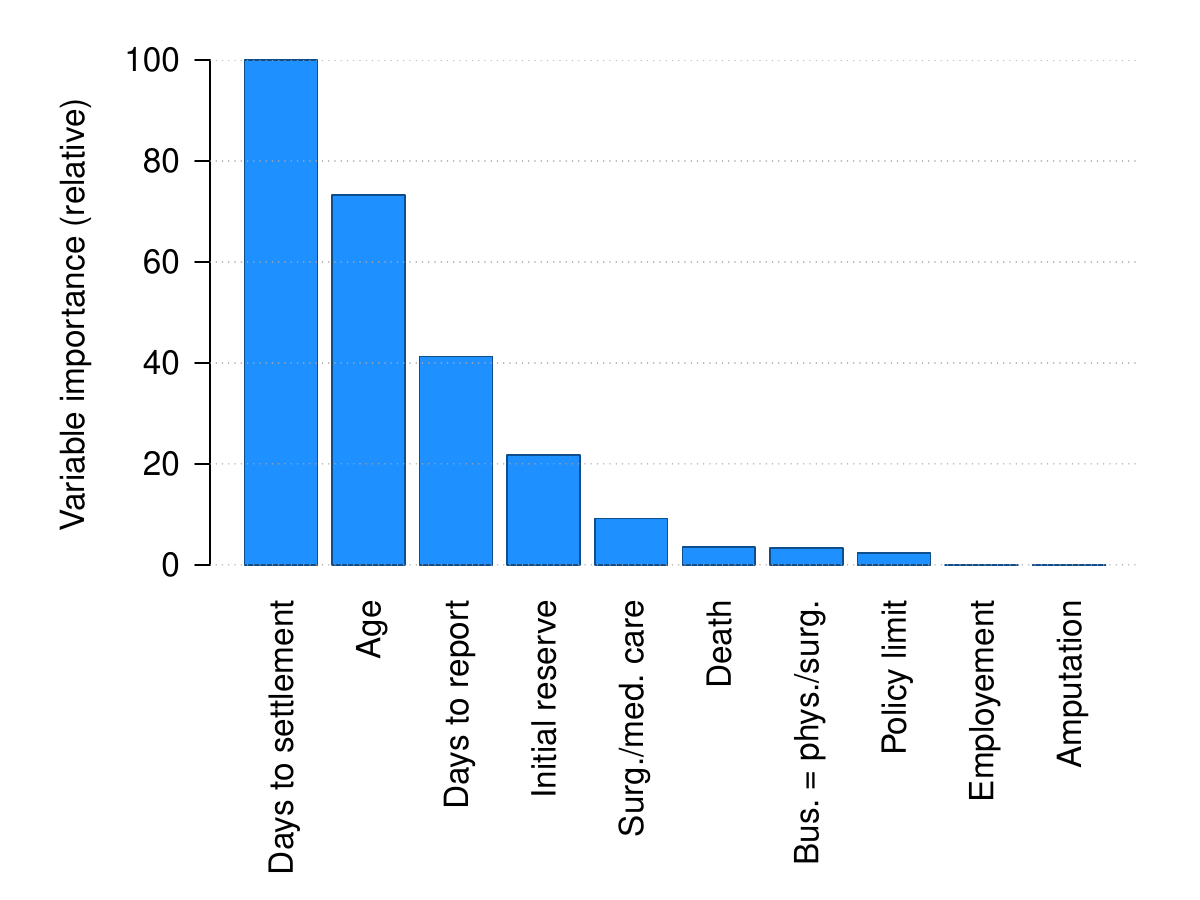}
    \caption{Relative variable importance plot (10 most important covariates).}
    \label{fig:VIrel}
\end{figure}
As expected from \citet{lopez2019censored}, the time to settlement plays the biggest role in determining the size of the tail of the settlement cost distribution. The age of the victim as well as the number of days between the incident and its reporting to the insurer also appear to be determinant factors. The initial reserve predicted by the expert is also an important factor. Whether the victim died from the injury also has an impact, as one may expect the settlement cost to be lower than in cases of long-term injuries.

To investigate further the dependence between these covariates and the settlement cost of the claims, we consider partial dependence plots. The results are displayed on Figure \ref{fig:pd-plots}.

\begin{figure}[ht!]
    \centering
    \includegraphics[width=0.49\linewidth]{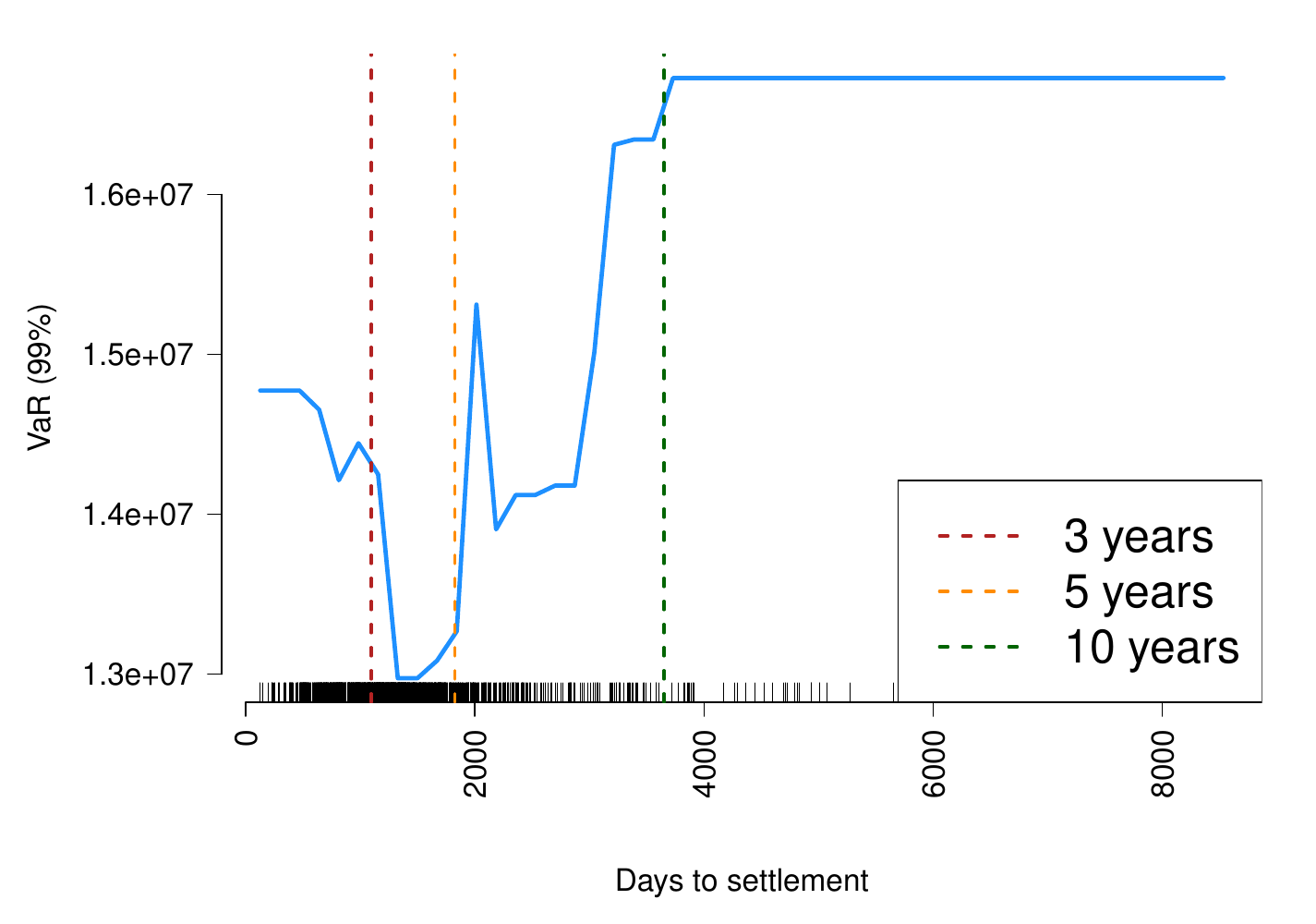}
    \includegraphics[width=0.49\linewidth]{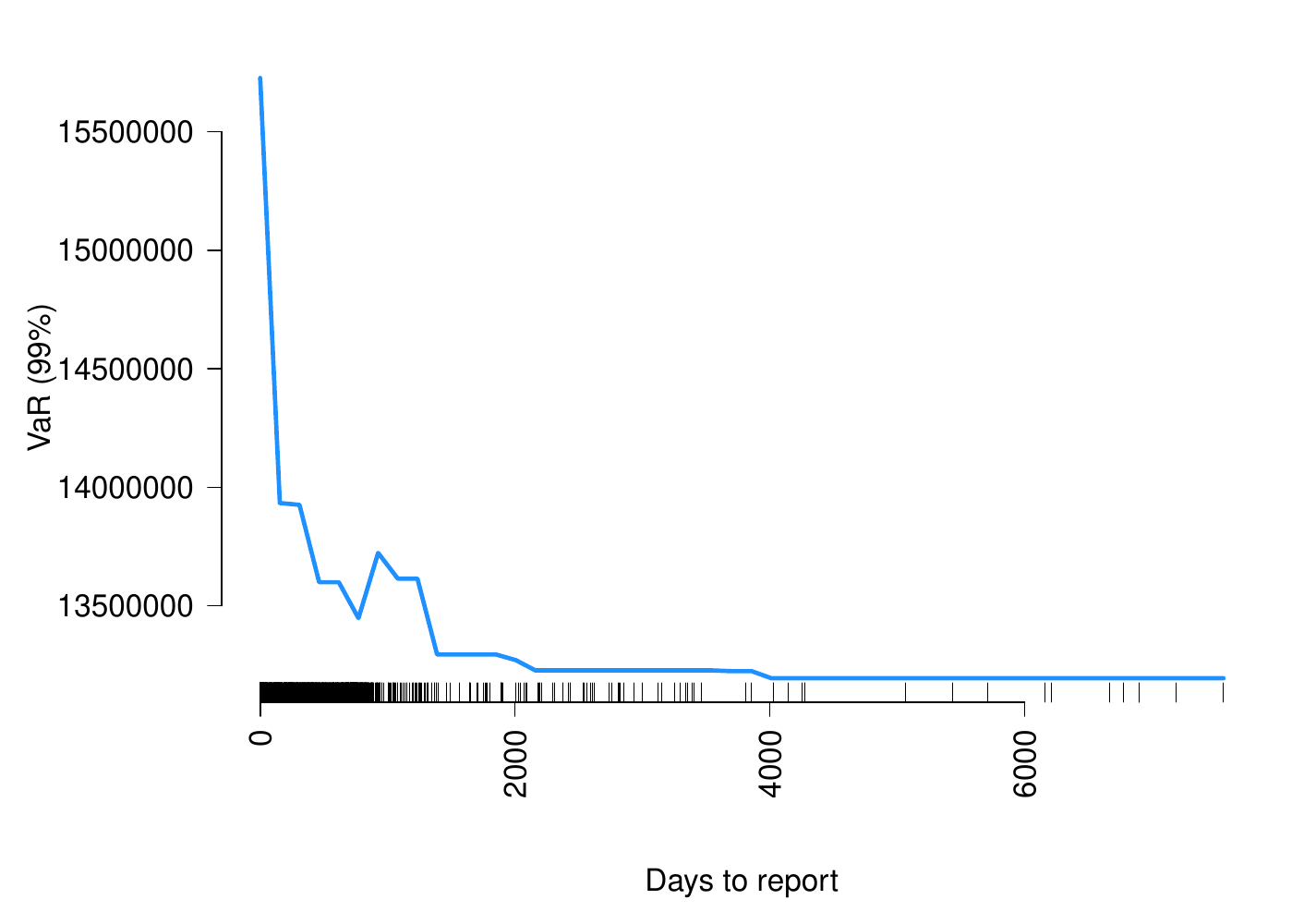}
    \includegraphics[width=0.49\linewidth]{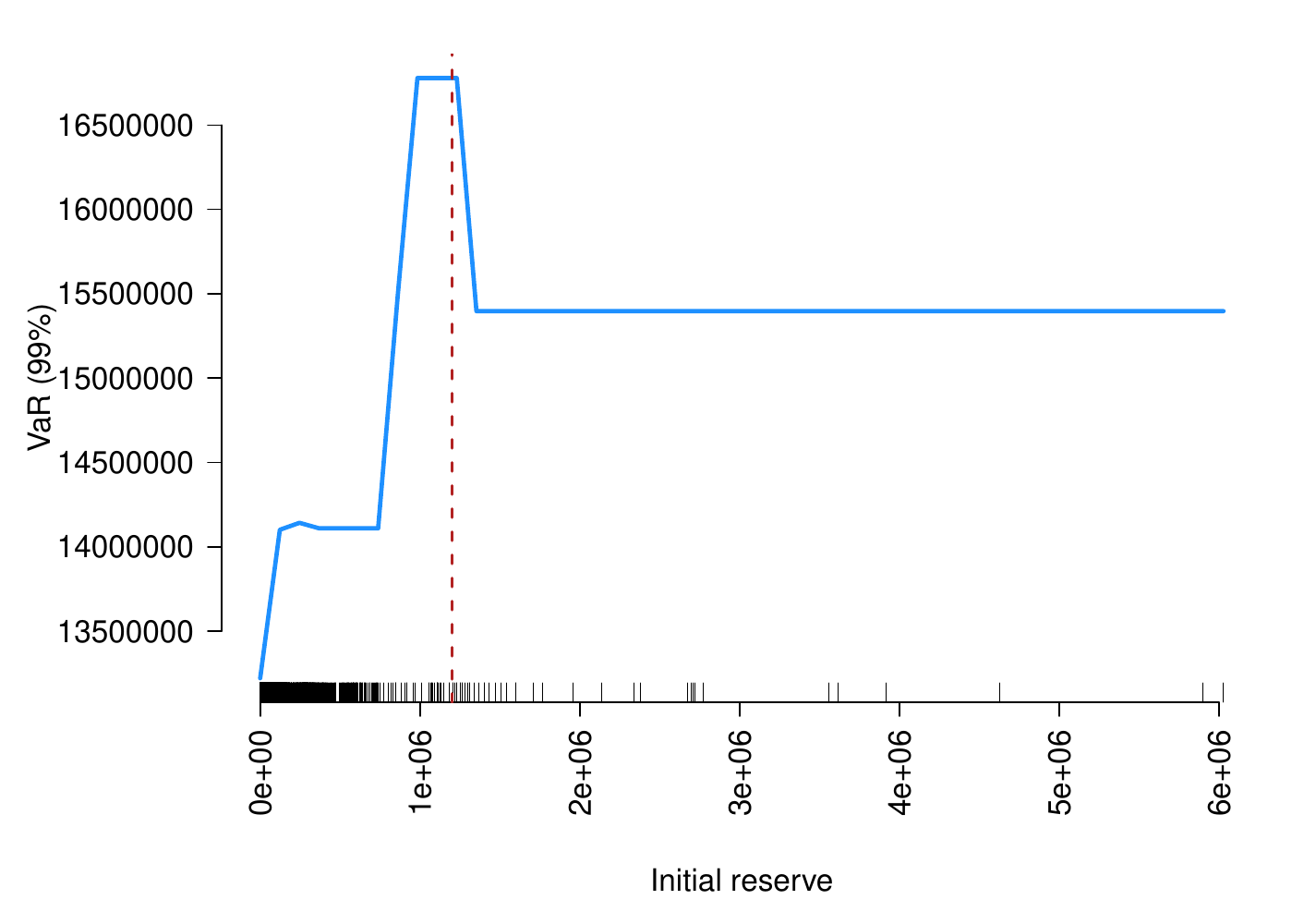}
    \includegraphics[width=0.49\linewidth]{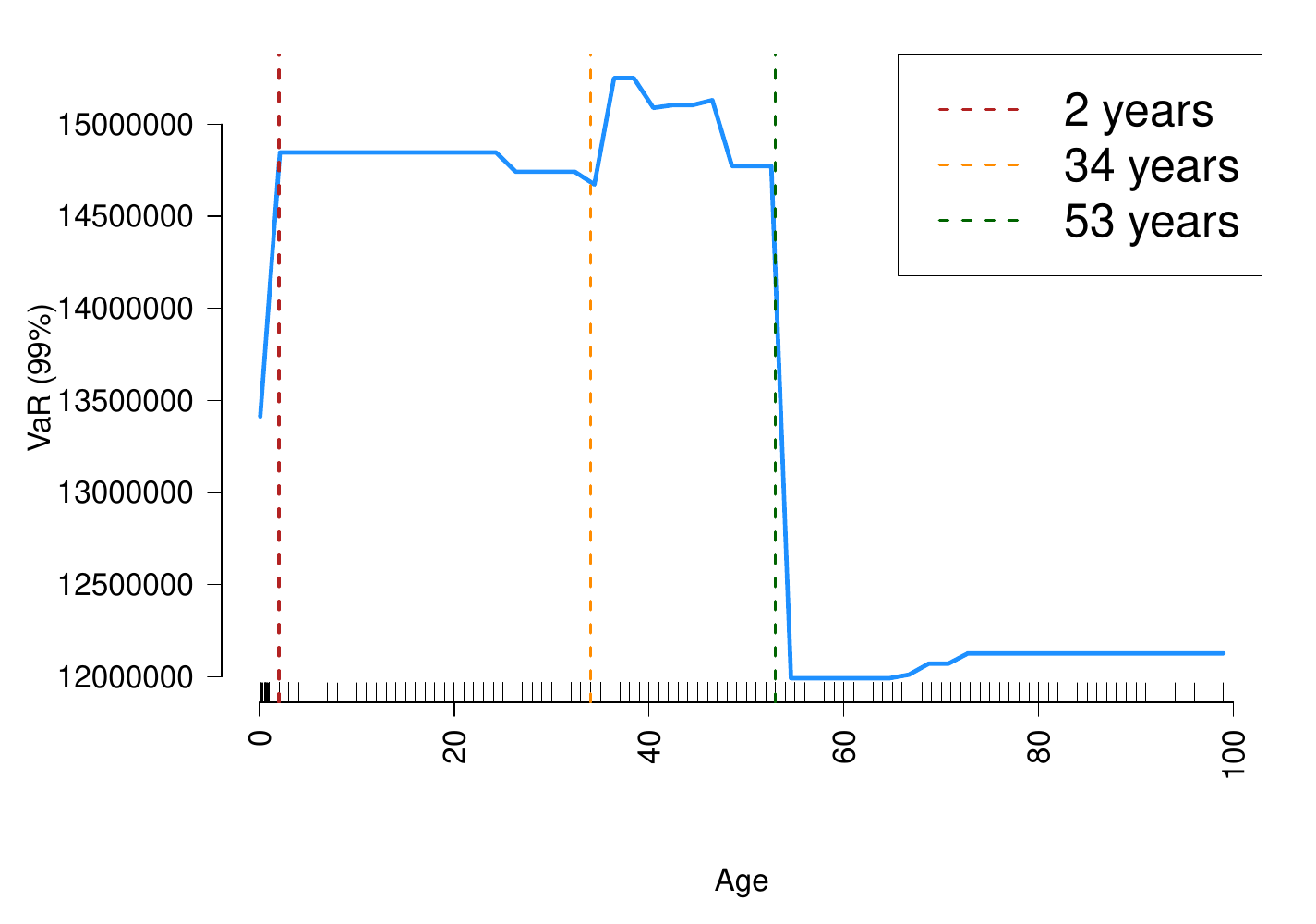}
    \includegraphics[width=0.49\linewidth]{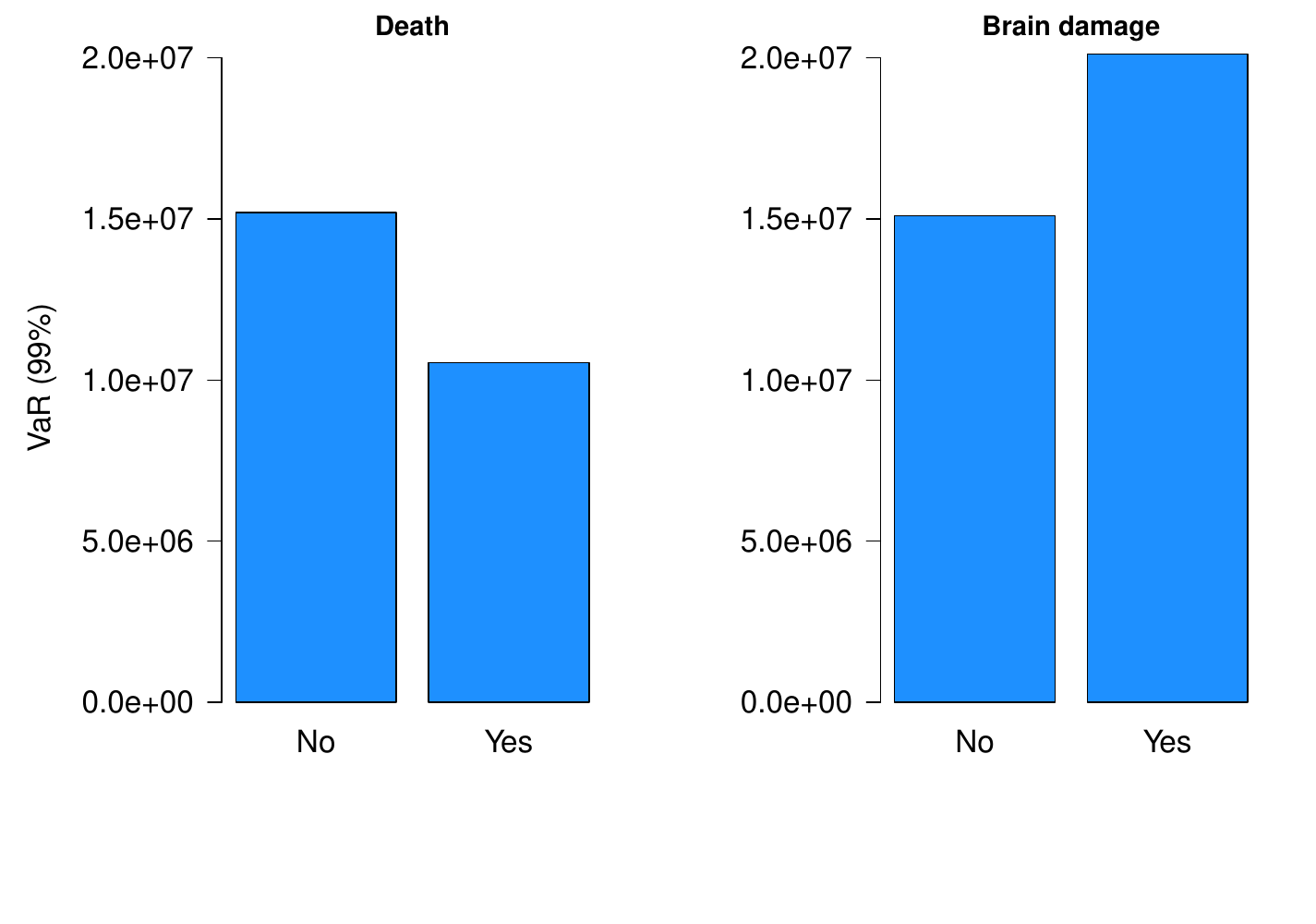}
    \caption{Partial dependence plots}
    \label{fig:pd-plots}
\end{figure}

Several interesting observations can be made. The relationship between the time to settlement and the tail of the settlement cost is non-monotonic. A possible explanation could be that cases settled within the first three years tend to involve minor injuries with clear liability, resulting in moderate and predictable amounts. The drop observed around three years likely reflects a selection effect, whereby cases settled just before trial are subject to compromise under pressure from both parties seeking to avoid litigation costs (consistent with \cite{Farber}). The subsequent increase between five and ten years captures more complex and contested cases, where severe sequelae, difficult causality, and accumulating legal costs drive up the final amount. Finally, cases exceeding ten years correspond to catastrophic injuries which are rare but extremely costly. 
The relationship with the number of days before reporting to the insurer is clearly decreasing, indicating that the riskiest claims are typically reported very quickly, while claims whose reporting tends to be delayed may correspond to less severe injuries or situations where liability is less clearly established, resulting in lower settlement costs.
The initial reserve is essentially positively related to the riskiness of the claim showing a good understanding of the risk by the experts. Above $1.2$ millions US dollars we observe a slight decrease but this is not really significant as less than $3\%$ of the excesses used to fit the gradient boosting model exhibit such a high reserve.
The relationship between the victim's age at the time of injury and the 99\% VaR follows an economically intuitive pattern. The risk level increases linearly up to age 3, then stabilizes at an elevated level until age 34, reflecting a long remaining life expectancy over which future losses accumulate. It further increases between ages 34 and 53, where lost earnings are at their highest, before decreasing beyond age 53 as life expectancy shortens and future income losses diminish.
Regarding the nature of the injury, as expected, death results in a bounded settlement distribution with a lower tail risk, whereas severe injuries such as brain damage are associated with substantially higher settlement costs, reflecting the long-term care and income loss that such conditions entail.

\section{Conclusion}
\label{sec:ccl}

We provide empirical risk minimization theory for the gradient boosting algorithm applied to estimating a covariate-dependent GP model. In particular, we discuss multiple kinds of errors associated with this procedure, notably the bias arising from the fact that the GP distribution is merely an asymptotic model for excesses above a high threshold, and we show how this error can be controlled under second-order regular variation. Our bounds also make explicit the other bias–variance trade-off associated with the choice of the number of boosting iterates $T$. Unlike \citet{biau2021optimization}, we are not able to provide an algorithmic analysis of the procedure as $T \to \infty$ since, even in the new parametrization, the GP loss is not globally convex. However, we show that statistical guarantees may be obtained under less restrictive conditions.

Understanding the tail of claim distributions (conditionally on covariate information) is key for the insurance industry in order to properly assess risk. We illustrate the usefulness of the gradient boosting method applied to the GP model in this context using medical liability insurance data provided by the Texas Department of Insurance. The results exhibit a good fit and shed light on the relationship between covariates and the heaviness of the claim distribution. In particular, we illustrate once again that the number of days to settlement is a key variable for understanding the claim size distribution, and that its estimation is of primary importance for the insurer \citep{lopez2019censored}.
Many other regression algorithms are available for the problem we study---generalized additive models \citep{chavez2016gam}, random forests \citep{gnecco2024ERF}, and neural networks \citep{pasche2024neural}, among others---and, depending on their needs, risk professionals should compare the performance of or combine several of these models to achieve the accuracy they seek.

\bibliography{references.bib}

\begin{thebibliography}{}

\bibitem[Balkema and De~Haan, 1974]{balkema1974residual}
Balkema, A.~A. and De~Haan, L. (1974).
\newblock Residual life time at great age.
\newblock {\em The Annals of probability}, pages 792--804.

\bibitem[Beirlant and Goegebeur, 2004]{beirlant2004local}
Beirlant, J. and Goegebeur, Y. (2004).
\newblock Local polynomial maximum likelihood estimation for pareto-type
  distributions.
\newblock {\em Journal of Multivariate Analysis}, 89(1):97--118.

\bibitem[Beirlant et~al., 2006]{beirlant2006statistics}
Beirlant, J., Goegebeur, Y., Segers, J., and Teugels, J.~L. (2006).
\newblock {\em Statistics of extremes: theory and applications}.
\newblock John Wiley \& Sons.

\bibitem[Biau and Cadre, 2021]{biau2021optimization}
Biau, G. and Cadre, B. (2021).
\newblock Optimization by gradient boosting.
\newblock In {\em Advances in contemporary statistics and econometrics.
  Festschrift in honor of Christine Thomas-Agnan}, pages 23--44. Cham:
  Springer.

\bibitem[Borak et~al., 2011]{borak2011models}
Borak, S., Misiorek, A., and Weron, R. (2011).
\newblock Models for heavy-tailed asset returns.
\newblock In {\em Statistical tools for finance and insurance}, pages 21--55.
  Springer.

\bibitem[Boucheron et~al., 2013]{boucheron2013concentration}
Boucheron, S., Lugosi, G., and Massart, P. (2013).
\newblock {\em Concentration inequalities. {A} nonasymptotic theory of
  independence}.
\newblock Oxford: Oxford University Press.

\bibitem[Chavez-Demoulin and Davison, 2005]{chavez2005generalized}
Chavez-Demoulin, V. and Davison, A.~C. (2005).
\newblock Generalized additive modelling of sample extremes.
\newblock {\em Journal of the Royal Statistical Society Series C: Applied
  Statistics}, 54(1):207--222.

\bibitem[Chavez-Demoulin et~al., 2016]{chavez2016gam}
Chavez-Demoulin, V., Embrechts, P., and Hofert, M. (2016).
\newblock An extreme value approach for modeling operational risk losses
  depending on covariates.
\newblock {\em The Journal of Risk and Insurance}, 83(3):735--776.

\bibitem[Cox and Reid, 1987]{cox1987parameter}
Cox, D.~R. and Reid, N. (1987).
\newblock Parameter orthogonality and approximate conditional inference.
\newblock {\em Journal of the Royal Statistical Society: Series B
  (Methodological)}, 49(1):1--18.

\bibitem[Davison and Smith, 1990]{davison1990models}
Davison, A.~C. and Smith, R.~L. (1990).
\newblock Models for exceedances over high thresholds.
\newblock {\em Journal of the Royal Statistical Society Series B: Statistical
  Methodology}, 52(3):393--425.

\bibitem[de~Haan and Ferreira, 2006]{deHaanF2006EVT}
de~Haan, L. and Ferreira, A. (2006).
\newblock {\em Extreme value theory. {An} introduction.}
\newblock Springer Ser. Oper. Res. Financ. Eng. New York, NY: Springer.

\bibitem[Denuit et~al., 2019]{denuit2019effective}
Denuit, M., Hainaut, D., and Trufin, J. (2019).
\newblock {\em Effective statistical learning methods for actuaries {I}. {GLMs}
  and extensions}.
\newblock Springer Actuar. Cham: Springer.

\bibitem[Einmahl and Mason, 2005]{einmahl2005uniform}
Einmahl, U. and Mason, D.~M. (2005).
\newblock Uniform in bandwidth consistency of kernel-type function estimators.
\newblock {\em Ann. Stat.}, 33(3):1380--1403.

\bibitem[Embrechts et~al., 2013]{embrechts2013modelling}
Embrechts, P., Kl{\"u}ppelberg, C., and Mikosch, T. (2013).
\newblock {\em Modelling extremal events: for insurance and finance},
  volume~33.
\newblock Springer Science \& Business Media.

\bibitem[Farber and White, 1991]{Farber}
Farber, H. and White, M. (1991).
\newblock Medical malpractice: an empirical examination of the litigation
  process.
\newblock {\em The Rand journal of economics}, 22(2):199--217.

\bibitem[Farkas et~al., 2024]{farkas2024gptree}
Farkas, S., Heranval, A., Lopez, O., and Thomas, M. (2024).
\newblock Generalized {Pareto} regression trees for extreme event analysis.
\newblock {\em Extremes}, 27(3):437--477.

\bibitem[Farkas et~al., 2021]{farkas2021cyber}
Farkas, S., Lopez, O., and Thomas, M. (2021).
\newblock Cyber claim analysis using generalized pareto regression trees with
  applications to insurance.
\newblock {\em Insurance: Mathematics and Economics}, 98:92--105.

\bibitem[Friedman, 2001]{friedman2001greedy}
Friedman, J.~H. (2001).
\newblock Greedy function approximation: {A} gradient boosting machine.
\newblock {\em Ann. Stat.}, 29(5):1189--1232.

\bibitem[Gimeno et~al., 2022]{gimeno2022extreme}
Gimeno, L., Sor{\'\i}, R., Vazquez, M., Stojanovic, M., Algarra, I.,
  Eiras-Barca, J., Gimeno-Sotelo, L., and Nieto, R. (2022).
\newblock Extreme precipitation events.
\newblock {\em Wiley Interdisciplinary Reviews: Water}, 9(6):e1611.

\bibitem[Gnecco et~al., 2024]{gnecco2024ERF}
Gnecco, N., Terefe, E.~M., and Engelke, S. (2024).
\newblock Extremal random forests.
\newblock {\em J. Am. Stat. Assoc.}, 119(548):3059--3072.

\bibitem[Goldie and Smith, 1987]{goldie1987slow}
Goldie, C.~M. and Smith, R.~L. (1987).
\newblock Slow variation with remainder: Theory and applications.
\newblock {\em Quarterly Journal of Mathematics}, 38:45--71.

\bibitem[Hill, 1975]{hill1975simple}
Hill, B.~M. (1975).
\newblock A simple general approach to inference about the tail of a
  distribution.
\newblock {\em The annals of statistics}, pages 1163--1174.

\bibitem[Leboeuf et~al., 2020]{leboeuf2020trees}
Leboeuf, J.-S., LeBlanc, F., and Marchand, M. (2020).
\newblock Decision trees as partitioning machines to characterize their
  generalization properties.
\newblock In Larochelle, H., Ranzato, M., Hadsell, R., Balcan, M., and Lin, H.,
  editors, {\em Advances in Neural Information Processing Systems}, volume~33,
  pages 18135--18145. Curran Associates, Inc.

\bibitem[Lhaut, 2021]{lhaut2021thesis}
Lhaut, S. (2021).
\newblock Inégalités de concentration pour évènements rares.
\newblock Master's thesis, UCLouvain.

\bibitem[Lhaut et~al., 2022]{lhaut2022uniform}
Lhaut, S., Sabourin, A., and Segers, J. (2022).
\newblock Uniform concentration bounds for frequencies of rare events.
\newblock {\em Stat. Probab. Lett.}, 189:7.
\newblock Id/No 109610.

\bibitem[Lopez, 2019]{lopez2019censored}
Lopez, O. (2019).
\newblock A censored copula model for micro-level claim reserving.
\newblock {\em Insurance: Mathematics and Economics}, 87:1--14.

\bibitem[Maillart, 2021]{maillart2021}
Maillart, A. (2021).
\newblock {\em Quelques méthodes d’explicabilité pour les modèles
  d’apprentissage statistique en actuariat}.
\newblock PhD thesis, Université Claude Bernard Lyon 1.

\bibitem[Maillart and Sornette, 2010]{maillart2010heavy}
Maillart, T. and Sornette, D. (2010).
\newblock Heavy-tailed distribution of cyber-risks.
\newblock {\em The European Physical Journal B-Condensed Matter and Complex
  Systems}, 75(3):357--365.

\bibitem[Mason et~al., 1999]{mason1999boosting}
Mason, L., Baxter, J., Bartlett, P., and Frean, M. (1999).
\newblock Boosting algorithms as gradient descent.
\newblock In Solla, S., Leen, T., and M\"{u}ller, K., editors, {\em Advances in
  Neural Information Processing Systems}, volume~12. MIT Press.

\bibitem[Mason et~al., 2000]{mason2000functional}
Mason, L., Baxter, J., Bartlett, P., and Frean, M. (2000).
\newblock Functional gradient techniques for combining hypotheses.
\newblock {\em Advances in Large Margin Classifiers}.

\bibitem[McDiarmid, 1998]{mcdiarmid1998concentration}
McDiarmid, C. (1998).
\newblock Concentration.
\newblock In {\em Probabilistic methods for algorithmic discrete mathematics},
  pages 195--248. Springer.

\bibitem[Mohri et~al., 2018]{mohri2018foundations}
Mohri, M., Rostamizadeh, A., and Talwalkar, A. (2018).
\newblock {\em Foundations of machine learning}.
\newblock Adapt. Comput. Mach. Learn. Cambridge, MA: MIT Press, 2nd edition
  edition.

\bibitem[Moins et~al., 2023]{moins2023reparam}
Moins, T., Arbel, J., Girard, S., and Dutfoy, A. (2023).
\newblock Reparameterization of extreme value framework for improved bayesian
  workflow.
\newblock {\em Computational Statistics \& Data Analysis}, 187:107807.

\bibitem[Pasche and Engelke, 2024]{pasche2024neural}
Pasche, O.~C. and Engelke, S. (2024).
\newblock Neural networks for extreme quantile regression with an application
  to forecasting of flood risk.
\newblock {\em The Annals of Applied Statistics}, 18(4):2818--2839.

\bibitem[Pickands~III, 1975]{pickands1975statistical}
Pickands~III, J. (1975).
\newblock Statistical inference using extreme order statistics.
\newblock {\em the Annals of Statistics}, pages 119--131.

\bibitem[Resnick, 1987]{resnick1987extreme}
Resnick, S.~I. (1987).
\newblock {\em Extreme values, regular variation and point processes}.
\newblock Springer New York, NY.

\bibitem[Resnick, 2007]{resnick2007heavy}
Resnick, S.~I. (2007).
\newblock {\em Heavy-tail phenomena: probabilistic and statistical modeling}.
\newblock Springer New York, NY.

\bibitem[van~der Vaart and Wellner, 2023]{vdVW1996weak}
van~der Vaart, A.~W. and Wellner, J.~A. (2023).
\newblock {\em Weak convergence and empirical processes. {With} applications to
  statistics}.
\newblock Springer Ser. Stat. Cham: Springer, 2nd expanded edition edition.

\bibitem[Vapnik, 1991]{vapnik1991principles}
Vapnik, V. (1991).
\newblock Principles of risk minimization for learning theory.
\newblock {\em Advances in neural information processing systems}, 4.

\bibitem[Velthoen et~al., 2023]{velthoen2023gradient}
Velthoen, J., Dombry, C., Cai, J.-J., and Engelke, S. (2023).
\newblock Gradient boosting for extreme quantile regression.
\newblock {\em Extremes}, 26(4):639--667.

\bibitem[Wang and Li, 2013]{wang2013estimation}
Wang, H.~J. and Li, D. (2013).
\newblock Estimation of extreme conditional quantiles through power
  transformation.
\newblock {\em Journal of the American Statistical Association},
  108(503):1062--1074.

\end{thebibliography}

\appendix

\section{Notation}
\label{app:notation}

The following notation is used through the paper.
\begin{itemize}
	\item $\R$ – set of real numbers ; $\R_+ = (0,\infty)$ - set of strictly positive real numbers ; $\N = \{0,1,2,\ldots\}$ - set of non-negative integers ; $\N_0 = \{1,2,3,\ldots\}$ - set of strictly positive integers.
	\item $\borel(\R^d)$ – Borel $\sigma$-algebra on $\R^d$ for $d \in \N_0$.
	\item $[a:b] = \{n \in \N: a \leq n \leq b\}$  - set of integers between $a,b \in \N$. We set $[d] = [1:d]$ for any $d \in \N_0$.
	\item $(\Omega,\cA,\P)$ – common probability space on which every random variable is defined.
	\item $P_{\X} = \P \circ \X^{-1}$ - law of  a random vector $\X : (\Omega,\cA) \to (\R^d, \borel(\R^d))$.
	\item $P_{Y \mid \X}$ – conditional probability distribution (in the sense of Markov kernels)  of a real-valued random variable $Y : (\Omega,\cA) \to (\R, \borel(\R))$ given a random vector $\X : (\Omega,\cA) \to (\R^d, \borel(\R^d)$. For fixed $\x \in \R^d$, we denote by $\P_{Y \mid \X = \x}(\cdot)$ the probability measure  $P_{Y \mid \X}(\cdot,\x)$. Informally, we will sometimes use the notation $Y \mid \X = \x$ to denote a random variable whose law equals $\P_{Y \mid \X = \x}(\cdot)$.
	\item $\E[\X] = \lp \int_{\Omega} X^{(j)}(\omega) \; \diff\P(\omega) = \int_{\R} x \diff\P_{X^{(j)}}(x): j \in [d] \rp$ – expectation of a random vector $\X : (\Omega,\cA) \to (\R^d, \borel(\R^d))$ with $d \in \N_0$. Similarly, given a random variable $Y : (\Omega,\cA) \to (\R, \borel(\R))$, we set $\E[Y \mid \X = \x] = \int_{\R} y \diff\P_{Y \mid \X = \x}(y)$.
	\item $\bF_{Y}(y) = \P(Y > y)$ – survival function of a random variable $Y : (\Omega,\cA) \to (\R, \borel(\R))$ evaluated at $y \in \R$. Similarly, given a random vector $\X : (\Omega,\cA) \to (\R^d, \borel(\R^d))$, we set $\bF_{Y \mid \X = \x}(y) = \P_{Y \mid \X = \x}((y,\infty))$. Informally, we will also write $\bF_{Y \mid \X = \x}(y) = \P(Y > y \mid \X = \x)$. We will also use the notation $F_Y(y) = 1 - \bF_{Y}(y)$ for the cumulative distribution function of $Y$ evaluated at $y \in \R$ (and the conditional versions).
	\item $\RV_\rho$ for $\rho \in \R$ – set of regularly varying functions at infinity with tail index $\rho$, i.e.,
	\[
		f \in \RV_\rho \iff f : \R_+ \to \R_+ \text{ satisfies }  \lim_{t \to \infty} \frac{f(tx)}{f(t)} = x^{\rho}, \qquad \forall x > 0.
	\]
	A function in $\RV_0$ is called slowly varying.
	\item $f^{-1}(y) = \inf\{x \in \R: f(x) \geq y\}$ – generalized inverse of a non-decreasing function $f : \R \to \R$.
	\item i.i.d. – independent and identically distributed.
	\item $\Be(p)$ – Bernoulli distribution with parameter $p \in [0,1]$ ; $\Bin(n,p)$ – Binomial distribution with parameter $n \in \N$ and $p \in [0,1]$.
	\item $\I\{\event\}$ – indicator random variable of the event $\event \in \cA$, i.e.,
	\[
		\I \{\event\}(\omega)
		=
		\begin{cases}
			1 &\text{if } \omega \in \event \\
			0 &\text{else}
		\end{cases}
		\sim \Be(\P(\event)).
	\]
	\item $M_{Y}(\lambda) = \E \lc \exp\{\lambda Y\} \rc$ – moment generating function a random variable $Y : (\Omega,\cA) \to (\R, \borel(\R))$ evaluated at $\lambda \in \R$.
	\item $\|\x\|_\infty = \max \lacc |x^{(j)}|: j \in [d] \racc$ – $L_\infty$ norm of a vector $\x \in \R^d$.
	\item $A \triangle B = \lp A \cap (E \setminus B) \rp \bigcup \lp (E \setminus A) \cap B \rp$ – symmetric difference of two sets $A,B \subset E$.
	\item $\mu(f) = \int_{\mathbb{Z}} f(z) \diff\mu(z)$ – integral of a function $f : \mathbb{Z} \to \R$ with respect to a measure $\mu$ on a measure space $(\mathbb{Z}, \mathcal{Z})$.
\end{itemize}

\section{Auxiliary results}
\label{app:auxiliary}

In the next three results, we will investigate the behavior of $\Dot{L}_j(\bphi,z)$ for $j \in [2]$, the partial derivative of the map $L$ with respect to its $j$-th component of its bivariate parameter $\bphi = (\nu,\xi)$ evaluated at $(\bphi,z)$. It follows from the chain rule that if $\mathcal{L}$ denote the negative GP log-likelihood in the usual $\btheta = (\sigma, \xi)$ parametrization, we have
\begin{align*}
    \nabla_{\bphi} L(\bphi,z)^\tr =  
    \Bigg(
        &\Dot{L}_1(\bphi,z) = \frac{1}{\xi + 1} \Dot{\mathcal{L}}_1 \lp (\frac{\nu}{\xi + 1}, \xi), z \rp, \\
        &\Dot{L}_2(\bphi,z) = - \frac{\nu}{(\xi + 1)^2} \Dot{\mathcal{L}}_1 \lp (\frac{\nu}{\xi + 1}, \xi), z \rp + \Dot{\mathcal{L}}_2 \lp (\frac{\nu}{\xi + 1}, \xi), z \rp
    \Bigg)
\end{align*}
A small computation based on the formulas for the derivatives $\Dot{\mathcal{L}}_j$ in Appendix A of \citet{velthoen2023gradient} leads to
\[
    \Dot{L}_1(\bphi,z)
    = \frac{1}{\nu} \lacc 1 - \frac{(\xi+1)^2 z}{\nu + \xi(\xi+1)z}\racc
\]
and
\[
    \Dot{L}_2(\bphi,z)
    = \lp 1 + \frac{1}{\xi} \rp \frac{(1+2\xi)z}{\nu + \xi(\xi+1) z} - \frac{1}{\xi^2} \log \lp 1 + \frac{\xi(\xi+1)}{\nu}z \rp - \frac{1}{\xi + 1}  
\]

The following lemma corresponds to Lemma 11 in \citet{farkas2024gptree} for the new parameter.
\begin{lemma}[Sub-exponential bound]
	\label{lem:upper-fun}
	For any $\bphi \in \reparamspace$ and $y > u > 0$, we have that the quantities
	\[
		\labs \Dot{L}_j(\bphi, y-u) \rabs \leq \Phi(y) = \cst [1 + \log(1 + w y)],, \qquad j \in [2],
	\]
	where $\cst > 0$ is a constant and $w = \xi_U(\xi_U+1)/\nu_L$. 
    Furthermore, the random variable $\Phi(Y)$ is sub-exponential in the sense that there exists $\rho > 0$ such that
	\[
		M_{\Phi(Y)}(\lambda)  < \infty, \qquad
		\forall |\lambda| \leq \rho.
	\]
\end{lemma}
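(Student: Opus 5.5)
The plan has two parts. First, bound each partial derivative using only the compactness of $\reparamspace$ (Assumption \ref{ass:compact}). Second, derive the sub-exponential property from the regular variation of $Y$ (Assumption \ref{ass:tail-of-Y}), together with a bounded-support condition on $\suppX$.

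Write $z = y-u \in (0, y)$. For $\Dot{L}_1$, both terms inside the braces are nonnegative. Since $\nu \geq \nu_L>0$ and $\xi>0$,
\[
0 \leq \frac{(\xi+1)^2 z}{\nu + \xi(\xi+1) z} \leq \frac{\xi+1}{\xi} \leq 1+\frac{1}{\xi_L}.
\]
Hence $|\Dot{L}_1(\bphi,z)| \leq \nu_L^{-1}(2+1/\xi_L)$.

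For $\Dot{L}_2$ there are three terms.
\begin{itemize}
\item The first term is bounded by $(1+1/\xi_L)(1+2\xi_U)/(\xi_L(\xi_L+1))$, since $\frac{z}{\nu+\xi(\xi+1)z} \leq \frac{1}{\xi(\xi+1)}$.
\item The last term is bounded by $1$.
\item The logarithmic term needs monotonicity. The map $(\nu,\xi) \mapsto \xi(\xi+1)/\nu$ is at most $w=\xi_U(\xi_U+1)/\nu_L$, and $z<y$. Therefore $\xi^{-2}\log(1+\xi(\xi+1)z/\nu) \leq \xi_L^{-2}\log(1+wy)$.
\end{itemize}
Collecting the constants into a single $\cst$ gives $|\Dot{L}_j| \leq \cst[1+\log(1+wy)]$ for $j\in[2]$.

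There is one caveat here. Under Assumption \ref{ass:compact}, $\nu_U$ may grow with $n$, but it never enters these bounds. The bounds use only $\nu_L$, $\xi_L$ and $\xi_U$, so $\cst$ and $w$ do not depend on $n$.

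For the moment generating function, note first that for $\lambda \leq 0$ we have $M_{\Phi(Y)}(\lambda) \leq 1$, since $\Phi\geq 0$. So only $0<\lambda\leq\rho$ matters. Then
\[
\E[\ee^{\lambda\Phi(Y)}] = \ee^{\lambda\cst}\,\E[(1+wY)^{\lambda\cst}].
\]
It therefore suffices to show that $\E[Y^{s}]<\infty$ for some $s>0$. Conditionally on $\X=\x$, Assumption \ref{ass:tail-of-Y} gives $\E[Y^s\mid \X=\x]<\infty$ whenever $s<1/\xi_0(\x)$, by Potter's bounds on $\ell(\cdot\mid\x)$. To integrate over $\x$, I will assume that $\xi_0$ is bounded by some $\overline{\xi}$; this is natural because $\xi_0$ takes values in the compact $[\xi_L,\xi_U]$ on the model side. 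I also need the Potter-type bound $\ell(y\mid\x)\leq K y^{\epsilon}$ for $y\geq y_0$, with constants $K$, $y_0$ uniform in $\x$.

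The uniform Potter bound is the main obstacle. I would obtain it from Assumption \ref{ass:sv-function}. By the representation $\ell'/\ell = r(y\mid\x)/y$, for $y\geq y_0$,
\[
\log\ell(y\mid\x)-\log\ell(y_0\mid\x)=\int_{y_0}^y r(t\mid\x)\,t^{-1}\diff t \leq \epsilon\log(y/y_0).
\]
This holds once $y_0$ is chosen so that $\sup_{\x}|r(y_0\mid\x)|\leq\epsilon$, which is possible because $|r|$ is decreasing and $\sup_{\x}|r(y\mid\x)|$ tends to $0$.

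With $s<1/\overline{\xi}-\epsilon$, one then has $\E[Y^s]\leq y_0^s + \int_{y_0}^\infty s t^{s-1}\sup_{\x}\bF_{Y\mid\X=\x}(t)\,\diff t<\infty$. This holds provided $\sup_{\x}\ell(y_0\mid\x)<\infty$, which I would also assume or derive from $\bF\leq 1$ at $y_0$. Setting $\rho = s/\cst$ concludes the proof.
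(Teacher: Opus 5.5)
Your proposal is correct. The paper gives no argument of its own for this lemma; it only presents it as the analogue of Lemma 11 in \citet{farkas2024gptree} for the $(\nu,\xi)$ parametrization. Your proof is therefore the natural self-contained route. You bound $\Dot{L}_1$ and $\Dot{L}_2$ uniformly using only $\nu_L,\xi_L,\xi_U$. You then write $M_{\Phi(Y)}(\lambda)=\ee^{\lambda\cst}\,\E[(1+wY)^{\lambda\cst}]$, which reduces everything to a positive moment of $Y$.

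The extra hypotheses you add are genuinely needed, not just convenient. Assumption \ref{ass:tail-of-Y} as written allows $\xi_0:\suppX\to\R_+$ to be arbitrary and imposes no uniformity of $\ell(\cdot\mid\x)$ in $\x$. Under it alone, the sub-exponential claim can fail. A mixture over $\x$ of regularly varying tails need not have any finite positive moment if $\xi_0(\x)$ is unbounded. The same happens if the power-law behaviour starts at thresholds that grow with $\x$. So you do need the bound $\overline{\xi}=\sup_{\x}\xi_0(\x)<\infty$, which the paper assumes only implicitly through $(\nu_0,\xi_0)\in\reparamspace$. You also need the uniform Potter bound you extract from Assumption \ref{ass:sv-function}.

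One small simplification: you do not need $\sup_{\x}\ell(y_0\mid\x)<\infty$ as a separate condition. Substitute $\ell(y_0\mid\x)=\bF_{Y\mid\X=\x}(y_0)\,y_0^{1/\xi_0(\x)}\le y_0^{1/\xi_0(\x)}$ into your bound $\ell(y\mid\x)\le\ell(y_0\mid\x)(y/y_0)^{\epsilon}$. This gives directly $\bF_{Y\mid\X=\x}(y)\le (y/y_0)^{\epsilon-1/\xi_0(\x)}\le (y/y_0)^{\epsilon-1/\overline{\xi}}$ for all $y\ge y_0$. The bound is uniform in $\x$ and requires no lower bound on $\xi_0$.
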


The following lemma is an adaptation of the bounds in the proof of Lemma 10 in \citet{farkas2024gptree} for the new parametrization.

\begin{lemma}[Lipschitz cost functions with respect to the parameter]
	\label{lem:lip-param}
	Let $\bphi,\bphi' \in \reparamspace$ and $y>u>0$. For any $M>0$, we have for $j \in [2]$,
	\[
		\labs
		\Dot{L}_j(\bphi,y-u)
		-
		\Dot{L}_j(\bphi',y-u)
		\rabs \I\{\Phi(y) \leq M\}
		\leq
		\mu_j \left\| \bphi - \bphi' \right\|_\infty
	\]
	where $\mu_1 = \mu_1(M)$ and $\mu_2 = \mu_2(M)$ are defined by
	\begin{align*}
		\mu_1 &= \frac{M}{\nu_L^3} + \frac{\cst_1}{\nu_L} \\
		\mu_2 &= \frac{1}{1+\xi_L} + \frac{2M}{\xi_L^3} + \frac{\cst_2}{\nu_L\xi_L^2} \lp \frac{\xi_U(1+\xi_U)}{\nu_L} + 1 + 2\xi_U \rp + \frac{1+2\xi_U}{\xi_L^2} + \lp 1 + \frac{1}{\xi_L} \rp \cst_3
	\end{align*}
	for some constants $\cst_1,\cst_2, \cst_3 >0$ not depending on $n$.
\end{lemma}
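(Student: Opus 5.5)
The statement to prove is Lemma~\ref{lem:lip-param}: a Lipschitz bound in $\bphi$ for each partial derivative $\Dot{L}_j(\cdot, z)$, with $z = y-u$, on the event $\{\Phi(y)\le M\}$. I would use the mean value theorem in $\bphi$ over the convex box $\reparamspace$. Since $\reparamspace$ is a product of intervals, the segment between $\bphi$ and $\bphi'$ stays in $\reparamspace$. Hence
\[
\labs \Dot{L}_j(\bphi,z)-\Dot{L}_j(\bphi',z)\rabs \le \lp \sup_{\tilde\bphi\in\reparamspace} \labs \partial_\nu \Dot{L}_j(\tilde\bphi,z)\rabs + \sup_{\tilde\bphi\in\reparamspace}\labs \partial_\xi \Dot{L}_j(\tilde\bphi,z)\rabs \rp \|\bphi-\bphi'\|_\infty .
\]
The task then reduces to bounding the second derivatives of $L$ uniformly over $\reparamspace$. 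The bound should be affine in $M$, with constants depending only on $\nu_L,\xi_L,\xi_U$ and not on $\nu_U=\nu_U(n)$.

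To do this, I would write $D=\nu+\xi(\xi+1)z$ and $q=\xi(\xi+1)z/D\in[0,1)$. The two basic ratios then satisfy $(\xi+1)^2 z/D \le (1+1/\xi_L)$ and $z/D \le 1/(\xi_L(1+\xi_L))$. These ratios are bounded independently of $z$ and of $\nu_U$, and this is the point of the reparametrization.

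For $j=1$, differentiate $\Dot{L}_1=\nu^{-1}\{1-(\xi+1)^2z/D\}$. The $\nu$-derivative is $-\nu^{-2}\{\cdot\}+\nu^{-1}(\xi+1)^2 z/D^2$. The first piece is bounded by $\nu_L^{-2}$ times a constant. For the second, I use $z/D^2\le (z/D)\cdot D^{-1}\le (z/D)\nu_L^{-1}$. The $\xi$-derivative involves $2(\xi+1)z/D$ and $(\xi+1)^2z(2\xi+1)z/D^2$, both bounded by products of the ratios above divided by $\nu_L$. Where the crude bound would leave a factor of $z$, I intend to absorb it through $\log(1+wy)\le \Phi(y)/\cst\le M/\cst$, or through $wz/(1+wz)\le 1$. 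This gives the announced form $M/\nu_L^3+\cst_1/\nu_L$ up to generous constants; it is harmless to overestimate.

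For $j=2$, the three terms of $\Dot{L}_2$ are handled separately. The term $-1/(\xi+1)$ has $\xi$-derivative $(1+\xi)^{-2}\le (1+\xi_L)^{-1}$. The logarithmic term $\xi^{-2}\log(1+\xi(\xi+1)z/\nu)$ is where $M$ enters. Its $\xi$-derivative contains $2\xi^{-3}\log(\cdot)$. On the event, I bound the logarithm by $\log(1+wy)\le M$ (taking $\cst\ge1$, and using monotonicity since $\xi(\xi+1)/\nu\le w$ and $z\le y$), which gives $2M/\xi_L^3$. The remaining pieces, $\xi^{-2}(2\xi+1)z/D$ from the $\xi$-derivative and $\xi^{-2}q/\nu$ from the $\nu$-derivative, give the $\cst_2$-term and $(1+2\xi_U)/\xi_L^2$. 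The rational term $(1+1/\xi)(1+2\xi)z/D$ is differentiated by the product rule. Each factor and each derivative is bounded via the ratio bounds, yielding $(1+1/\xi_L)\cst_3$.

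The main obstacle is bookkeeping rather than any conceptual step. One must check that no term produces a positive power of $\nu_U$ or an unbounded power of $z$. The $\nu$-derivatives always carry $1/D$ or $1/\nu$, which is bounded by $1/\nu_L$. The $\xi$-derivatives of $z/D$-type ratios are $O(z^2/D^2)$, again bounded. The only genuinely unbounded quantity is the logarithm, and it is controlled by the truncation $\Phi(y)\le M$.
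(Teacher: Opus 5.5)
Your argument is correct and is essentially the route the paper has in mind. The paper gives no written proof, only a pointer to adapting the bounds of Lemma 10 in \citet{farkas2024gptree}, and that adaptation is what you carry out: the mean value theorem on the box $\reparamspace$, uniform bounds on the second derivatives of $L$ depending only on $\nu_L,\xi_L,\xi_U$, and the truncation $\Phi(y)\le M$ (with $\cst\ge 1$) to control the single unbounded piece $2\xi^{-3}\log\{1+\xi(\xi+1)z/\nu\}\le 2M/\xi_L^3$. This is exactly where the terms $1/(1+\xi_L)$ and $2M/\xi_L^3$ of $\mu_2$ come from.

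One remark on $j=1$. Your ratio bounds already give a bound of order $\nu_L^{-2}$ with no leftover factor of $z$, so $\mu_1$ need not involve $M$, and the $\nu_L^{-2}$ terms fit into $\cst_1/\nu_L$ because $\nu_L$ does not depend on $n$. Your proposed device of absorbing a factor of $z$ through $\log(1+wy)\le M/\cst$ is therefore unnecessary. It would also not work, since $\Phi(y)\le M$ only bounds $z$ exponentially in $M$.
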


\begin{lemma}[Difference with respect to the excess]
	\label{lem:lip-z}
	Let $\bphi \in \reparamspace$ and $z_1,z_2>0$. We have
	\[
		\labs
		\Dot{L}_1(\bphi,z_1)
		-
		\Dot{L}_1(\bphi,z_2)
		\rabs
		\leq 
         \frac{\lp 1 + \xi_U \rp^3 |z_1-z_2|}{(\nu_L + (1+\xi_U)\xi_L z_1) (\nu_L + (1+\xi_U)\xi_L z_2))}
	\]
	and
    \begin{align*}
    	\labs
    		\Dot{L}_2(\bphi,z_1)
    		-
    		\Dot{L}_2(\bphi,z_2)
		\rabs
		&\leq \frac{\xi_U(1+\xi_U)}{\xi_L^2 \nu_L} \frac{|z_1-z_2|}{1 + \min\{z_1,z_2\}} \\
        &\qquad + \lp 1 + \frac{1}{\xi_L} \rp \frac{(1+2\xi_U)(2+\xi_U) |z_1-z_2|}{\nu_L + \xi_L(1+\xi_L) \min\{z_1,z_2\}}.
    \end{align*}
	In particular, if $|\tfrac{z_1}{z_2}-1| \leq \eps$, we get that for some constant $\kappa > 0$ not depending on $n$,
	\[
		\left\|
			\nabla_{\bphi} \lc L(\bphi,z_1) - L(\bphi,z_2) \rc 
		\right\|_\infty
		\leq
		\kappa \, \eps. 
	\]
\end{lemma}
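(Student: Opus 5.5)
We have to prove Lemma~\ref{lem:lip-z}. The plan is to work directly with the closed forms of $\Dot{L}_1$ and $\Dot{L}_2$ given above. For each, we view the map $z \mapsto \Dot{L}_j(\bphi,z)$ as a function of one real variable, write the difference as a difference of rational (and logarithmic) terms, and bound each piece. Throughout we use only the box constraints of Assumption~\ref{ass:compact}, so that all constants depend on $\nu_L,\xi_L,\xi_U$ but not on $n$; in particular they do not depend on $\nu_U(n)$.

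For $\Dot{L}_1$, set $a=\nu$ and $b=\xi(\xi+1)$. The only $z$-dependent part is $-\frac{(\xi+1)^2}{\nu}\frac{z}{a+bz}$, and a direct computation gives
\[
\frac{z_1}{a+bz_1}-\frac{z_2}{a+bz_2}=\frac{a(z_1-z_2)}{(a+bz_1)(a+bz_2)} .
\]
Multiplying by $(\xi+1)^2/\nu$ cancels the factor $a=\nu$, leaving $(\xi+1)^2|z_1-z_2|/((\nu+bz_1)(\nu+bz_2))$. We then bound the numerator using $\xi\le\xi_U$ and the denominators from below using $\nu\ge\nu_L$ and $\xi(\xi+1)\ge\xi_L(1+\xi)$. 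This gives the stated inequality, possibly after harmlessly enlarging $(1+\xi_U)^2$ to $(1+\xi_U)^3$ to absorb the mismatch in the $(1+\xi)$ factor of the denominator.

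For $\Dot{L}_2$ there are two $z$-dependent terms. The first is the logarithmic term. By the mean value theorem,
\[
\Bigl|\log(1+wz_1)-\log(1+wz_2)\Bigr|\le \frac{w|z_1-z_2|}{1+w\min\{z_1,z_2\}}, \qquad w=\xi(\xi+1)/\nu .
\]
Multiplying by $1/\xi^2\le 1/\xi_L^2$ and comparing $1+w\min\{z_1,z_2\}$ with $1+\min\{z_1,z_2\}$ yields the first summand of the bound; here the constant is adjusted through $w\le \xi_U(1+\xi_U)/\nu_L$, together with a lower bound on $w$ when $w<1$. The second is the rational term $(1+1/\xi)(1+2\xi)\,z/(\nu+bz)$, which we handle by the same algebraic identity as for $\Dot{L}_1$. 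This produces $(1+1/\xi_L)(1+2\xi_U)\,\nu|z_1-z_2|/((\nu+bz_1)(\nu+bz_2))$. We then keep one factor $\nu+b\min\{z_1,z_2\}$ in the denominator and bound $\nu/(\nu+bz)\le 1$ for the other. The stray factor $(2+\xi_U)$ in the statement is simply slack, which is allowed since we only need an upper bound.

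For the final claim, suppose $|z_1/z_2-1|\le\eps$, so that $|z_1-z_2|\le\eps z_2$ and, for $\eps<1$, $\min\{z_1,z_2\}\ge(1-\eps)z_2$. Each bound above has the form $C|z_1-z_2|/(c+c'\min\{z_1,z_2\})$ or $C|z_1-z_2|/((c+c'z_1)(c+c'z_2))$. Hence it is at most $C\eps z_2/(c+c'(1-\eps)z_2)\le C\eps/(c'(1-\eps))$, and the product form is controlled similarly. Restricting to $\eps\le1/2$, which is the only regime used later since $a_n(\delta)\le1/2$ in Theorem~\ref{thm:stochastic}, gives a constant $\kappa$ independent of $n$. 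Taking the maximum over $j\in[2]$ gives the $\|\cdot\|_\infty$ bound.

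The main obstacle is keeping every constant free of $\nu_U(n)$. In the $\Dot{L}_1$ estimate this works only because the factor $\nu$ cancels exactly. In the logarithmic term we need care with $w$ possibly being small when $\nu$ is large. If the comparison $1+w\min\{z_1,z_2\}$ versus $1+\min\{z_1,z_2\}$ fails for large $\nu$, the fallback is to use the scale-invariant form $w|z_1-z_2|/(1+w\min\{z_1,z_2\})\le |z_1-z_2|/\min\{z_1,z_2\}$. This form is directly of order $\eps$ under the ratio condition, which is all that is needed for the final statement.
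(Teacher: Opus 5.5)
Your overall route (explicit formulas, the identity $\frac{z_1}{\nu+bz_1}-\frac{z_2}{\nu+bz_2}=\frac{\nu(z_1-z_2)}{(\nu+bz_1)(\nu+bz_2)}$ with $b=\xi(1+\xi)$, and the mean value theorem for the logarithm) is the direct computation the paper has in mind. Your handling of the rational part of $\Dot{L}_2$ and of the final $\kappa\eps$ claim is fine; your restriction to $\eps\le 1/2$ is in fact necessary, since for $\eps$ near $1$ no uniform $\kappa$ exists.

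The step that fails is the logarithmic part of $\Dot{L}_2$. You want $\frac{1}{\xi^2}\frac{w|z_1-z_2|}{1+wm}$, with $m=\min\{z_1,z_2\}$, to be bounded by the first summand $\frac{W}{\xi_L^2}\frac{|z_1-z_2|}{1+m}$, where $W=\xi_U(1+\xi_U)/\nu_L$. But $\frac{w}{1+wm}\le\frac{W}{1+m}$ for all $m>0$ forces $W\ge 1$ (let $m\to\infty$). The log increment itself also beats the first summand whenever $W<1$. Take $\xi=\xi_L$, $z_1=(1+\eta)z_2$ and let $z_2\to\infty$. The log increment tends to $\log(1+\eta)/\xi_L^2$, while the first summand tends to $W\eta/\xi_L^2$, so for small $\eta$ the former is larger. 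This happens for every $\nu$, not only large $\nu$, and in a realistic regime (e.g.\ $\nu_L=1$, $\xi_U=1/2$). A lower bound on $w$ cannot repair this, and any such bound would involve $\nu_U(n)$ anyway. Your fallback $\le|z_1-z_2|/m$ only rescues the $\kappa\eps$ corollary. So the second displayed inequality is left unproved whenever $\nu_L>\xi_U(1+\xi_U)$.

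The fix is to send the log term to the \emph{second} summand instead. Since $\frac{w}{1+wm}=\frac{\xi(1+\xi)}{\nu+\xi(1+\xi)m}$, the log increment is at most
\[
\Bigl(1+\tfrac1\xi\Bigr)\frac{|z_1-z_2|}{\nu+\xi(1+\xi)m}\le\Bigl(1+\tfrac{1}{\xi_L}\Bigr)\frac{|z_1-z_2|}{\nu_L+\xi_L(1+\xi_L)m}.
\]
This has exactly the shape of your bound on the rational term. Adding the two gives the constant $(1+\tfrac1{\xi_L})(2+2\xi_U)\le(1+\tfrac1{\xi_L})(1+2\xi_U)(2+\xi_U)$. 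So the factor $(2+\xi_U)$ you dismissed as stray slack is what makes the stated inequality hold for all $\nu_L$, and the first summand becomes superfluous.

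A smaller imprecision is in the $\Dot{L}_1$ step. If you first bound the numerator by $(1+\xi_U)^2$ and then correct each denominator by a factor up to $(1+\xi_U)/(1+\xi_L)$, you get $(1+\xi_U)^4/(1+\xi_L)^2$, which can exceed $(1+\xi_U)^3$. Instead, pair one factor $1+\xi$ with each denominator and use $\frac{1+\xi}{\nu+\xi(1+\xi)z}\le\frac{1+\xi_U}{\nu_L+\xi_L(1+\xi_U)z}$. This follows by comparing constant and linear coefficients, and it even yields $(1+\xi_U)^2$.
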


\begin{proof}
	The first part of the lemma is a direct calculation based on the formula for the derivatives of the loss functions which may be adapted from Appendix A in \citet{velthoen2023gradient}. The second part is a direct computation.
\end{proof}

\begin{lemma}[Tail bound for binomial random variables]
\label{lem:tail-bound-bin}
	Let $S_n = \sum_{i=1}^{n} Y_i$ where $\{Y_i : i\in  [n]\}$ is a collection of i.i.d. random variables distributed as $\Be(p)$ for some $p \in (0,1)$. Then, for any $\delta \in (0,1)$, there exists an event with probability at least $1-\delta$ on which,
	\[
		S_n \leq \ee np + \sqrt{\frac{6\ee}{5} np \log(1/\delta)}.
	\]
\end{lemma}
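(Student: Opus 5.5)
We prove Lemma~\ref{lem:tail-bound-bin} with the Chernoff method, using the explicit moment generating function of the binomial law. Write $\mu = np = \E[S_n]$. For any $\lambda > 0$, independence gives
\[
	M_{S_n}(\lambda) = \lp 1 + p(\ee^\lambda - 1) \rp^n \leq \exp\lacc \mu (\ee^\lambda - 1) \rc .
\]
Markov's inequality applied to $\exp\{\lambda S_n\}$ then yields, for every $t>0$,
\[
	\P(S_n \geq t) \leq \exp\lacc -\lambda t + \mu(\ee^\lambda - 1) \rc .
\]
We fix the level $t = \ee\mu + \sqrt{\tfrac{6\ee}{5}\mu\log(1/\delta)}$ and choose $\lambda$ so that the exponent is at most $-\log(1/\delta)$. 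This directly gives the event of probability at least $1-\delta$ claimed in the statement.

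The key step is to choose $\lambda$ well. Write $t = \ee\mu + s$ and expand the exponent around $\lambda = 1$, where $\mu(\ee^\lambda-1) - \lambda \ee \mu$ is stationary. This is where the leading constant $\ee$ comes from. Setting $\lambda = 1 + \eta$ with $\eta \ge 0$, the exponent equals
\[
	-\mu - s - \eta s + \ee\mu(\ee^{\eta} - 1 - \eta) .
\]
We drop the favourable terms $-\mu - s$. On $\eta \in [0, \eta_0]$ with a suitable small $\eta_0$, we use $\ee^\eta - 1 - \eta \le \tfrac{\eta^2}{2}\ee^{\eta_0}$. The problem then reduces to bounding the quadratic $-\eta s + c\,\mu\eta^2$, whose minimum is $-s^2/(4c\mu)$ at $\eta = s/(2c\mu)$.

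Requiring $s^2/(4c\mu) \ge \log(1/\delta)$ gives $s = \sqrt{4c\mu\log(1/\delta)}$. We then need the numerical constant $4c$ to be at most $6\ee/5$. In the regime $\eta = s/(2c\mu)$ too large, meaning $s \gtrsim \mu$, we instead take $\eta = \eta_0$. There the linear term $-\eta_0 s$ dominates, because $s^2 \gtrsim \mu\log(1/\delta)$ forces $s \gtrsim \log(1/\delta)$. The bound $-\mu - s$ we dropped also helps in this regime.

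The main obstacle is bookkeeping the constants so that exactly $\ee$ and $6\ee/5$ appear. The form $6/5$ suggests a Bernstein-type bound $\ee^\eta - 1 - \eta \le \tfrac{\eta^2}{2(1-\eta/3)}$, combined with the slack provided by discarding $-\mu$. If matching the precise constants through the two-regime split proves awkward, a fallback is to apply Bernstein's inequality to the centered Bernoulli sum at level $\ee\mu - \mu + s$. The surplus $(\ee-1)\mu$ absorbs the range term $\tfrac{1}{3}\cdot$(deviation), and the variance term is at most $\mu$. A short check then shows the required inequality holds with the constant $6\ee/5$.
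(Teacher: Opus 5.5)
There is a genuine gap, and part of it cannot be closed, because the statement is false when $\log(1/\delta)$ is large compared with $np$. Take $n=1$, $p=1/100$, $\delta=\ee^{-10}$. The right-hand side is $\ee/100+\sqrt{(6\ee/5)\cdot 10/100}\approx 0.60<1$, so any event on which the inequality holds lies inside $\{S_1=0\}$, which has probability $0.99<1-\ee^{-10}$.

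Your second regime is exactly where this bites, and the step you use there runs the wrong way. If $s\gtrsim\mu$ and $s^2=\tfrac{6\ee}{5}\mu\log(1/\delta)$, then $\log(1/\delta)=\tfrac{5}{6\ee}\,s\,(s/\mu)\gtrsim s$, not $s\gtrsim\log(1/\delta)$. So the available gain $\mu+(1+\eta_0)s$ is of order $\mu+\sqrt{\mu\log(1/\delta)}\ll\log(1/\delta)$ once $\log(1/\delta)\gg\mu$. Your Bernstein fallback fails for the same reason: with $x=(\ee-1)\mu+s\ll\log(1/\delta)$, the range term $\tfrac23 x\log(1/\delta)$ dominates $x^2$.

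The paper's proof carries the same hidden restriction. Its claim $\beta(y)\le\sqrt{6y/5}$ for $g(\beta)=\beta+\beta^2/(2(1+\beta/3))$ is equivalent to $g(\sqrt{6y/5})\ge y$. That holds if and only if $y\le 3$, i.e.\ $\log(1/\delta)\le 3\ee\,np$. A correct version must either assume such a condition or add a linear term, e.g.\ $S_n\le np+\sqrt{2np\log(1/\delta)}+\tfrac23\log(1/\delta)$ from Bernstein's inequality.

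Even in the regime where the statement holds, your reduction cannot produce the constant $6\ee/5$. After discarding $-\mu-s$, the best choice is $\ee^\eta=1+\beta$ with $\beta=s/(\ee\mu)$. It gives $\min_\eta\{-\eta s+\ee\mu(\ee^\eta-1-\eta)\}=-\ee\mu\{(1+\beta)\log(1+\beta)-\beta\}\ge-\ee\mu\beta^2/2=-s^2/(2\ee\mu)$. You need $-\log(1/\delta)=-5s^2/(6\ee\mu)$, so this is not enough. In your notation, $c\ge\ee/2$ forces $4c\ge 2\ee>6\ee/5$. The term you threw away, $-s=-\ee\mu\beta$, is the one that carries the bound.

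The paper's route is otherwise the same as yours: a Chernoff bound with a Poisson-type bound on the moment generating function (yours, $\exp\{\mu(\ee^\lambda-1)\}$, is even slightly sharper than the paper's $\exp\{\ee^\lambda np\}$). It is optimized at $\lambda=1+\log(1+\beta)$, which is your $\eta=\log(1+\beta)$. The difference is that the paper keeps the term: its exponent is $-\ee\,np\,(1+\beta)\log(1+\beta)$. The linear part of $(1+\beta)\log(1+\beta)\ge\beta+\beta^2/(2(1+\beta/3))$ is what makes $6/5$ work, and only for $\beta^2\le 18/5$. Keeping $-s$ (and $-\mu$, which adds a little room) repairs your first regime and gives the lemma under $\log(1/\delta)\le 3\ee\,np$. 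No choice of $\lambda$ can repair the second regime.
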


\begin{proof}
	For any $\eps>0$ and $\lambda > 0$, we observe that by Markov's inequality
	\begin{align*}
		\P(S_n > n\eps)
		&=
		\P \lp \exp(\lambda S_n) > \exp(n\lambda\eps) \rp \\
		&\leq
		\exp(-n\lambda\eps) \lp \E \lc \exp \lp \lambda Y_1 \rp \rc \rp^n \\
		&\leq \exp(-n\lambda\eps) \exp \lp \mathrm{e}^\lambda np \rp \\
		&=\exp \lacc - n \lp \lambda \eps - \mathrm{e}^\lambda p \rp \racc.
	\end{align*}
	The map
	\[
	\lambda > 0 \mapsto \lambda \eps - \mathrm{e}^\lambda p
	\]
	has a unique maximum at
	\[
	\lambda = \log \lp \frac{\eps}{p} \rp
	\]
	leading to the fact that for any $\eps >0$,
	\[
	\P(S_n > n\eps)
	\leq
	\exp \lacc - n\eps \lp \log \lp \frac{\eps}{p} \rp - 1 \rp \racc.
	\]
	The bound is only useful if $\eps > \ee p$. Therefore, we take $\beta >0$ and consider
	\[
	\eps(\beta) = (1 + \beta) \ee p.
	\]
	For any $\beta > 0$, we thus have
	\begin{equation}
		\label{eq:bound_psum}
		\P(S_n > n\eps(\beta))
		\leq
		\exp \lacc - np\ee(1+\beta)\log(1+\beta) \racc.
	\end{equation}
	It follows from Exercise 2.8 in \citet{boucheron2013concentration} that
	\[
	(1+\beta)\log(1+\beta)
	\geq
	\frac{\beta^2}{2(1+\beta/3)} + \beta = g(\beta).
	\]
	The function $\beta > 0 \mapsto g(\beta) > 0$ is invertible since it is strictly increasing and for any $y>0$, the value $\beta(y) > 0$ which is such that $g(\beta(y)) = y$ satisfies $\beta(y) \leq \sqrt{6y/5}$. Therefore, by monotonicity, we get $g(\sqrt{6y/5}) \geq y$ for any $y > 0$. As such, we get that the upper bound in \eqref{eq:bound_psum} is smaller than $\delta \in (0,1)$ if we take
	\[
	\beta \geq \beta(\delta)
	= \sqrt{\frac{6}{5\ee} \frac{1}{np} \log(1/\delta)}.
	\]
	The associated value of $\eps(\beta)$ is given by
	\[
	\eps(\delta)
	= \ee p + \sqrt{\frac{6\ee}{5} \frac{p}{n} \log(1/\delta)}.
	\]
	This means that there exists with probability at least $1-\delta$ on which
	\[
	S_n \leq \ee np + \sqrt{\frac{6\ee}{5} np \log(1/\delta)}. \qedhere
	\]
\end{proof}

The following lemma is a simple extension of Lemma 2.1 in \citet{lhaut2022uniform}.

\begin{lemma}[Conditioning trick]
	\label{lem:CT}
	Let $Z_1,\ldots,Z_n$ be an independent random sample from $P$ on an arbitrary space $\cZ$ and let $P_n$ denote the associated empirical measure on $\cZ$. Let $\cG$ be a class of real-valued functions on $\cZ$. Let $E \subset \cZ$ be such that $p = P(E) \in (0,1)$. Let $K = \sum_{i=1}^n \I \{Z_i \in E\} \sim \Bin(n,p)$ denote the random number of sample points in $E$. Then, we have for any $k \in [0:n]$,
	\[
	\lc (P_n(g\I_E))_{g \in \cG} \mid K = k \rc
	\eqlaw
	\lc \frac{k}{n} \lp P_k^Y(g) \rp_{g \in \cG} \rc,
	\]
	where $P_k^Y$ denotes the empirical measure of independent random variables $Y_1,\ldots,Y_k$ distributed as $P_E(\cdot) = P(\cdot \cap E) / p$. Furthermore, if the class $\cG$ is pointwise measurable, we also have
	\[
	\lc \sup_{g \in \cG} P_n(g\I_E)) \mid K = k \rc
	\eqlaw
	\lc \frac{k}{n} \sup_{g \in \cG}  P_k^Y(g) \rc
	\]
\end{lemma}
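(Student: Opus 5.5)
The plan is to reduce the statement to Lemma 2.1 of \citet{lhaut2022uniform} by working with the joint law of the full sample. The only change is that $P_n(g\I_E)$ is now indexed by a general class $\cG$ instead of a class of sets.

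\textbf{Step 1: conditional structure of the sample.} For a fixed $k$, condition on the set of indices $I=\{i:Z_i\in E\}$ with $|I|=k$. By independence, the points $(Z_i)_{i\in I}$ given $I$ are i.i.d.\ with law $P_E$, and they are independent of $(Z_i)_{i\notin I}$. To verify this, write the conditional probability of a product event $\prod_i\{Z_i\in A_i\}$ given $I=J$. It factorizes as $\prod_{i\in J}P(A_i\cap E)/p\cdot\prod_{i\notin J}P(A_i\setminus E)/(1-p)$.

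\textbf{Step 2: the functionals on $\{K=k\}$.} For every $g$,
\[
P_n(g\I_E)=\frac1n\sum_{i\in I}g(Z_i)=\frac kn\cdot\frac1k\sum_{i\in I}g(Z_i),
\]
which depends only on $(Z_i)_{i\in I}$.

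\textbf{Step 3: identify the conditional law.} Relabel the points $(Z_i)_{i\in I}$ as $Y_1,\dots,Y_k$ (order is irrelevant since empirical measures are permutation invariant). Then the conditional law of the process $(P_n(g\I_E))_{g\in\cG}$ given $I=J$ equals the law of $(\tfrac kn P_k^Y(g))_g$, viewed as a process: all finite-dimensional distributions, and hence the law on the product $\sigma$-field, agree. This law does not depend on $J$. Mixing over all $J$ with $|J|=k$ therefore gives the same conditional law given $K=k$. The edge case $k=0$ is trivial, since both sides are zero.

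\textbf{Step 4: the supremum.} The difficulty is measurability, because the supremum over an uncountable $\cG$ is not a function of finitely many coordinates. Here I would use pointwise measurability: there is a countable $\cG_0\subset\cG$ such that every $g\in\cG$ is a pointwise limit of elements of $\cG_0$. Dominated convergence is unnecessary, because the empirical averages are finite sums and so pass to the limit pointwise. Hence $\sup_{\cG}P_n(g\I_E)=\sup_{\cG_0}P_n(g\I_E)$, and likewise $\sup_{\cG}P_k^Y(g)=\sup_{\cG_0}P_k^Y(g)$. A countable supremum is a measurable functional of the process restricted to $\cG_0$. Equality in law of the processes then transfers to the suprema, which gives the second identity.
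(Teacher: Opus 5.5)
Your proposal is correct and follows essentially the same route as the paper. The paper only defers to the indicator-function version of the conditioning trick (Lemma 2.1 of \citet{lhaut2022uniform}, proved in \citet{lhaut2021thesis}), whose core idea is the one you use: conditioning on the index set of sample points in $E$ makes them i.i.d.\ from $P_E$, and mixing over index sets of size $k$ gives the claim. Your reduction of the supremum to a countable subclass $\cG_0$ via pointwise measurability (finite sums pass to pointwise limits) correctly supplies the measurability step that the paper leaves implicit.
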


\begin{proof}
	The proof is identical to the one in \citet{lhaut2021thesis} for the case where $\cG$ consists of indicator functions.
\end{proof}

The following result will be our main concentration bound \citep[Theorem 3.8]{mcdiarmid1998concentration}. This corresponds to an extension of the classical Bernstein's inequality to general functions of independent random variables.

\begin{theorem}
\label{thm:concentration-expectation}
	Let $X_1,\ldots,X_n$ be independent random variables taking values in a space $\mathcal{X}$. Let $f : \mathcal{X}^n \to \R$ and $Z = f(X_1,\ldots,X_n)$. For $i \in [n]$ and $x_1,\ldots,x_i \in \mathcal{X}$, let
	\[
		g_i(x_1,\ldots,x_i) =
		\E[Z \mid X_1 = x_1,\ldots,X_i = x_i] - 
		\E[Z \mid X_1 = x_1,\ldots,X_{i-1} = x_{i-1}].
	\]
	If the maximal deviation
	\[
	c = \max_{i \in [n]} \sup_{x_1,\ldots,x_i \in \mathcal{X}} g_i(x_1,\ldots,x_i)
	\]
	and the supremum of the sum of the conditional variances
	\[
	\sigma^2 =
	\sum_{i=1}^n \sup_{x_1,\ldots,x_{i-1} \in \mathcal{X}} \Var[g_i(X_1,\ldots,X_i) \mid X_1 = x_1, \ldots, X_{i-1} = x_{i-1}]
	\] 
	are finite, then for any $t>0$,
	\[
	\P(Z \geq \E[Z] + t) \leq
	\exp \lp \frac{-t^2}{2(\sigma^2 + ct/3)} \rp.
	\]
	Equivalently, given $\delta \in (0,1)$, there exists an event $\event \in \cA$ such that $\P(\event) \geq 1-\delta$ on which
	\[
		Z \leq \E[Z] + \sqrt{2\sigma^2\log(1/\delta)} + \frac{2c}{3} \log(1/\delta).
	\]
\end{theorem}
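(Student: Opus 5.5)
The plan is the standard Bernstein–Chernoff argument applied to the Doob martingale associated with $Z$. Write $\mathcal{F}_i = \sigma(X_1,\ldots,X_i)$ and $D_i = g_i(X_1,\ldots,X_i) = \E[Z\mid\mathcal{F}_i]-\E[Z\mid \mathcal{F}_{i-1}]$. By independence of the $X_i$, the conditional expectations in the definition of $g_i$ are obtained by integrating out the remaining coordinates, so $D_i$ coincides with the martingale difference. We then have
\[
Z-\E[Z]=\sum_{i=1}^n D_i, \qquad \E[D_i\mid\mathcal{F}_{i-1}]=0, \qquad D_i\le c,
\]
and $\Var[D_i\mid \mathcal{F}_{i-1}]\le v_i$, where $v_i$ is the $i$-th (deterministic) supremum appearing in $\sigma^2=\sum_i v_i$. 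We may assume $c>0$; otherwise every $D_i\le 0$ with mean zero, hence $D_i=0$ and the inequality is trivial.

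\textbf{Step 1 (one-step exponential bound).} Set $\varphi(u)=\ee^u-1-u$. The function $x\mapsto \varphi(x)/x^2$ (extended by $1/2$ at $0$) is nondecreasing on $\R$. Hence, for $\lambda>0$ and $x\le c$,
\[
\ee^{\lambda x}\le 1+\lambda x+ x^2\,\frac{\varphi(\lambda c)}{c^2}.
\]
Taking conditional expectations and using $\E[D_i\mid\mathcal{F}_{i-1}]=0$ gives
\[
\E[\ee^{\lambda D_i}\mid\mathcal{F}_{i-1}]\le 1+v_i\frac{\varphi(\lambda c)}{c^2}\le \exp\!\lp v_i\frac{\varphi(\lambda c)}{c^2}\rp.
\]

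\textbf{Step 2 (iteration).} Condition successively on $\mathcal{F}_{n-1},\mathcal{F}_{n-2},\ldots$ and apply Step 1 at each stage. This yields
\[
\E[\ee^{\lambda(Z-\E Z)}]\le \exp\!\lp \sigma^2\varphi(\lambda c)/c^2\rp.
\]
This is the step where the definition of $\sigma^2$ as a sum of \emph{suprema} of conditional variances matters: each bound $v_i$ is a constant rather than a random variable, so it can be pulled out of the expectation at every stage. I expect this to be the main point requiring care. Finiteness of $c$ and $\sigma^2$ ensures that all the exponential moments involved are finite.

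\textbf{Step 3 (Chernoff bound and optimisation).} Markov's inequality gives
\[
\P(Z-\E Z\ge t)\le \exp\!\lp -\lambda t+\sigma^2\varphi(\lambda c)/c^2\rp.
\]
For $0<u<3$ we use the elementary bound $\varphi(u)\le u^2/(2(1-u/3))$, which follows by comparing the series $\sum_{k\ge 2} u^k/k!$ with $\tfrac{u^2}{2}\sum_{k\ge0}(u/3)^k$, using $k!\ge 2\cdot 3^{k-2}$. Choosing $\lambda=t/(\sigma^2+ct/3)$, so that $\lambda c<3$, the exponent becomes exactly $-t^2/(2(\sigma^2+ct/3))$, which is the stated tail bound.

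\textbf{Step 4 (equivalent form).} Set $L=\log(1/\delta)$ and solve $t^2=2L(\sigma^2+ct/3)$, which gives
\[
t=\tfrac{cL}{3}+\sqrt{(cL/3)^2+2\sigma^2L}\le \sqrt{2\sigma^2L}+\tfrac{2c}{3}L,
\]
using $\sqrt{a+b}\le\sqrt a+\sqrt b$. Since the tail bound is decreasing in $t$, the event $\{Z<\E Z+\sqrt{2\sigma^2L}+\tfrac{2c}{3}L\}$ has probability at least $1-\delta$.
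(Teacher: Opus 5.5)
The paper does not prove this statement but cites it from McDiarmid (1998, Theorem 3.8). Your argument is the standard martingale Bernstein proof on which that result rests: the Doob martingale, the monotonicity of $\varphi(x)/x^2$, a deterministic iteration that works because the paper's $\sigma^2$ is a sum of suprema, and the choice $\lambda=t/(\sigma^2+ct/3)$. It is correct, including the algebra in Steps 3 and 4. The only loose end is the degenerate case $\sigma^2=0$, where your $\lambda$ gives $\lambda c=3$ and the bound on $\varphi$ no longer applies; it is trivial, since then every $D_i$ vanishes almost surely.
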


A situation that will be of particular interest to us is the case where
\begin{equation}
\label{eq:Z=supp}
    Z = \sup_{g \in \cG}
    \labs
        P_n(g \I_E) - P(g \I_E)
    \rabs
\end{equation}
with $E \subset \mathcal{X}$ satisfying $P(E) \leq p$ for some (small) $p \in (0,1)$ and $\sup_{g \in \cG} \|g\|_\infty \leq M$ for some $M>0$. In this case, we have
\[
    c \leq \frac{2M}{n} \qquad \text{and} \qquad \sigma^2 \leq \frac{4M^2p}{n}.
\]

Next, we will need small results on Rademacher complexities. Recall that given an i.i.d. sample $Z_1, \dots, Z_n$ on $\spaceZ$ and $\cG$ a class of functions $g : \spaceZ \to \R$, the (expected) Rademacher complexity of the class $\cG$ of size $n$ is given by
\[
    \RCData(\cG) =  \E \lc \sup_{g \in \cG} \frac{1}{n} \sum_{i=1}^n \eps_i g(Z_i) \rc
\]
where $(\eps_i : i \in [n])$ is a set of i.i.d. Rademacher random variables, i.e., $\P(\eps_i = -1) = \P(\eps_i = 1) = 0.5$. Recall also that the link between $\RCData(\cG)$ and $\E[Z]$ with $Z$ as in \eqref{eq:Z=supp} is provided by Lemma 2.3.1 in \citet{vdVW1996weak} which ensures that
\[
    \E[Z] \leq 2 \RCData(\cG \I_E).
\]

\begin{lemma}[Rademacher Complexity of a union]
	\label{lem:RC-union}
	Let $\mathcal{F}$ and $\mathcal{G}$ be two classes of real-valued functions defined on the same space $\spaceZ$.
	Then, for any i.i.d. sample $Z_1, \dots, Z_n$ on $\spaceZ$, we have
	\[
	\RCData(\mathcal{F} \cup \mathcal{G})
	\;\leq\;
	\RCData(\mathcal{F}) + \RCData(\mathcal{G}).
	\]
\end{lemma}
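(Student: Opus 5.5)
The statement is Lemma~\ref{lem:RC-union}, the subadditivity of Rademacher complexity under unions. I will prove it pointwise in the sample and the signs, then take expectations. Fix $Z_1,\dots,Z_n$ and signs $\eps_1,\dots,\eps_n$, and write $\Gamma(h) = \frac{1}{n}\sum_{i=1}^n \eps_i h(Z_i)$. Since $\mathcal{F}\cup\mathcal{G}$ is the union of the two index sets,
\[
\sup_{h \in \mathcal{F}\cup\mathcal{G}} \Gamma(h) = \max\Bigl\{ \sup_{f\in\mathcal{F}} \Gamma(f),\; \sup_{g\in\mathcal{G}} \Gamma(g) \Bigr\}.
\]

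The key step is to bound this maximum by the sum of the two suprema. That inequality $\max\{a,b\}\le a+b$ holds only when both $a,b\ge 0$, and the suprema above need not be nonnegative for a given realization. I would handle this in one of two ways.

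\begin{enumerate}
\item \emph{Nonnegativity route.} Use $\max\{a,b\} \le \max\{a,0\}+\max\{b,0\}$. This requires knowing that $\E[\sup_{\mathcal{F}}\Gamma]$ equals $\E[\max(\sup_{\mathcal{F}}\Gamma,0)]$, which is true if the class contains $0$ or is closed under negation. In the latter case, $\sup_{\mathcal{F}}\Gamma = \sup_{\mathcal{F}}|\Gamma| \ge 0$ pointwise.
\item \emph{Symmetrization route, for general classes.} Use the symmetry of $\eps \eqlaw -\eps$. Pick any $f_0\in\mathcal{F}$ and $g_0\in\mathcal{G}$. Then
\[
\max\{a,b\} \le a+b-\min\{a,b\}, \qquad \text{with } \min\{a,b\}\ge \min\{\Gamma(f_0),\Gamma(g_0)\}.
\]
Take expectations and use $\E[\Gamma(f_0)] = \E[\Gamma(g_0)] = 0$. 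The step needing care is that $\E[\min\{\Gamma(f_0),\Gamma(g_0)\}]$ may be negative. I would therefore expect to fall back on the first route, which appears to match how the lemma is used later: the classes there are of the form $\cG\I_E$ built from negation-closed hypothesis classes $\cH$, and supremum-of-absolute-value complexities are what bound $\E[Z]$ in \eqref{eq:Z=supp}.
\end{enumerate}

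Once the pointwise inequality $\sup_{\mathcal{F}\cup\mathcal{G}}\Gamma \le \sup_{\mathcal{F}}\Gamma_+ + \sup_{\mathcal{G}}\Gamma_+$ is in place, I take expectations over the sample and the Rademacher signs. Linearity of expectation then gives $\RCData(\mathcal{F}\cup\mathcal{G}) \le \RCData(\mathcal{F})+\RCData(\mathcal{G})$. Measurability of the suprema is covered by the pointwise measurability in Assumption~\ref{ass:hypothesis}.

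The main obstacle is therefore only the sign issue. I would resolve it by working with the classes as used in the paper, which are negation closed or augmented with the zero function. For such classes each supremum is nonnegative, and the inequality is immediate.
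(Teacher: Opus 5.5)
Your diagnosis is correct, and it goes further than the paper's own argument. The paper's proof consists of your first display, $\sup_{\mathcal{F}\cup\mathcal{G}}\Gamma=\max\{\sup_{\mathcal{F}}\Gamma,\sup_{\mathcal{G}}\Gamma\}$, followed by a citation of point (c) of Exercise~3.8 in \citet{mohri2018foundations}. Point (c) of that exercise, however, concerns the class of pointwise maxima $\{\max(f,g): f\in\mathcal{F},\, g\in\mathcal{G}\}$. That is a different object from the maximum of the two suprema, so the sign problem you raise is never addressed.

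It cannot be addressed, because the lemma is false as stated. For the constant classes $\mathcal{F}=\{1\}$ and $\mathcal{G}=\{-1\}$,
\[
\RCData(\mathcal{F})=\RCData(\mathcal{G})=0, \qquad \RCData(\mathcal{F}\cup\mathcal{G})=\E\bigl|\tfrac1n\textstyle\sum_{i=1}^n\eps_i\bigr|>0 .
\]
You should record this counterexample and state your added hypothesis explicitly, rather than present the sign issue as a step that merely needs care.

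Your route 1 is a correct proof of the repaired statement. If $\sup_{\mathcal{F}}\Gamma\ge 0$ and $\sup_{\mathcal{G}}\Gamma\ge 0$ for every realization (for instance both classes contain $0$, or both are negation closed), then $\max\{a,b\}\le a+b$ pointwise, and taking expectations finishes. Route 2 should not be discarded, though: completed, it gives the correct general version. Using $\max\{a,b\}=a+b-\min\{a,b\}$, $\min\{a,b\}\ge\min\{\Gamma(f_0),\Gamma(g_0)\}$ and $\E[\Gamma(f_0)+\Gamma(g_0)]=0$, you get
\[
\RCData(\mathcal{F}\cup\mathcal{G})\le\RCData(\mathcal{F})+\RCData(\mathcal{G})+\tfrac12\,\E\bigl|\Gamma(f_0)-\Gamma(g_0)\bigr| .
\]
The extra term vanishes whenever $\mathcal{F}\cap\mathcal{G}\neq\emptyset$ (take $f_0=g_0$), which includes the case where both classes contain $0$. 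Otherwise it is at most $(\|f_0\|_\infty+\|g_0\|_\infty)/(2\sqrt n)$, and the counterexample shows it cannot be dropped in general.

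One part of your justification is inaccurate. In the proof of Theorem~\ref{thm:stochastic}, the lemma is applied to $\cG^M=\cG^M_1\cup\cG^M_2$, whose elements are loss derivatives $\Dot{L}_j(\F(\x),y-u(\x;\tfrac{k_n}{n}))\I\{\Phi(y)<M\}$. Negation-closedness of $\cH$ does not survive the nonlinear maps $\Dot{L}_j$ and the clamp, and these classes do not contain the zero function. So neither of your sufficient conditions is available there.

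What makes the union step legitimate in that application is your other remark. Lemma~2.3.1 of \citet{vdVW1996weak} bounds $\E[Z]$ in \eqref{eq:Z=supp} by twice the absolute-value complexity $\E\sup_{g\in\cG}|\Gamma(g\I_E)|$. This equals the plain complexity of the negation-closed class $(\cG\cup(-\cG))\I_E$, and for that version the union bound holds by your route 1.
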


\begin{proof}
	For any realization of the Rademacher random variables $\eps_1,\ldots,\eps_n$,
	\[
	\sup_{h \in \mathcal{F} \cup \mathcal{G}} \frac{1}{n} \sum_{i=1}^n \eps_i h(Z_i)
	= \max\!\left\{
	\sup_{f \in \mathcal{F}} \frac{1}{n} \sum_{i=1}^n \eps_i f(Z_i),
	\;
	\sup_{g \in \mathcal{G}} \frac{1}{n} \sum_{i=1}^n \eps_i g(Z_i)
	\right\}.
	\]
	The conclusion follows from point (c) in Exercise~3.8 in \citet{mohri2018foundations}.
\end{proof}

\begin{lemma}[Rademacher Complexity of a linear combination]
	\label{lem:RC-lin}
	Let $\cG$ and be a class of functions $g : \cZ \to \R$ for some space $\spaceZ$. For any i.i.d. sample $Z_1, \dots, Z_n$ on $\spaceZ$, we have
	\[
	\RCData \lp \lin_T^c(\cG) \rp
	\;\leq\;
	(1+Tc) \, \RCData(\cG).
	\]
\end{lemma}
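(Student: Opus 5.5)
We need to prove $\RCData(\lin_T^c(\cG)) \le (1+Tc)\RCData(\cG)$. The plan is to fix a realization of the sample and of the Rademacher variables, bound the supremum over $\lin_T^c(\cG)$ term by term, and then take expectations. The two structural inputs are that the coefficients of an element of $\lin_T^c(\cG)$ are nonnegative with total mass at most $Tc$, and that $\cG$ is negation closed, exactly as assumed for $\cH$ in Section \ref{sec:framework}.

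Write a generic element as $f = g_0 + \sum_{t=1}^T \alpha_t g_t$ with $\alpha_t \ge 0$, $\sum_t \alpha_t \le Tc$ and $g_0,\ldots,g_T \in \cG$. By linearity of the empirical sum,
\[
\frac1n\sum_{i=1}^n \eps_i f(Z_i) = \frac1n\sum_{i=1}^n \eps_i g_0(Z_i) + \sum_{t=1}^T \alpha_t \,\frac1n\sum_{i=1}^n \eps_i g_t(Z_i).
\]
Let $S = \sup_{g\in\cG} \frac1n\sum_i \eps_i g(Z_i)$. The first term is at most $S$. Each inner average in the second term is also at most $S$, and since $\alpha_t \ge 0$ we get $\sum_t \alpha_t(\cdot) \le (\sum_t \alpha_t) \max(S,0)$.

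The one real issue is the sign of $S$: we cannot replace $\sum_t\alpha_t$ by its upper bound $Tc$ unless $S \ge 0$. This is where negation closedness enters. For any $g \in \cG$, either $g$ or $-g$ gives a nonnegative empirical average, so $S \ge 0$ for every realization. (If negation closedness were unavailable, one would instead use $\max(S,0) \le \sup_{g}|\frac1n\sum_i \eps_i g(Z_i)|$ and state the result for the absolute-value version of the complexity.)

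With $S \ge 0$, the supremum over $\lin_T^c(\cG)$ is bounded by $S + Tc\,S = (1+Tc)S$ pointwise in $\omega$. Taking expectations over the sample and the Rademacher signs then gives $\RCData(\lin_T^c(\cG)) \le (1+Tc)\RCData(\cG)$. Measurability is handled by the pointwise measurability assumed in Assumption \ref{ass:hypothesis}, or by reading the suprema as outer expectations.
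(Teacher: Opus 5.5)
Your proof is correct once negation closedness of $\cG$ is assumed, and your handling of the sign of $S$ is where it goes beyond the paper.

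\textbf{Comparison of routes.} The paper rewrites $g_0+\sum_{t=1}^T\alpha_t g_t$ as $(1+\sum_t\alpha_t)$ times an element of $\conv(\cG)$. It then bounds the prefactor by $1+Tc$ and invokes $\RCData(\conv(\cG))=\RCData(\cG)$ (Lemma 7.4 in \citet{mohri2018foundations}). You instead bound each summand directly by $S=\sup_{g\in\cG}\frac1n\sum_i\eps_i g(Z_i)$. That is the vertex argument underlying the convex-hull lemma, so your route is more elementary and self-contained.

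\textbf{The sign issue is real.} The paper's step of replacing $1+\sum_t\alpha_t$ by $1+Tc$ inside the supremum also requires the supremum over $\conv(\cG)$ to be nonnegative, and the paper never addresses this. In fact the lemma as literally stated, for an arbitrary class, is false. Take $\cG=\{g\}$ with $g\equiv 1$, set $X=\frac1n\sum_i\eps_i$, and let $T\ge 1$, $c>0$. Then $\RCData(\cG)=\E[X]=0$. On the other hand, $\lin_T^c(\cG)=\{(1+a)g: 0\le a\le Tc\}$, so $\RCData(\lin_T^c(\cG))=\E[X+Tc\max(X,0)]=Tc\,\E[\max(X,0)]>0$.

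So a hypothesis guaranteeing $S\ge 0$ pointwise must be added to the statement, such as negation closedness or merely $0\in\cG$. It holds for the negation-closed class $\cH$, which is the only class to which the lemma is applied in the proof of Theorem~\ref{thm:stochastic}. Your version is therefore the one the paper actually needs.
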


\begin{proof}
	We have
	\begin{align*}
		\E \lc \sup_{g \in \lin_T^c(\cG)} \frac{1}{n} \sum_{i=1}^n \varepsilon_i g(Z_i) \rc
		&= \E \lc \sup_{g_0, g_1, \ldots, g_T \in \cG} \sup_{\bm{\alpha} \in \R^T_+: \|\bm{\alpha}\|_1 \leq Tc} \frac{1}{n} \sum_{i=1}^n \varepsilon_i  \lp g_0 + \sum_{t=1}^T \alpha_t g_t(Z_i) \rp \rc \\
		&\leq (1+Tc) \; \E \lc \sup_{g_0,g_1, \ldots, g_T \in \cG} \sup_{\bm{\alpha} \in \R^{T+1}_+: \|\bm{\alpha}\|_1 = 1} \sum_{i=1}^n \varepsilon_i \sum_{t=0}^T \alpha_t g_t(Z_i) \rc \\
		&= (1+Tc) \; \RCData \lp \conv(\cG) \rp = (1+Tc) \; \RCData(\cG),
	\end{align*}
	where $\conv(\cG)$ denotes the convex hull of the class $\cG$ and where the last equality follows from Lemma 7.4 in \citet{mohri2018foundations}.
\end{proof}

\section{Proofs}
\label{app:proofs}

\begin{proof}[Proof of Theorem \ref{thm:stochastic}]
	We start by showing that under Assumption \ref{ass:fisher}, for any $\eps > 0$,
	\begin{equation}
		\label{eq:L1-bound}
		\P \lc \int_{\suppX} \| \hF_n(x) - \F^u(x) \|_\infty \diff P_X(x) > \eps \rc
		\leq
		\P \lc \sup_{\F \in \outputspace} D_n(\F) > \ist \eps \rc,  
	\end{equation}
	where for $\F \in \outputspace$,
	\[
		D_n(\F) = \left\| \nabla_{\bphi} \hR_n^{\hu }(\F) -\nabla_{\bphi} R^{u}(\F)\right\|_\infty.
	\]
	To this end, introduce the functional
	\[
		s > 0 \mapsto \mathcal{J}_G(s) = \nabla_{\bphi} R^u(\F^u + s \G)
	\]
	associated with $\G \in \lin(\cH)$. By definition, we now $\mathcal{J}_G(0) = (0,0)^\tr$. By Taylor's theorem around $s=0$, we have for $s>0$,
	\[
		\mathcal{J}_G(s) = s \frac{n}{k_n} \E \lc  \nabla_{\bphi}^2 L \lp \widetilde{\F}(\X), Y - u(\X ; \tfrac{k_n}{n}) \rp^\tr \G(\X) \I \lacc  Y > u(\X; \tfrac{k_n}{n}) \racc \rc,
	\]
	where $\widetilde{\F}(\X) \in \reparamspace$.
	For $s=1$ and $\G = \hF_n - \F^u \in \lin(\cH)$, we get by Assumption \ref{ass:fisher},
	\[
		\nabla_{\bphi} R^u(\hF_n) \geq \mathscr{I} \int_{\suppX} \| \hF_n(\x) - \F^u(\x) \|_\infty \, \diff P_{\X}(\x),
	\]
	from which \eqref{eq:L1-bound} follows.
	
	We split the analysis of the process $\{D_n(\F): \F \in \outputspace\}$ in two parts. By the triangle inequality,
	\begin{equation}
		\label{eq:split_risk}
		\sup_{\F \in \outputspace} D_n(\F)
		\leq  
		\sup_{\F \in \outputspace} D_{n,1}(\F)
		+ \sup_{\F \in \outputspace} D_{n,2}(\F),
	\end{equation}
	where for $\F \in \outputspace$,
	\[
		D_{n,1}(\F) = \left\| \nabla_{\bphi} \hR_n^{\hu}(\F) - \nabla_{\bphi} \hR_n^{u}(\F) \right\|_\infty 
	\]
	and
	\[
		D_{n,2}(\F) = \left\| \nabla_{\bphi} \hR_n^{u}(\F) -\nabla_{\bphi} R^{u}(\F) \right\|_\infty. 
	\]
	The first process is concerned with the stochastic error originating from the estimation of the intermediate quantile $u(\cdot ; \tfrac{k_n}{n})$ while the second process corresponds to the error coming from empirical estimation of the pre-asymptotic risk $R^u$.
	We deal with the two terms on the right-hand side of \eqref{eq:split_risk} below.
	
	\paragraph*{Analysis of the process $\boldsymbol{D_{n,1}}$.}
	
	Fix $M = M(n) > 0$. We consider separately the quantities
	\begin{multline}
		\label{eq:first-lower-term}
		 \underline{D_{n,1}} (\F)
		= 
		\Bigg\| \frac{1}{k_n}
		\sum_{i=1}^n \nabla_{\bphi} \Big[  L \lp \F(\X_i), Y_i - \hu(\X_i;\tfrac{k_n}{n}) \rp \I\{Y_i > \hu(\X_i;\tfrac{k_n}{n})\} \\
		-  L \lp \F(\X_i), Y_i - u(\X_i;\tfrac{k_n}{n}) \rp \I\{Y_i > u(\X_i;\tfrac{k_n}{n}\} 
		\Big] \I \lacc \Phi(Y_i) \leq M \racc
		\Bigg\|_\infty
	\end{multline}
	and
	\begin{multline}
		\label{eq:first-upper-term}
		\overline{D_{n,1}} (\F)
		= 
		\Bigg\| \frac{1}{k_n}
		\sum_{i=1}^n \nabla_{\bphi} \Big[  L \lp \F(\X_i), Y_i - \hu(\X_i;\tfrac{k_n}{n}) \rp \I\{Y_i > \hu(\X_i;\tfrac{k_n}{n})\} \\
		-  L \lp \F(\X_i), Y_i - u(\X_i;\tfrac{k_n}{n}) \rp \I\{Y_i > u(\X_i;\tfrac{k_n}{n}\} 
		\Big] \I \lacc \Phi(Y_i) > M \racc
		\Bigg\|_\infty
	\end{multline}
	
	To deal with the sup associated with \eqref{eq:first-lower-term}, we introduce the sets
	\begin{equation}
	\label{eq:tail-set}
		E_n^u
		=
		\lacc
			(\x,y) \in \suppX \times \R_+: \; y > u(\x;\tfrac{k_n}{n})
		\racc
	\end{equation}
	and
	\[
		E_n^{\hu}
		=
		\lacc
		(\x,y) \in \suppX \times \R_+: \; y > \hu(\x;\tfrac{k_n}{n})
		\racc.
	\]
	The triangle inequality implies that for $\F \in \outputspace$,
	\begin{align}
		\lefteqn{\underline{D_{n,1}} (\F)} \nonumber \\
		&\leq \frac{1}{k_n} \sum_{i=1}^{n}  \left\| \nabla_{\bphi}L \lp \F(\X_i), Y_i - u(\X_i;\tfrac{k_n}{n}) \rp \right\|_\infty \I \lacc (\X_i,Y_i) \in E_n^{\hu} \triangle  E_n^u \racc  \I \lacc \Phi(Y_i) \leq M \racc \label{eq:lower-symdif-term} \\
		&\ + \frac{1}{k_n} \sum_{i=1}^{n}  \left\| \nabla_{\bphi} \lc L \lp \F(\X_i), Y_i - \hu(\X_i;\tfrac{k_n}{n}) \rp - L \lp \F(\X_i), Y_i - u(\X_i;\tfrac{k_n}{n}) \rp \rc \right\|_\infty \cdot \nonumber \\ 
        &\ \hskip0.7\textwidth \I\{(\X_i,Y_i) \in E_n^u\} \label{eq:lower-diff-u-term}
	\end{align}
	We bound both terms below with high probability.
	
	Consider \eqref{eq:lower-symdif-term}.
	Fix $\delta \in (0,1)$. By Assumption \ref{ass:quantile-estimation}, there exists an event $\event_1 \in \cA$ such that $\P(\event_1) \geq 1-\delta$ on which for any $(\x,y) \in \suppX \times \R_+$,
	\[
		\I \lp (\x,y) \in E_n^{\hu} \triangle  E_n^u \rp
		\leq \I \lp (\x,y) \in E_n^{u,\delta} \rp,
	\]
	where
	\[
		E_n^{u,\delta}
		=
		\lacc
		(\x,y) \in \suppX \times \R_+:  u(\x;\tfrac{k_n}{n}) \{1-a_n(\delta)\} < y < u(\x;\tfrac{k_n}{n}) \{1+a_n(\delta)\}
		\racc.
	\]
	Since $\sum_{i=1}^{n} \I \lacc (\X_i,Y_i) \in E_n^{u,\delta} \racc \sim \Bin(n,p_{n,\delta})$ where $p_{n,\delta} = P_{(\X,Y)}(E_n^{u,\delta})$. By Lemma \ref{lem:tail-bound-bin}, there exists $\event_2 \in \cA$ with $\P(\event_2) \geq 1-\delta$ on which
	\[
		\sum_{i=1}^{n} \I \lacc (\X_i,Y_i) \in E_n^{u,\delta} \racc \leq \ee np_{n,\delta} + \sqrt{\frac{6\ee}{5} np_{n,\delta} \log(1/\delta)}.
	\]
	By Lemma \ref{lem:upper-fun}, on $\event_1 \cap \event_2$,
	\[
		\eqref{eq:lower-symdif-term} \; \leq \;  \frac{M}{k_n} \lp \ee n p_{n,\delta}  + \sqrt{\frac{6\ee}{5} np_{n,\delta} \log(1/\delta)} \rp.
	\]
	It remains to compute 
    \[
    	p_{n,\delta}
		= \int_{\suppX} 
        \Big[
            \bF_{Y \mid \X = \x} \lp u(\x;\tfrac{k_n}{n})(1-a_n(\delta)) \rp
                -
            \bF_{Y \mid \X = \x} \lp u(\x;\tfrac{k_n}{n})(1 + a_n(\delta)) \rp
        \Big] 
        \diff P_{\X}(\x) 
    \]
    For $s > 0$, we will use the notation
    \[
        g_{n,\x}(s) = \bF_{Y \mid \X = \x}  \lp u(\x;\tfrac{k_n}{n}) \cdot s \rp.
    \]
    Obviously, $g(1) = k_n/n$. Furthermore, it follows from Assumptions \ref{ass:tail-of-Y} and \ref{ass:sv-function} that for $s > 0$,
    \begin{align*}
        g_{n,\x}'(s)
        &= - f_{Y|\X=\x} \lp u(\x;\tfrac{k_n}{n}) \cdot s \rp \cdot u(\x;\tfrac{k_n}{n}) \\
        &= - s^{-1} g_{n,\x}(s) \lp \frac{1}{\xi_0(\x)} - r(u(\x;\tfrac{k_n}{n}) \cdot s \mid \x) \rp.
    \end{align*}
    It follows from Taylor's theorem that for some $s_n(\delta) \in [1-a_n(\delta), 1+a_n(\delta)]$,
    \begin{align*}
        \lefteqn{g_{n,\x}(1-a_n(\delta)) - g_{n,\x}(1+a_n(\delta))} \\
        &= - 2 a_n(\delta) g_{n,\x}' \lp s_n(\delta) \rp \\
        &= \frac{2}{s_n(\delta)} a_n(\delta) g_{n,\x} \lp s_n(\delta) \rp \lp \frac{1}{\xi_0(\x)} - r(u(\x;\tfrac{k_n}{n}) \cdot s_n(\delta) \mid \x) \rp \\
        &\leq 4 a_n(\delta) g_{n,\x} \lp s_n(\delta) \rp \lp \frac{1}{\xi_L} + \labs r(u(\x;\tfrac{k_n}{n}) / 2 \mid \x) \rabs \rp \\
        &\leq 8 a_n(\delta) \frac{k_n}{n} \lp 1 + \xi_L^{-1} \rp
    \end{align*}
    where the last inequality is based on the assumptions in the statement of the theorem. Integrating the bound with respect to $P_{\X}$ leads to
    \[
        p_{n,\delta} \leq 8 \lp 1 + \xi_L^{-1} \rp a_n(\delta) \frac{k_n}{n},
    \]
    so that for any $\F \in \outputspace$,
    \[
        \eqref{eq:lower-symdif-term} \; \leq \;  M \lp 8 \ee  \lp 1 + \xi_L^{-1} \rp a_n(\delta)  + \sqrt{\frac{6\ee}{5 k_n} 8  \lp 1 + \xi_L^{-1} \rp a_n(\delta) \log(1/\delta)} \rp.
    \]
	
	Now consider \eqref{eq:lower-diff-u-term}. An application of Lemma \ref{lem:lip-z} leads to the following bound on $\event_1$,
	\[
		\eqref{eq:lower-diff-u-term} 
        \; \leq \; 
        \kappa  a_n(\delta) \frac{1}{k_n} \sum_{i=1}^n \I\{(\X_i,Y_i) \in E_n^u\}.
	\]
	As $\sum_{i=1}^n \I\{(\X_i,Y_i) \in E_n^u\}  \sim \Bin(n,k_n/n)$, it follows from Lemma \ref{lem:tail-bound-bin} that there exists an event $\event_3 \in \cA$ with $\P(\event_3) \geq 1-\delta$ on which
	\[
		\sum_{i=1}^n \I\{(\X_i,Y_i) \in E_n^u\} \leq \ee k_n + \sqrt{\frac{6\ee}{5} k_n \log(1/\delta)}.
	\]
	We get that on $\event_1 \cap \event_3$,
	\[
		\eqref{eq:lower-diff-u-term} 
		\; \leq \; 
		\kappa  a_n(\delta) 
		\lacc \ee + \sqrt{\frac{6\ee}{5k_n} \log(1/\delta)} \racc.
	\]
	Concluding the analysis of \eqref{eq:first-lower-term}, we get that on $\cap_{j=1}^3 \event_j$,
	\begin{align}
		\lefteqn{\sup_{\F \in \outputspace} \underline{D_{n,1}} (\F)} \nonumber \\
		&\leq 
		a_n(\delta) 
		\Bigg\{
		M 8 \ee  \lp 1 + \xi_L^{-1} \rp + \kappa(\reparamspace) \lc \ee + \sqrt{\frac{6\ee}{5k_n} \log(1/\delta)} \rc
		\Bigg\} \nonumber \\
		&\qquad + M \sqrt{\frac{6\ee}{5 k_n} 8  \lp 1 + \xi_L^{-1} \rp a_n(\delta) \log(1/\delta)}	\label{eq:first-lower-final}
	\end{align}
	
	We now turn to the sup associated with \eqref{eq:first-upper-term}. Similarly to what has been done previously, we may rely on Lemmas \ref{lem:upper-fun} and \ref{lem:lip-z} to show that on $\event_1$,
	\begin{align}
		\sup_{\F \in \outputspace} \overline{D_{n,1}} (\F)
		&\leq \frac{1}{k_n} \sum_{i=1}^{n}  \Phi(Y_i) \I \lacc  \Phi(Y_i) > M \racc \label{eq:upper-symdif-term} \\
		&\ + \kappa  a_n(\delta) \frac{1}{k_n} \sum_{i=1}^{n} \I \lacc \Phi(Y_i) > M \racc \label{eq:upper-diff-u-term}.
	\end{align}
	
	For \eqref{eq:upper-symdif-term}, we rely on Lemma \ref{lem:upper-fun}.
	Let $m_\rho = M_{\Phi(Y)}(\rho)$. Applying Markov's inequality, we have that for any $\eps > 0$,
	\begin{equation*}
		\P \lc \sum_{i=1}^{n}  \Phi(Y_i) \I  \lacc  \Phi(Y_i) > M \racc > n\eps \rc \leq \frac{\lp \E \lc \exp  \lacc (\rho/2) \Phi(Y) \I \lacc  \Phi(Y_i) > M \racc \racc \rc \rp^n}{\exp(n\eps\rho/2)}.
	\end{equation*}
	By Cauchy--Schwarz inequality
	\begin{align*}
		\lefteqn{\E \lc \exp  \lacc (\rho/2) \Phi(Y) \I \lacc  \Phi(Y_i) > M \racc \racc \rc} \\
		&= 1 + \E \lc \exp  \lacc (\rho/2) \Phi(Y) \racc \I \lacc  \Phi(Y_i) > M \racc \rc \\
		&\leq 1 + \sqrt{m_\rho \cdot \P \lc \Phi(Y_i) > M  \rc } \\
		&\leq 1 + \frac{m_\rho}{n}
	\end{align*}
	with the choice
	\begin{equation}
	\label{eq:M-choice}
		M = M(n) = \frac{2}{\rho} \log(n) = \frac{2}{\rho \alpha} \log(k_n).
	\end{equation}
	Since for any $u>0$, we have $1+u \leq \exp(u)$, we get for $\eps > 0$,
	\[
		\P \lc \sum_{i=1}^{n}  \Phi(Y_i) \I  \lacc  \Phi(Y_i) > M \racc > n\eps \rc
		 \leq
		 \exp \lacc m_\rho - n\eps\rho/2 \racc.
	\]
	This means that there exists $\event_4 \in \cA$ with $\P(\event_4) \geq 1-\delta$ on which
	\[
		\eqref{eq:upper-symdif-term} \; \leq \;
		\frac{2}{\rho k_n} \lp \log(1/\delta) + m_\rho \rp.
	\]
	
	For \eqref{eq:upper-diff-u-term}, since with our choice of $M$, we have $\sum_{i=1}^{n} \I \lacc \Phi(Y_i) > M \racc \sim \Bin(n, p_M)$ where $p_M \leq m_\rho/n^2$. We may apply Lemma \ref{lem:tail-bound-bin} to obtain that there exists an event $\event_5 \in \cA$ with $\P(\event_5) \geq 1-\delta$ on which
	\[
		\sum_{i=1}^{n} \I \lacc \Phi(Y_i) > M \racc \leq \frac{\ee m_\rho}{n}  + \sqrt{\frac{6\ee m_\rho}{5n} \log(1/\delta)}
	\]
	and thus on $\event_5$,
	\[
		\eqref{eq:upper-diff-u-term} \; \leq \;
		\kappa  \frac{a_n(\delta)}{k_n} \lp \frac{\ee m_\rho}{n}  + \sqrt{\frac{6\ee m_\rho}{5n} \log(1/\delta)} \rp
	\]
	Concluding the analysis of \eqref{eq:first-upper-term}, we get that on $\event_1 \cap \event_4 \cap \event_5$,
	\begin{multline}
	\label{eq:first-upper-final}
		\sup_{\F \in \outputspace} \overline{D_{n,1}} (\F)
		\leq
		\frac{1}{k_n}
		\Bigg\{
			\frac{2}{\rho} \lp \log(1/\delta) + m_\rho \rp  \\
			+
			\kappa   a_n(\delta) \lp \frac{\ee m_\rho}{n}  + \sqrt{\frac{6\ee m_\rho}{5n} \log(1/\delta)} \rp
		\Bigg\}.
	\end{multline}
	
	\paragraph*{Analysis of the process $\boldsymbol{D_{n,2}}$.}
	We introduce the class of functions
	\begin{multline*}
		\cG
		= 
		\Bigg\{
		g_{j,\F}: (\x,y) \in \suppX \times \R_+ \mapsto
		g_{j,\F}(\x,y) \\ = \Dot{L}_j (\F(\x),y-u(\x;k_n/n)) \I(y > u(\x;k_n/n)),
		\F \in \outputspace, j \in [2]
		\Bigg\}
	\end{multline*}
	Let $\widehat{P}_{n, (\X,Y)}$ be the empirical measure on the sample $\data$ and recall $P_{(\X,Y)}$ the true underlying measure of $(\X,Y)$ on $\suppX \times \R_+$. It is easy to see that
	\begin{equation}
		\label{eq:Dn2-supG}
		\sup_{F \in \outputspace} D_{n,2}(\F)
		\leq \;
		\frac{n}{k_n}
		\sup_{g \in \cG} 
		\left|
		\Pdata(g) - \Ptrue(g)
		\right|.
	\end{equation}
	Here also we split the analysis in two parts depending on whether the random variable $\Phi(Y)$ is small or not, i.e., we will consider separately the random quantities
	\begin{equation}
	\label{eq:sup-G-lower}
		\underline{Z} = \sup_{g \in \cG} 
		\left|
		\frac{1}{n} \sum_{i=1}^n g(\X_i,Y_i) \I \lacc \Phi(Y_i) \leq M \racc
		-
		\int_{\suppX \times \R_+} g(\x,y) \I \lp \Phi(y) \leq M \rp \diff \Ptrue(\x,y)
		\right|
	\end{equation}
	and
	\begin{equation}
	\label{eq:sup-G-upper}
        \overline{Z} =
		\sup_{g \in \cG} 
		\left|
		\frac{1}{n} \sum_{i=1}^n g(\X_i,Y_i) \I \lacc \Phi(Y_i) > M \racc
		-
		\int_{\suppX \times \R_+} g(\x,y) \I \lp \Phi(y) > M \rp \diff \Ptrue(\x,y)
		\right|
	\end{equation}
	where $M = M(n) = (2/\rho\alpha) \log(k_n)$ as in \eqref{eq:M-choice}.
	
	For \eqref{eq:sup-G-lower}, we start by applying Theorem \ref{thm:concentration-expectation} with $c \leq 2M/n$ and $\sigma^2 \leq 4M^2 k_n/n^2$ which ensures that there exists an event $\event_6$ with $\P(\event_6) \geq 1-\delta$ on which
	\[
		\underline{Z} 
        \; \leq \;
        \E \lc \underline{Z} \rc +
        \frac{4 \log k_n}{\rho\alpha n} \lp \sqrt{k_n\log(1/\delta)} + \frac{2}{3} \log(1/\delta) \rp.
	\]
	By a slight abuse of notation, if $E \subset \suppX \times \R_+$, let us write
	\[
		\RCData(\cG \I_E) = \E \lc \sup_{g \in \cG} \frac{1}{n} \sum_{i=1}^n \eps_i g(\X_i,Y_i) \I \lacc (\X_i,Y_i) \in E \racc \rc.
	\]
	Also, for a class  $\cH$ of vector valued functions $\h = (h_1,h_2) : \spaceZ \mapsto \R_+^2$, define the Rademacher complexity as
	\[
		\RCData(\cH)
		= \E \lc \max_{j = 1,2} \sup_{\h \in \cH} \frac{1}{n} \sum_{i=1}^n \varepsilon_i h_j(Z_i) \rc,
	\]
    By Lemma 2.3.1 in \citet{vdVW1996weak} which ensures that
    \[
        \E[\underline{Z}] \leq 2 \RCData(\cG^M \I_{E_n^u}),
    \]
	where we recall $E_n^u$ in \eqref{eq:tail-set} which satisfies $\Ptrue(E_n^u) \leq k_n/n$ and $\cG^M = \cG^M_1 \cup \cG^M_2$ with
	\begin{multline*}
		\cG^M_j
		= 
		\Bigg\{
			g_{j,\F}: (\x,y) \in \suppX \times \R_+ \mapsto
			g_{j,\F}(\x,y) \\ = \Dot{L}_j (\F(\x),y-u(\x;k_n/n)) \I(\Phi(y) < M),
			\F \in \outputspace
		\Bigg\}
	\end{multline*}
	for $j \in [2]$. The conditioning trick in Lemma \ref{lem:CT} implies that if $K = \sum_{i=1}^n \I \lacc (\X_i,Y_i) \in E_n^u \racc \sim \Bin(n,k_n/n)$ denotes the number of points in $E_n^u$, we have
	\begin{align*}
		\RCData(\cG^M \I_{E_n^u})
		&= \E \lc \sup_{g \in \cG^M} \frac{1}{n} \sum_{i=1}^n \eps_i g(\X_i,Y_i) \I \lacc (\X_i,Y_i) \in E_n^u \racc \rc \\
		&= \E_K \lc \E \lc \sup_{g \in \cG^M} \frac{1}{n} \sum_{i=1}^n \eps_i g(\X_i,Y_i) \I \lacc (\X_i,Y_i) \in E_n^u \racc \Big| K \rc \rc \\
		&= \E \lc \frac{K}{n}  \sup_{g \in \cG^M} \frac{1}{K} \sum_{i=1}^n \eps_i g (\widetilde{\X}_i,\widetilde{Y}_i) \rc \\
		&= \sum_{k=0}^{n} \frac{k}{n} \widetilde{\mathfrak{R}}_k(\cG^M)  p_K(k),
	\end{align*}
	where the random pair $(\widetilde{\X},\widetilde{Y})$ is distributed as $P_{(\X,Y), E_n^u}$ (using the same notation as in the statement of Lemma \ref{lem:CT}) and where $p_K(k) = \P(K=k)$ for $k \in [0:n]$. By Lemma \ref{lem:RC-union}, we have
	\[
		\widetilde{\mathfrak{R}}_k(\cG^M)
		\leq
		\widetilde{\mathfrak{R}}_k(\cG_1^M) + \widetilde{\mathfrak{R}}_k(\cG_2^M)
	\]
	The functions $g_j \in \cG^M_j$ are in one-to-one correspondence with functions $\F \in \outputspace$ and, by Lemma \ref{lem:lip-param}, we have that the maps
	\[
		\bphi \in \reparamspace \mapsto \Dot{L}_j(\bphi, y-u) \I \lp \Phi(y) \leq M \rp
	\]
	are $\mu_j$-Lipschitz with $\mu_2$ depending on $n$. We get from Talagrand contraction principle \citep[Lemma 5.7]{mohri2018foundations} that
	\[
		\widetilde{\mathfrak{R}}_k(\cG_j^M) \leq \mu_j \, \widetilde{\mathfrak{R}}_k(\outputspace).
	\]
	As the action of clamping to $\reparamspace$ is a $1$-Lipschitz operation, we have
	\[
		\widetilde{\mathfrak{R}}_k(\outputspace) = \widetilde{\mathfrak{R}}_k(\lin_T^c(\cH))
	\]
	It directly follows from Lemma \ref{lem:RC-lin} that
	\[
		\widetilde{\mathfrak{R}}_k(\lin_T^c(\cH)) \leq (1+Tc) \widetilde{\mathfrak{R}}_k(\cH).
	\]
	Since each function $\h \in \cH$ is of the form $\h = (h^{(1)}, h^{(2)})$ for some $h^{(j)} \in \cH_j$, we get from point (c) in Exercise 3.8 of \citet{mohri2018foundations} that
	\[
		 \widetilde{\mathfrak{R}}_k(\cH) \leq  \widetilde{\mathfrak{R}}_k(\cH_1) +  \widetilde{\mathfrak{R}}_k(\cH_2).
	\]
	Under Assumption \ref{ass:hypothesis}, we have for $j \in [2]$, by Proposition 1 in \cite{einmahl2005uniform},
	\[
		 \widetilde{\mathfrak{R}}_k(\cH_j)
		 \leq
		 \dst \sqrt{\frac{2 \VC_j \|H_j\|_{L_2(\suppX,P_{\X})} \log \lp \kst \VC_j (16\ee)^{\VC_j} \rp}{k}},
	\]
	for some absolute constants $\dst > 0$.
	Combining the above results we get the following on the Rademacher complexity 
	\begin{align*}
		\lefteqn{\RCData(\cG^M \I_{E_n^u})
		= \sum_{k=0}^{n} \frac{k}{n} \widetilde{\mathfrak{R}}_k(\cG^M)  p_K(k)} \\
		&\leq  \frac{(\mu_1 + \mu_2) (1 + Tc) \dst}{n}  \sum_{k=0}^{n} 
		\Bigg( 
			\sqrt{2 k \VC_1 \|H_1\|_{L_2(\suppX,P_{\X})} \log \lp \kst \VC_1 (16\ee)^{\VC_1} \rp} \\
			&\qquad + \sqrt{2 k \VC_2 \|H_2\|_{L_2(\suppX,P_{\X})} \log \lp \kst \VC_2 (16\ee)^{\VC_2} \rp}
		\Bigg) p_K(k) \\
		&\leq \frac{(\mu_1 + \mu_2) (1 + Tc) \dst}{n} \sqrt{2k_n}
		\Bigg( 
			\sqrt{\VC_1 \|H_1\|_{L_2(\suppX,P_{\X})} \log \lp \kst \VC_1 (16\ee)^{\VC_1} \rp} \\
			&\qquad + \sqrt{\VC_2 \|H_2\|_{L_2(\suppX,P_{\X})} \log \lp \kst \VC_2 (16\ee)^{\VC_2} \rp}
		\Bigg),
	\end{align*}
	where the last inequality follows from Jensen's inequality. We finally get that on $\event_6$,
	\begin{multline*}
		\underline{Z} \; \leq \;
		2\frac{\sqrt{k_n}}{n}
		\Bigg( 
		(\mu_1 + \mu_2) (1 + Tc) \sqrt{2} \dst 
		\Big( 
		\sqrt{\VC_1 \|H_1\|_{L_2(\suppX,P_{\X})} \log \lp \kst \VC_1 (16\ee)^{\VC_1} \rp} \\
		+ \sqrt{\VC_2 \|H_2\|_{L_2(\suppX,P_{\X})} \log \lp \kst \VC_2 (16\ee)^{\VC_2} \rp}
		\Big)
		+ \frac{2\log k_n}{\rho\alpha} \sqrt{\log(1/\delta)}
		\Bigg) + \frac{8\log k_n}{3\rho\alpha n} \log(1/\delta)
	\end{multline*}
	
	We now deal with $\overline{Z}$ in \eqref{eq:sup-G-upper}. Trivially, we get the bound
	\[
		\overline{Z}
		\; \leq \;
		\frac{1}{n} \sum_{i=1}^n \Phi(Y_i) \I \lacc \Phi(Y_i) > M \racc + \E \lc \Phi(Y) \I \lacc \Phi(Y_i) > M \racc \I \lacc (\X,Y) \in E_n^u \racc \rc.  
	\]
	Recall that on $\event_4$, we have
	\[
		\frac{1}{n} \sum_{i=1}^n \Phi(Y_i) \I \lacc \Phi(Y_i) > M \racc
		\leq
		\frac{2}{\rho n} \lp \log(1/\delta) + m_\rho \rp.
	\]
	Furthermore, by Cauchy--Scwharz inequality and the fact that $\Ptrue(E_n^u) = k_n /n$, $\P(\Phi(Y) > M) \leq m_\rho/n^2$ and for any $j \in \N_0$,
	\[
		\E \lc \Phi(Y)^j \rc \leq \frac{j! m_\rho}{\rho^j},
	\]
	we can bound the expectation
	\begin{align*}
		 \E \lc \Phi(Y) \I \lacc \Phi(Y_i) > M \racc \I \lacc (\X,Y) \in E_n^u \racc \rc
		 &\leq \sqrt{\frac{k_n}{n} \E \lc  \Phi(Y)^2 \I \lacc \Phi(Y_i) > M \racc \rc} \\
		 &\leq \frac{\sqrt{k_n}}{n} m_\rho^{1/4} \E \lc \Phi(Y)^4 \rc^{1/4} \\
		 &\leq \frac{3 \sqrt{m_\rho}}{\rho} \frac{\sqrt{k_n}}{n},
	\end{align*}
	where in the last inequality we use the bound $(24)^{1/4} \leq 3$.
	Summarizing, we get the following inequality on $\event_4$,
	\begin{align*}
		\overline{Z}
		\; &\leq \;
		\frac{2}{\rho n} \lp \log(1/\delta) + m_\rho \rp + \frac{3 \sqrt{m_\rho}}{\rho} \frac{\sqrt{k_n}}{n} \\
		&\leq \frac{3}{\rho n} \lp \sqrt{k_n m_\rho} +  \log(1/\delta) + m_\rho \rp.
	\end{align*}
	
	Combining, the bounds on \eqref{eq:sup-G-lower} and \eqref{eq:sup-G-upper} with \eqref{eq:Dn2-supG} leads to our final bound on the process $D_{n,2}$ which holds on the event $\event_4 \cap \event_6$.
	
	\paragraph*{Conclusion of the proof.}  Combining the bounds on $D_{n,1}$ and $D_{n,2}$, we get that on $\cap_{j=1}^6 \event_j$ which satisfies $\P \lp \cap_{j=1}^6 \event_j \rp \geq 1 - 6\delta$,
	\[
		\int_{\suppX} \| \hF_n(x) - \F^u(x) \|_\infty \diff P_X(x)
		\lesssim
		\mathscr{I} \lacc B_{n,1}(\delta) + B_{n,1}(\delta) \racc,
	\]
	where
	\[
		B_{n,1}(\delta)
		= 
        \log k_n
        \lp
        \sqrt{ \frac{a_n(\delta)}{k_n} \log(1/\delta)}
            +
        a_n(\delta)
        \rp,
	\]
	and
	\[
		B_{n,2}(\delta)
		= \frac{\log k_n}{\sqrt{k_n}} \lc (1+Tc)
        + \sqrt {\log(1/\delta)} \rc.
	\]
	Taking $\delta/6$ instead of $\delta$ concludes the proof.
\end{proof}

\begin{proof}[Proof of Theorem \ref{thm:bias}]
	Using a similar argument to the one in the beginning of the proof of Theorem \ref{thm:stochastic}, we may show that under Assumption \ref{ass:fisher} for any $\eps > 0$,
	\[
		\P \lc 	\int_{\suppX} \| \hF_n(x) - \F^u(x) \|_\infty \diff P_X(x) > \eps \rc
		\leq 
		\P \lc \| \nabla_{\bphi} R^{u}(\F_*) \|_\infty > \mathscr{I} \eps \rc. 
	\]
	As such, we analyze the behavior of
	\[
		\left\| \nabla_{\bphi} \lp R^u(\F) - R(\F) \rp \right\|
	\]
	for fixed $\F \in \outputspace$. To achieve this, we need to bound
	\[
		\labs 
			\frac{n}{k_n} \E_{\X Y} \lc \Dot{L}_j \lp \F(\X), Y - u(\X;\tfrac{k_n}{n}) \rp \I\{Y > u(\X;\tfrac{k_n}{n}) \} \rc 
			-
			\E_{\X Z} \lc \Dot{L}_j \lp \F(\X), Z \rp \rc
		\rabs
	\]
	for $j \in [2]$, where $(\X,Y) \sim \Ptrue$ and where $Z \mid \X = \x \sim \operatorname{GP}(\btheta_0(\x))$. The latter quantity is bounded by
    \begin{equation}
        \label{eq:bias-process}
        \int_{\suppX}
		\labs
			\E_{Y|\X=\x} \lc \Dot{L}_j \lp \F(\x), Y - u(\x ; \tfrac{k_n}{n}) \rp \mid Y > u(\x;\tfrac{k_n}{n}) \rc
			-
			\E_{Z|\X=\x} \lc \Dot{L}_j \lp \F(\x), Z \rp \rc
		\rabs
		\diff P_{\X}(\x).
    \end{equation}
	Now to continue further, we consider separately the cases $j=1$ and $j=2$.
	
	\paragraph*{Analysis of the term $\boldsymbol{j=1}$.}
	Recall that for $\bphi = (\nu,\xi) \in \reparamspace$ and $z > 0$,
	\[
        \Dot{L}_1(\bphi,z)
        = \frac{1}{\nu} \lacc 1 - \frac{(\xi+1)^2 z}{\nu + \xi(\xi+1)z}\racc
    \]
	Hence if $\F(\x) = (\nu(\x),\xi(\x))$, we get the following bound on \eqref{eq:bias-process} for $j=1$,
	\begin{multline*}
		\int_{\suppX}
		\lp \frac{\lp \xi(\x) + 1 \rp^2}{\nu(\x)} \rp
		\Bigg|
		\E_{Y|\X=\x} \lc \frac{Y - u(\x;\tfrac{k_n}{n})}{\nu(\x)+\xi(\x)(\xi(\x) + 1)(Y - u(\x;\tfrac{k_n}{n})} \mid Y > u(\x;\tfrac{k_n}{n}) \rc \\
		-
		\E_{Z|\X=\x} \lc \frac{Z}{\nu(\x)+\xi(\x)(\xi(\x) + 1) Z} \rc
		\Bigg|
		\diff P_{\X}(\x).
	\end{multline*}
	Using the formula
	\begin{equation}
		\label{eq:formula-E}
			E[Y] = \int_0^\infty \overline{F}_Y(t) \; \diff t
	\end{equation}
	which holds for any positive valued real random variables $Y$ and the fact that
	\[
		\frac{z}{a+bz} > t \iff z > \frac{ta}{1-bt}
	\]
	for $a,b,t,z>0$ such that $t<1/b$ and never otherwise, we get that the previous expression equals
	\begin{multline*}
		\int_{\suppX}
		\lp \frac{ \lp \xi(\x) + 1 \rp^2}{\nu(\x)} \rp
		\int_{t=0}^{\frac{1}{\xi(\x)(\xi(\x) + 1)}}
		\Bigg|
		\frac{\overline{F}_{Y|\X=\x} \lp u(\x;\tfrac{k_n}{n}) + \frac{t\nu(\x)}{1-t\xi(\x)(\xi(\x) + 1} \rp}{\overline{F}_{Y|\X=\x} \lp u(\x;\tfrac{k_n}{n}) \rp}
		- \\
		\overline{F}_{Z|\X=\x} \lp \frac{t\nu(\x)}{1-t\xi(\x)(\xi(\x)+1)} \rp
		\Bigg|
		\; \diff t \,
		\diff P_{\X}(\x).
	\end{multline*}
	
	Let us introduce the following notation for $(\x,t) \in \suppX \times \R_+$ such that $t < \tfrac{1}{\xi(\x)(\xi(\x) + 1)}$
	\[
		s(t,\x) = \frac{t\nu(\x)}{1-t\xi(\x)(\xi(\x)+1)} > 0.
	\]
	Then, by Theorem \ref{thm:P-B-dH}, since we may take
	\[
		\sigma_0(u(\x),\x)
		=
		u(\x) \xi_0(\x),
	\]
    we get that our quantity of interest is bounded by
	\begin{equation*}
		\int_{\suppX}
		\lp \frac{ \lp \xi(\x) + 1 \rp^2}{\nu(\x)} \rp
		\int_{t=0}^{\frac{1}{\xi(\x)(\xi(\x) + 1)}}
		\lp \frac{u(\x;\tfrac{k_n}{n})}{s(t,\x) +u(\x;\tfrac{k_n}{n})} \rp^{1/\xi_0(\x)}
		\Bigg|
			\frac{\ell(u(\x;\tfrac{k_n}{n}) + s(t,\x) \mid \x)}{\ell(u(\x;\tfrac{k_n}{n}) \mid \x)}
			-
			1
		\Bigg|
		\; \diff t \,
		\diff P_{\X}(\x).
	\end{equation*}
	which is bounded by
	\[
		\frac{(1+\xi_U)^2}{\nu_L} 
		\int_{\suppX}
		\int_{t=0}^{\frac{1}{\xi(\x)(\xi(\x) + 1)}}
		\Bigg|
			\frac{\ell \Big( u(\x;\tfrac{k_n}{n}) \bigg( 1 + \frac{s(t,\x)}{u(\x;\tfrac{k_n}{n})} \bigg) \mid \x \Big)}{\ell(u(\x;\tfrac{k_n}{n}) \mid \x)}
			-
			1
		\Bigg|
		\; \diff t \,
		\diff P_{\X}(\x).
	\]
	For $(\x,t) \in \suppX \times \R_+$ such that $t < \tfrac{1}{\xi(\x)(\xi(\x) + 1)}$, we have thanks to Assumption \ref{ass:second-order-RV}
	\begin{multline*}
		\Bigg|
				\frac{\ell \Big( u(\x;\tfrac{k_n}{n}) \bigg( 1 + \frac{s(t,\x)}{u(\x;\tfrac{k_n}{n})} \bigg) \mid \x \Big)}{\ell(u(\x;\tfrac{k_n}{n}) \mid \x)}
				-
				1
		\Bigg|
		\leq
        C \Psi(u(\x;\tfrac{k_n}{n}))
		\frac{1}{\rho(\x)} \lacc 1 - \lp 1 + \frac{s(t,\x)}{u(\x;\tfrac{k_n}{n})} \rp^{\rho(\x)} \racc
		\\ \; + \;
		\operatorname{o} \lp \Psi(u(\x;\tfrac{k_n}{n})) \rp
	\end{multline*}
	as $n \to \infty$. As the map
	\[
		(\x,t) \in \suppX \times \R_+
		\mapsto
        \frac{1}{\rho(\x)} \lacc 1 - \lp 1 + \frac{s(t,\x)}{u(\x;\tfrac{k_n}{n})} \rp^{\rho(\x)} \racc
	\]
	is integrable on $\{(\x,t): t < \tfrac{1}{\xi(\x)(\xi(\x) + 1)}\}$, we get that the bias associated with the first derivative is of the order
	\[
		\operatorname{O} \lp \sup_{\x \in \suppX} \Psi(u(\x;\tfrac{k_n}{n})) \rp, \qquad n \to \infty.
	\]
	
	\paragraph*{Analysis of the term $\boldsymbol{j=2}$.}
	Recall that for $\bphi = (\nu,\xi) \in \reparamspace$ and $z > 0$,
	\[
		\Dot{L}_2(\bphi,z)
        = \lp 1 + \frac{1}{\xi} \rp \frac{(1+2\xi)z}{\nu + \xi(\xi+1) z} - \frac{1}{\xi^2} \log \lp 1 + \frac{\xi(\xi+1)}{\nu}z \rp - \frac{1}{\xi + 1}.
	\]
	Hence if $\F(\x) = (\nu(\x),\xi(\x))$, we get the following bound on \eqref{eq:bias-process} for $j=2$,
	\begin{align*}
		&\int_{\suppX} 
		\lp 1 + \frac{1}{\xi(\x)} \rp \lp 1 + 2\xi(\x) \rp 
		\Bigg|
		\E_{Y|\X=\x} \lc \frac{Y - u(\x;\tfrac{k_n}{n})}{\nu(\x)+\xi(\x)(1+\xi(\x))(Y - u(\x;\tfrac{k_n}{n})} \mid Y > u(\x;\tfrac{k_n}{n}) \rc \\
		&\hspace{7cm} - 
		\E_{Z|\X=\x} \lc \frac{Z}{\nu(\x)+\xi(\x)(1+\xi(\x)) Z} \rc
		\Bigg| \diff P_{\X}(\x) \\
		&\qquad + \int_{\suppX}  \frac{1}{\xi(\x)^2}
		\Bigg|
		\E_{Y|\X=\x} \lc \log \lp 1 + \xi(\x)(1+\xi(\x)) \frac{Y- u(\x;\tfrac{k_n}{n})}{\nu(\x)} \rp  \mid Y > u(\x;\tfrac{k_n}{n})   \rc \\
		&\hspace{5cm} - 
		\E_{Z|\X=\x} \lc \log \lp 1 + \xi(\x)(1+\xi(\x)) \frac{Z}{\nu(\x)} \rp  \rc
		\Bigg| \diff P_{\X}(\x) .
	\end{align*}
	The first term is nearly identical to the one appearing for $j=1$ and is of the order
	\[
		\operatorname{O} \lp \sup_{\x \in \suppX} \Psi(u(\x;\tfrac{k_n}{n})) \rp, \qquad n \to \infty
	\]
	under Assumption \ref{ass:second-order-RV}. For the second term, we compute using \eqref{eq:formula-E} that it equals
	\begin{multline*}
		\int_{\suppX}
		 \frac{1}{\xi(\x)^2}
		\int_{t=0}^{1/\xi(\x)}
		\Bigg|
			\frac{\overline{F}_{Y|\X=\x} \lp u(\x;\tfrac{k_n}{n}) + \frac{\nu(\x)}{\xi(\x)(1+\xi(\x)} (\ee^t - 1) \rp}{\overline{F}_{Y|\X=\x} \lp u(\x;\tfrac{k_n}{n}) \rp}
			\\ -
			\overline{F}_{Z|\X=\x} \lp \frac{\nu(\x)}{\xi(\x)(1+\xi(\x)}(\ee^t - 1) \rp
		\Bigg|
		\; \diff t \,
		\diff P_{\X}(\x).
	\end{multline*}
	Set
	\[
		r(t,\x) = \frac{\nu(\x)}{\xi(\x)(1+\xi(\x)} (\ee^t - 1), \qquad \forall (t,\x) \in \R_+ \times \suppX.
	\]
	The previous integral may be bounded using similar arguments as in the case $j=1$ by
	\[
		\frac{1}{\xi_L^2} 
		\int_{\suppX} 	\int_{t=0}^{\infty}
		\Bigg|
		\frac{\ell \Big( u(\x;\tfrac{k_n}{n}) \bigg( 1 + \frac{r(t,\x)}{u(\x;\tfrac{k_n}{n})} \bigg) \mid \x \Big)}{\ell(u(\x;\tfrac{k_n}{n}) \mid \x)}
		-
		1
		\Bigg|
		\; \diff t \,
		\diff P_{\X}(\x).
	\]
	Using Assumption \ref{ass:second-order-RV}, we get
	\begin{multline*}
		\Bigg|
		\frac{\ell \Big( u(\x;\tfrac{k_n}{n}) \bigg( 1 + \frac{s(t,\x)}{u(\x;\tfrac{k_n}{n})} \bigg) \mid \x \Big)}{\ell(u(\x;\tfrac{k_n}{n}) \mid \x)}
		-
		1
		\Bigg| 
		\leq
        C \Psi(u(\x;\tfrac{k_n}{n}))
		\frac{1}{\rho(\x)} \lacc 1 - \lp 1 + \frac{r(t,\x)}{u(\x;\tfrac{k_n}{n})} \rp^{\rho(\x)} \racc \\
		\; + \;
		\operatorname{o} \lp \Psi(u(\x;\tfrac{k_n}{n})) \rp
	\end{multline*}
	as $n \to \infty$. Similarly as for the previous case, we get that the bias associated with the second derivative is of the order
	\[
		\operatorname{O} \lp \sup_{\x \in \suppX} \Psi(u(\x;\tfrac{k_n}{n})) \rp, \qquad n \to \infty.
	\]
	This concludes the proof.
\end{proof}

\section{Additional figures}
\label{app:figures}

\begin{figure}[h]
    \centering
    \includegraphics[width=0.3\linewidth]{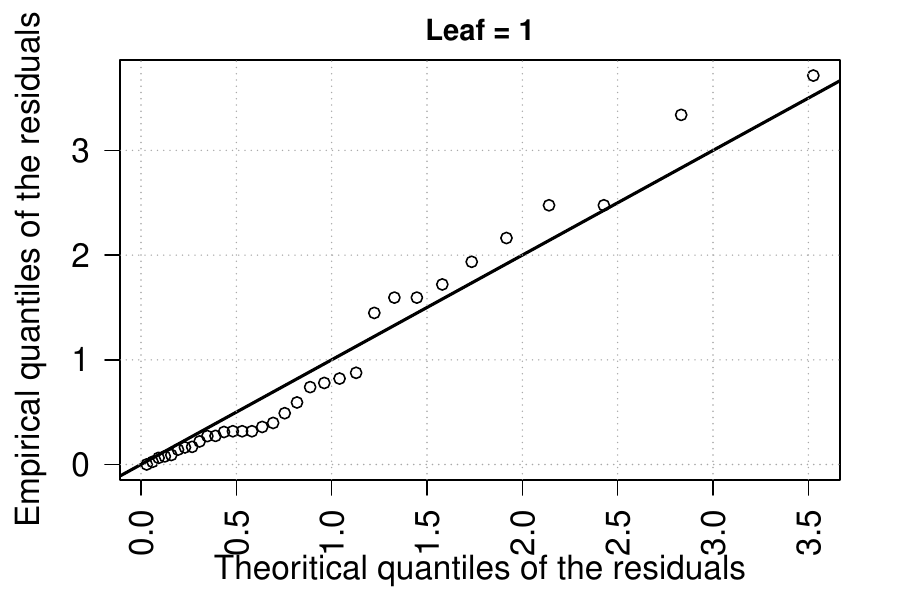}
    \includegraphics[width=0.3\linewidth]{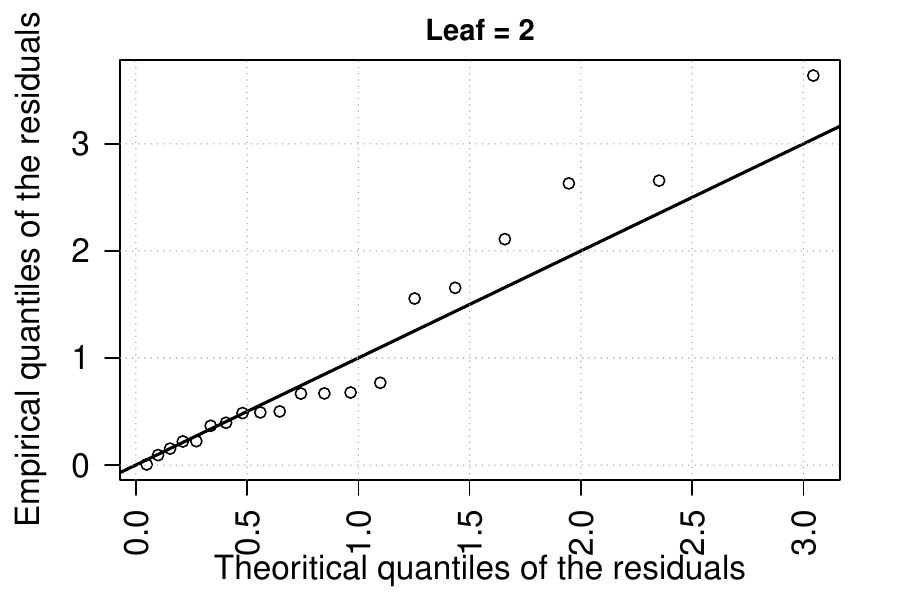}
    \includegraphics[width=0.3\linewidth]{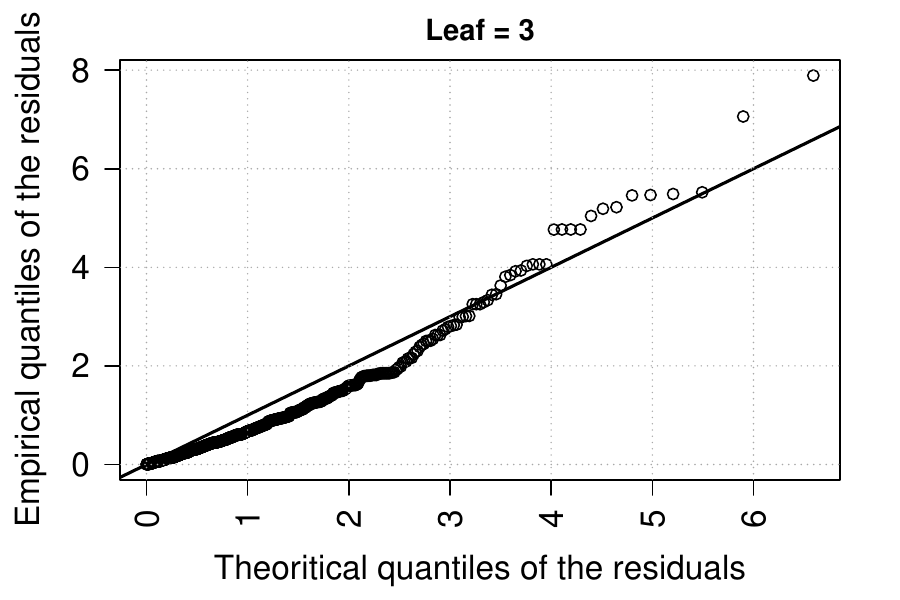}
    \includegraphics[width=0.3\linewidth]{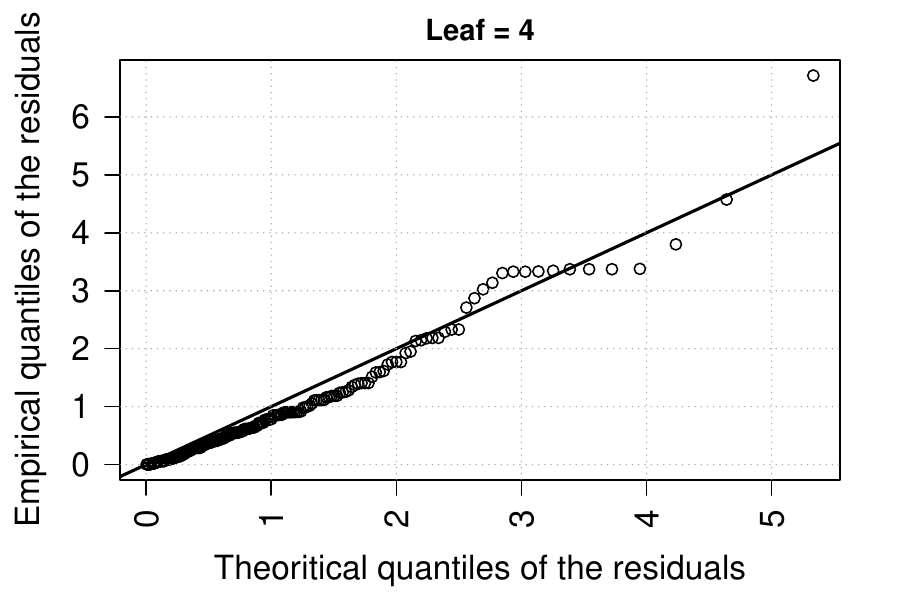}
    \includegraphics[width=0.3\linewidth]{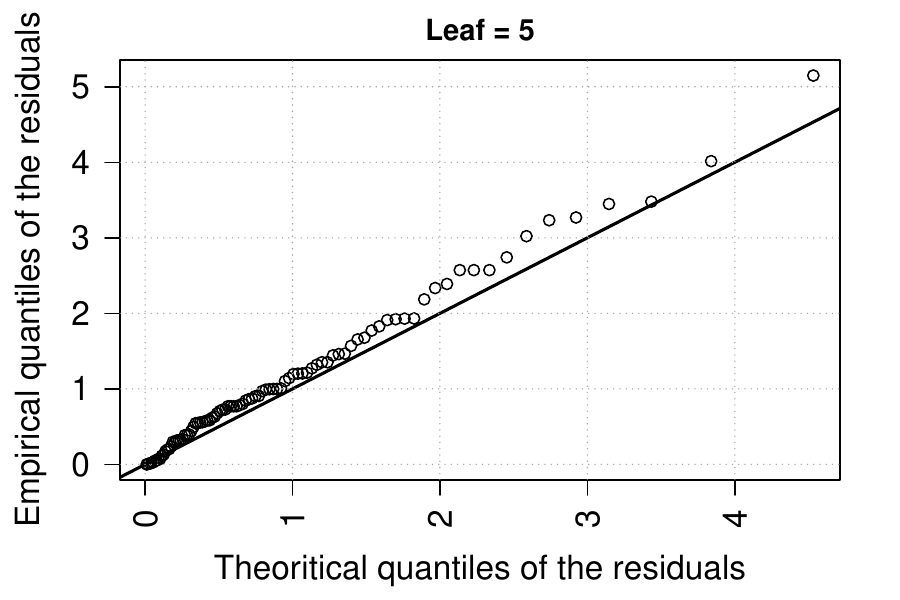}
    \includegraphics[width=0.3\linewidth]{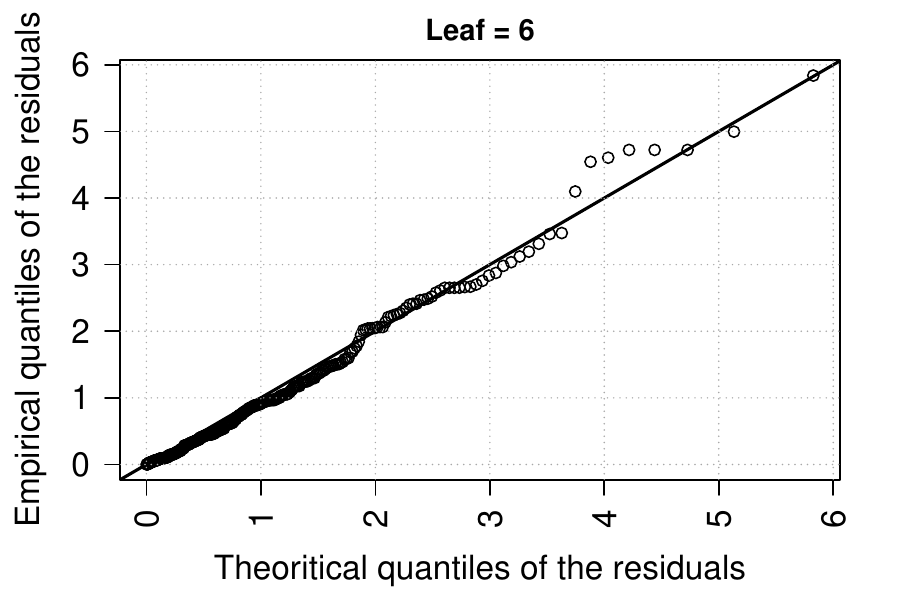}
    \caption{Quantile-quantile plot of the GP distribution fit in each leaf provided by the GPtree algorithm in order to choose a locally constant threshold in Section \ref{subsec:application}.}
    \label{fig:qq-leafs}
\end{figure}

\end{document}